\documentclass{article}

\usepackage[utf8]{inputenc}
\usepackage[T1]{fontenc}
\usepackage{microtype}

\usepackage{amsmath}
\usepackage{amssymb}
\usepackage{amsfonts}
\usepackage{mathtools}
\usepackage{amsthm}
\usepackage{nicefrac}

\usepackage{natbib}

\usepackage{hyperref}
\usepackage[all]{hypcap}
\usepackage{xcolor}
\usepackage{colortbl}
\usepackage{rotating}
\definecolor{teal}{HTML}{008080}
\hypersetup{
  colorlinks,
  citecolor=teal,
  linkcolor=red!80!black,
  urlcolor=blue
}

\usepackage[capitalize,noabbrev,nameinlink]{cleveref}

\usepackage{algorithm}
\usepackage{algpseudocode}

\usepackage{graphicx}
\usepackage{subcaption}
\usepackage{wrapfig}
\usepackage{float}
\usepackage{pdfpages}
\usepackage{booktabs}
\usepackage{pdflscape}

\usepackage{verbatim}
\usepackage{comment}
\usepackage{xspace}

\usepackage[acronym,nowarn]{glossaries}
\glsdisablehyper
\makeglossaries

\usepackage{tikz,siunitx}
\usetikzlibrary{shapes.geometric,shapes.symbols,arrows.meta,positioning,decorations.pathreplacing,calc,fit,backgrounds,patterns}

\usepackage{tcolorbox}
\definecolor{mylightgray}{gray}{0.95}
\definecolor{boxcolor}{HTML}{faf9f5}
\newtcolorbox{mybox}{colback=mylightgray,colframe=mylightgray,top=0.8pt,bottom=0.8pt,right=1.8pt,left=1.8pt}

\usepackage[textsize=tiny]{todonotes}

\usepackage{needspace}

\newacronym{AU}{AU}{Aleatoric Uncertainty}
\newacronym{AUC}{AUC}{area under the curve}
\newacronym{BDL}{BDL}{Bayesian Deep Learning}
\newacronym[firstplural=Bayesian neural networks]{BNN}{BNN}{Bayesian neural network}
\newacronym{CNN}{CNN}{convolutional neural network}
\newacronym{DE}{DE}{Deep Ensembles}
\newacronym[firstplural=Deep Gaussian Processes]{DGP}{DGP}{Deep Gaussian Process}
\newacronym[firstplural=deep neural networks]{DNN}{DNN}{deep neural network}
\newacronym{ECE}{ECE}{expected calibration error}
\newacronym{ELBO}{ELBO}{evidence lower bound}
\newacronym{ELL}{ELL}{expected log likelihood}
\newacronym{ESS}{ESS}{Effective Sample Size}
\newacronym{EU}{EU}{Epistemic Uncertainty}
\newacronym{GLM}{GLM}{Generalized linear model}
\newacronym{GP}{GP}{Gaussian process}
\newacronym{GGN}{GGN}{Generalized Gauss-Newton}
\newacronym{HMC}{HMC}{Hamiltonian Monte Carlo}
\newacronym{ICL}{ICL}{In-Context Learning}
\newacronym{IVON}{IVON}{Improved Variational Online Newton}
\newacronym{KL}{KL}{Kullback-Leibler divergence}
\newacronym{LLA}{LLA}{Linearized Laplace Approximation}
\newacronym{LLC}{LLC}{Local Learning Coefficient}
\newacronym{LL}{LL}{log-likelihood}
\newacronym{LLM}{LLM}{Large Language Model}
\newacronym{MAP}{MAP}{Maximum a posteriori}
\newacronym{MC}{MC}{Monte Carlo}
\newacronym{MCD}{MCD}{Monte Carlo Dropout}
\newacronym{MCMC}{MCMC}{Markov chain Monte Carlo}
\newacronym{MI}{MI}{Mutual Information}
\newacronym{MLP}{MLP}{multilayer perceptron}
\newacronym{MNLL}{MNLL}{mean negative loglikelihood}
\newacronym{NLL}{NLL}{negative loglikelihood}
\newacronym{NMLL}{NMLL}{Negative Marginal Log Likelihood}
\newacronym{NN}{NN}{neural network}
\newacronym{NTK}{NTK}{Neural Tangent Kernel}
\newacronym{OOD}{OOD}{out-of-distribution}
\newacronym{PSD}{PSD}{Positive Semi-Definite}
\newacronym{RLCT}{RLCT}{Real Log Canonical Threshold}
\newacronym{RELU}{ReLU}{rectified linear unit}
\newacronym{RMSE}{RMSE}{root mean square error}
\newacronym{SAM}{SAM}{Sharpness Aware Minimization}
\newacronym{SGLD}{SGLD}{Stochastic Gradient Langevin Dynamics}
\newacronym{SGHMC}{SGHMC}{Stochastic Gradient Hamiltonian Monte Carlo}
\newacronym{SGD}{SGD}{Stochastic Gradient Descent}
\newacronym{SLT}{SLT}{Singular Learning Theory}
\newacronym{TU}{TU}{Total Uncertainty}
\newacronym{UQ}{UQ}{Uncertainty Quantification}
\newacronym{VI}{VI}{Variational Inference}
\newacronym{WD}{WD}{Wasserstein distance}
\newacronym{O-information}{$\Omega$-info}{O-information }

\usepackage{math_commands}

\newcommand{\model}{\textsc{Alice}\xspace}

\DeclareMathOperator{\MI}{I}                 %
\newcommand{\Ex}{\mathbb{E}}                 %
\newcommand{\cU}{\mathcal{U}}                %
\DeclareMathAlphabet{\mathbbold}{U}{bbold}{m}{n}
\newcommand{\indone}{\mathbbold{1}}
\newcommand{\indzero}{\mathbbold{0}}
\newcommand{\Norm}[1]{\left\lVert#1\right\rVert}
\newcommand{\vel}{v}                         %
\newcommand{\ctx}{C}                         %
\newcommand{\netpar}{\theta}                 %
\newcommand{\vmodel}{\vel_{\netpar}}         %
\newcommand{\headout}{m_{\netpar}}           %

\newcommand{\Normald}[2]{\mathcal{N}\!\left(#1,\,#2\right)}

\theoremstyle{plain}
\newtheorem{theorem}{Theorem}
\newtheorem{lemma}{Lemma}
\newtheorem{proposition}{Proposition}

\theoremstyle{definition}

\theoremstyle{remark}
\newtheorem{remark}{Remark}

\usepackage{techreport}

\title{\textsc{Alice}: In-context, Zero-shot, Mutual Information Estimation}

\author[*]{Giulio Franzese}
\author[*]{Simone Rossi}
\author[*]{Pietro Michiardi}

\affiliation[]{EURECOM, Sophia Antipolis, France}

\contribution[*]{Equal contribution}

\abstract{Estimating mutual information (MI) from samples is a central objective in a variety of scientific fields. Modern neural estimators are accurate in the large-data regime, but they fall short when data is scarce, and each must be fit anew for every distribution under study. Current estimators are moreover tied to specific data types. These constraints limit their adoption in many applications where per-distribution training is impractical and sample sizes are small.
We present \model{}, a foundation model that removes per-distribution training, while achieving competitive estimation accuracy. Trained exclusively on a broad family of synthetic distributions, \model{} acts as an in-context estimator of rectified-flow velocity fields: conditioned on samples of an unseen distribution, it estimates that distribution's velocity field without any explicit training. MI is then obtained through a fixed identity that integrates the squared difference between the joint and conditional fields.
We validate \model{} on a standard, challenging benchmark and apply it in three domains, biology, genetics, and neuroscience, whose data the model has never seen.
For the first time, we show that a single model closes the gap with neural estimators trained separately for each distribution, while natively supporting different data dimensionality and sample cardinality, enabling zero-shot \acrshort{MI} analysis across scientific domains.
}

\date{\today}
\correspondence{\{giulio.franzese, simone.rossi, pietro.michiardi\} at eurecom.fr}
\code{\raisebox{-0.15em}{\includegraphics[height=1em]{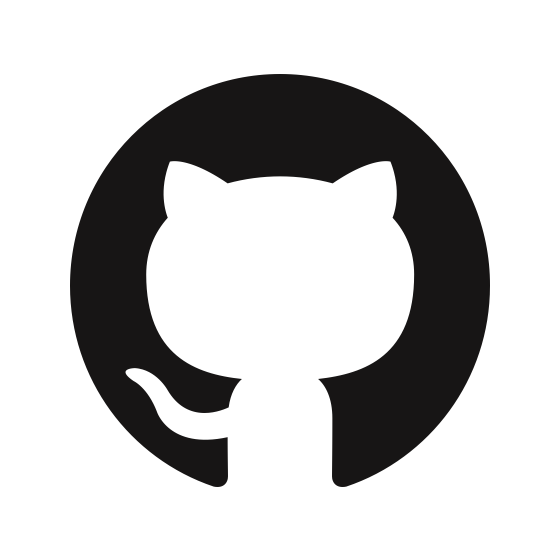}}~\href{https://github.com/eurecom-probai/alice}{\texttt{github.com/eurecom-probai/alice}}}
\metadata[Models]{\raisebox{-0.2em}{\includegraphics[height=1.1em]{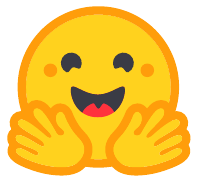}}~\href{https://huggingface.co/collections/eurecom-probai/alice-foundation-models-for-information-estimation}{\texttt{hf.co/collections/eurecom-probai/alice-foundation-models-for-information-estimation}}}

\hypersetup{
  pdftitle={ALICE: In-context, Zero-shot, Mutual Information Estimation},
  pdfauthor={Giulio Franzese, Simone Rossi, Pietro Michiardi}
}

\begin{document}
\maketitle

\newif\ifarchdetail
\archdetailfalse

\section{Introduction}
\label{sec:introduction}

\gls{MI} quantifies the non-linear statistical dependence between two random variables \citep{shannon1948mathematical, mackay2003information} and is widely used in machine learning~\citep{stratos2018mutual, belghazi2018mine, oord2019representation, hjelm2019learning}, in biology~\citep{nurse2008life,tostevin2009mutual,waltermann2011information,brennan2012information} and neuroscience~\citep{borst1999information,ince2017statistical,nieh2021geometry}, to name a few.
For random variables $X\in\bbR^{d_x}$ and $Y\in\bbR^{d_y}$, we write $Z=(X,Y)\in\bbR^d$ with $d=d_x+d_y$, and denote their joint distribution and marginals by $p_{XY}$, $p_X$, and $p_Y$.
Their mutual information is the KL divergence
\begin{equation}\label{eq:mi_def}
  \MI(X;Y)=\KL{p_{XY}}{p_X\otimes p_Y},
\end{equation}
where $p_X\otimes p_Y$ is the product of the marginals, with density $p_X(x)\,p_Y(y)$.
Estimating \gls{MI} from 
finite samples 
is a difficult problem: the estimand depends on the full joint density, \gls{MI} is unbounded and dominated by rare high-density events, and guarantees are fragile in high dimension \citep{paninski2003estimation,poole2019variational,mcallester2020formal,czyz2023beyond}.

Existing sample-based estimators share a structural limitation: each one is fit anew for every distribution.
Variational bounds such as MINE \citep{belghazi2018mine}, InfoNCE \citep{oord2019representation}, NWJ \citep{nguyen2010nwj}, and SMILE \citep{song2020smile}, together with diffusion based estimators such as MINDE \citep{franzese2024minde}, InfoBridge \citep{kholkin2026infobridge}, FMMI \citep{butakov2026fmmi}, require training from scratch for each distribution, with associated computational and tuning costs and a risk of failure.
InfoAtlas \citep{hu2026infoatlas} is an amortized alternative which uses a hypernetwork to sidestep per-distribution training, but it suffers from a non-negligible penalty in terms of accuracy.
\Cref{sec:related} discusses these estimators and other related work.

The diffusion-based estimators above build on score-based and flow-matching generative models, which represent a distribution by a time-indexed field attached to a noising process that maps clean samples to Gaussian noise \citep{song2021score,lipman2023flow}.
In this work, we use the rectified-flow velocity as this field.
Let $p$ be a density on $\bbR^d$, let $Z_0\sim p$ be a clean sample, let $\epsilon\sim\Normald{0}{I}$ be standard Gaussian noise independent of $Z_0$, and let $t\in[0,1]$.
The rectified-flow interpolant $Z_t=(1-t)Z_0+t\epsilon$ connects data at $t=0$ to noise at $t=1$.
The per-sample flow-matching target is the direction $Z_0-\epsilon$, and the associated velocity field is its conditional mean at a noised point,
\begin{equation}
  \vel_t(z)=\Ex_p\!\left[Z_0-\epsilon\mid Z_t=z\right].
\end{equation}

The key link to \gls{MI} is that, along a common rectified-flow path, the KL divergence between two distributions is a time integral of squared velocity differences \citep{guo2005immse,franzese2024minde,wang2026relative}.
For \Cref{eq:mi_def}, an equivalent form of this identity compares the joint velocity with the two block-conditional velocities, obtained by noising one block while holding the other clean, so \gls{MI} estimation reduces to evaluating three velocity fields.

This formulation suggests an amortized estimator.
We propose \model{}, a single Transformer network~\citep{vaswani2017attention}, trained once on synthetic distributions to predict their rectified-flow velocity fields from samples, in the spirit of amortized in-context predictors for tabular data \citep{hollmann2023tabpfn} and function classes \citep{garg2022what}.
At inference, a finite context of samples from an unseen joint distribution determines the field represented by the model, and a query specifies the point at which it is evaluated.
Three masked queries provide the fields required 
to estimate \gls{MI} with a fixed set of forward passes.
The training corpus is entirely synthetic: a family of parametric distributions that is simple to define, cheap to sample, and easy to extend.
\model{} generalizes to distributions and data types that the corpus does not contain (\Cref{sec:applications}).

The absence of per-distribution training is key in the low-data regime: existing neural estimators achieve high accuracy only with hundreds of thousands of training samples per distribution and degrade sharply below that (\Cref{sec:exp:czyz}), while datasets in biology or neuroscience, for example, often provide a few thousand pairs at most.
In contrast, \model{} covers context sizes from only a few hundred samples to tens of thousands and is competitive across the whole range.

Our contributions are as follows.
We present \textbf{\model{}} (\Cref{sec:method}), a foundation model for in-context estimation of velocity fields that can be used at any joint width and context length.
\model{} is the first zero-shot \gls{MI} estimator whose accuracy matches that of estimators trained per distribution.
We \textbf{validate} \model{} (\Cref{sec:experiments}) on the ``Beyond Normal'' benchmark \citep{czyz2023beyond}.
In the zero-shot setting, \model{} is competitive with trained neural estimators at their full budget and, with one thousand samples, is the most accurate estimator by a factor of at least two.
We also report three scientific \textbf{applications} (\Cref{sec:applications}) on data absent from the training corpus.
\looseness=-1
In these applications, \model{} reproduces findings obtained with dedicated estimators and extends them, since the cost of a few forward passes per estimate allows analyses that per-distribution training makes impractical.

\section{\texorpdfstring{\model}{ALICE}}
\label{sec:method}
\model{} is a \gls{MI} estimator that amortizes velocity-field estimation across joint distributions.
It is trained once, exclusively on a synthetic corpus of joint distributions,  with a masked flow-matching objective.
At inference, it conditions on samples from an unseen joint distribution and evaluates the joint and block-conditional velocity fields under three noising patterns; a fixed velocity identity combines their aligned block-wise differences into the estimate.
\Cref{fig:schematic} summarizes
these ideas.

This section develops the construction in four steps.
We first derive the velocity-form identity and its Monte Carlo estimator in \Cref{sec:mi-from-kl}.
We then define the context-conditioned velocity field in \Cref{sec:method:posterior}, describe the size-independent architecture in \Cref{sec:method:arch}, and explain the synthetic training corpus and masked pretraining objective in \Cref{sec:method:pretraining}.

\usetikzlibrary{svg.path}
\definecolor{rgRed}{HTML}{ED1C24}
\definecolor{rgRedFill}{HTML}{EF5B61}
\definecolor{rgOrange}{HTML}{FF7F00}
\definecolor{rgOrangeFill}{HTML}{FFA64D}
\definecolor{rgGreen}{HTML}{4DAF4A}
\definecolor{rgGreenFill}{HTML}{86C982}
\definecolor{rgBlue}{HTML}{377EB8}
\definecolor{rgBlueFill}{HTML}{73A4CC}
\definecolor{rgPurple}{HTML}{984EA3}
\definecolor{rgPurpleFill}{HTML}{B17CBB}
\definecolor{rgGray}{HTML}{7F7F7F}
\def\archvTwoCorrelatedGaussian{0.901/4.287,1.406/4.476,1.281/4.352,1.2/4.388,1.439/4.509,1.247/4.385,1.08/4.367,1.517/4.414,1.579/4.591,1.329/4.436,1.181/4.334,1.378/4.466,1.142/4.353,1.474/4.594,1.419/4.373,1.436/4.505,1.558/4.574,0.949/4.434,1.013/4.413,1.347/4.434,1.069/4.354,1.213/4.517,1.343/4.537,1.018/4.32,1.446/4.563,1.397/4.459,1.04/4.462,1.113/4.358,1.575/4.555,1.124/4.436,1.211/4.496,1.228/4.373,1.284/4.436,0.978/4.295,0.991/4.388,1.501/4.503,1.15/4.417,1.249/4.462,1.353/4.493,1.382/4.419}
\def\archvTwoGaussianMixture{2.343/4.31/rgOrange,2.287/4.384/rgOrange,2.311/4.388/rgOrange,2.299/4.319/rgOrange,2.229/4.361/rgOrange,2.31/4.404/rgOrange,2.235/4.373/rgOrange,2.211/4.39/rgOrange,2.315/4.425/rgOrange,2.262/4.349/rgOrange,2.199/4.425/rgOrange,2.179/4.38/rgOrange,2.097/4.432/rgOrange,2.278/4.367/rgOrange,2.254/4.42/rgOrange,2.264/4.403/rgOrange,2.273/4.343/rgOrange,2.116/4.436/rgOrange,2.223/4.443/rgOrange,2.377/4.374/rgOrange,2.56/4.445/rgOrange,2.768/4.47/rgOrange,2.596/4.507/rgOrange,2.535/4.444/rgOrange,2.567/4.467/rgOrange,2.503/4.442/rgOrange,2.632/4.426/rgOrange,2.526/4.487/rgOrange,2.527/4.46/rgOrange,2.69/4.473/rgOrange,2.784/4.465/rgOrange,2.482/4.485/rgOrange,2.639/4.502/rgOrange,2.49/4.464/rgOrange,2.719/4.479/rgOrange,2.722/4.44/rgOrange,2.662/4.47/rgOrange,2.761/4.429/rgOrange,2.729/4.479/rgOrange,2.626/4.442/rgOrange}
\def\archvTwoBanana{1.352/3.735,1.201/3.678,0.948/3.83,1.276/3.698,1.092/3.725,1.469/3.792,1.575/3.861,1.421/3.746,1.14/3.72,0.931/3.823,1.408/3.735,1.137/3.735,1.276/3.702,1.102/3.755,0.993/3.802,1.12/3.753,1.045/3.757,1.581/3.879,1.446/3.768,1.206/3.715,1.269/3.684,1.428/3.763,1.013/3.776,1.274/3.697,1.125/3.736,1.562/3.815,1.186/3.698,1.395/3.725,1.44/3.742,1.427/3.755,1.421/3.75,1.074/3.734,1.079/3.746,0.992/3.805,1.245/3.666,1.167/3.727,1.025/3.785,1.14/3.716,1.43/3.73,1.417/3.785}
\def\archvTwoSpiral{2.478/3.819/rgBlue,2.503/3.809/rgBlue,2.489/3.822/rgBlue,2.488/3.813/rgBlue,2.45/3.822/rgBlue,2.449/3.83/rgBlue,2.433/3.85/rgBlue,2.413/3.829/rgBlue,2.383/3.833/rgBlue,2.34/3.831/rgBlue,2.352/3.818/rgBlue,2.356/3.802/rgBlue,2.356/3.789/rgBlue,2.328/3.771/rgBlue,2.355/3.762/rgBlue,2.403/3.734/rgBlue,2.408/3.729/rgBlue,2.446/3.733/rgBlue,2.505/3.714/rgBlue,2.524/3.738/rgBlue,2.549/3.743/rgBlue,2.601/3.762/rgBlue,2.627/3.754/rgBlue,2.622/3.792/rgBlue,2.638/3.812/rgBlue,2.651/3.831/rgBlue,2.615/3.869/rgBlue,2.553/3.884/rgBlue,2.532/3.894/rgBlue,2.513/3.905/rgBlue,2.435/3.907/rgBlue,2.351/3.91/rgBlue,2.323/3.897/rgBlue,2.266/3.878/rgBlue,2.232/3.856/rgBlue,2.193/3.83/rgBlue,2.186/3.793/rgBlue,2.18/3.759/rgBlue,2.205/3.734/rgBlue,2.251/3.702/rgBlue}

\begin{figure}[t]
  \centering
  \resizebox{0.92\linewidth}{!}{%
    \begin{tikzpicture}[
        x=1cm,
        y=1cm,
        font=\sffamily\scriptsize,
        text=black!90,
        >={Latex[length=1.5mm,width=1.1mm]},
        panel/.style={draw=rgGray, line width=0.9pt, rounded corners=7pt, fill=white},
        module/.style={draw=rgGray, line width=0.8pt, rounded corners=4pt, fill=white,
        align=center, inner sep=2pt, font=\sffamily\tiny},
        model/.style={module, draw=rgPurple, fill=rgPurpleFill!28, minimum height=1.04cm},
        integration/.style={draw=rgBlue!65, line width=0.75pt, rounded corners=6pt,
        fill=rgBlueFill!6},
        flowline/.style={line width=0.8pt, black!80},
        flow/.style={flowline, ->, shorten >=1.8pt, shorten <=1.2pt},
        maskline/.style={line width=1.0pt, black!80, rounded corners=2pt},
        maskflow/.style={maskline, ->, shorten >=1.8pt, shorten <=1.2pt},
        wire/.style={line width=0.55pt, black!65},
        lab/.style={inner sep=1pt, align=center, font=\sffamily\tiny},
      ]
      \node[panel, minimum width=12.42cm, minimum height=3.cm] at (6.45,4.20) {};
      \node[panel, minimum width=12.42cm, minimum height=3.4cm] at (6.45,0.85) {};

      \begin{scope}[shift={(0cm,0.2cm)}]

        \node[font=\sffamily\small\bfseries, anchor=west] at (0.72,5.16) {(a) Pretraining};
        \begin{scope}[shift={(0cm,-0.2cm)}]
          \node[module, minimum width=2.68cm, minimum height=1.50cm] (corpus) at (1.88,4.13) {};
          \node[module, minimum width=0.94cm, minimum height=0.54cm] (distA) at (1.25,4.43) {};
          \node[module, minimum width=0.94cm, minimum height=0.54cm] (distB) at (2.45,4.43) {};
          \node[module, minimum width=0.94cm, minimum height=0.54cm] (distC) at (1.25,3.80) {};
          \node[draw=rgBlue, fill=rgBlueFill!25, line width=1.3pt, rounded corners=4pt,
          minimum width=1.02cm, minimum height=0.62cm] (selected) at (2.45,3.80) {};
          \foreach \xx/\yy in \archvTwoCorrelatedGaussian {
            \fill[rgRed] (\xx,\yy) circle (0.8pt);
          }
          \foreach \xx/\yy/\cc in \archvTwoGaussianMixture {
            \fill[\cc] (\xx,\yy) circle (0.8pt);
          }
          \foreach \xx/\yy in \archvTwoBanana {
            \fill[rgGreen] (\xx,\yy) circle (0.8pt);
          }
          \foreach \xx/\yy/\cc in \archvTwoSpiral {
            \fill[\cc] (\xx,\yy) circle (0.8pt);
          }
          \node[lab] at (1.88,5.04) {corpus $\mathcal T$};
        \end{scope}

        \begin{scope}[shift={(0.4cm,0cm)}]
          \node[module, minimum width=1.36cm, minimum height=0.58cm] (ca) at (4.42,4.62) {};
          \foreach \xx in {3.88,4.10,4.32,4.54,4.76} {
            \fill[rgBlueFill, draw=rgBlue, line width=0.4pt]
            (\xx,4.52) rectangle ({\xx+0.20},4.72);
          }
          \node[lab] at (4.42,4.21) {clean context $C$};
        \end{scope}

        \begin{scope}[shift={(0.4cm,-0.2cm)}]
          \node[module, minimum width=1.36cm, minimum height=0.82cm] (qa) at (4.42,3.64) {};
          \foreach \xx/\op in {3.88/0.38,4.10/0.62,4.32/0.46,4.54/0.76,4.76/0.54} {
            \fill[rgBlueFill, fill opacity=\op, draw=rgBlue, line width=0.4pt]
            (\xx,3.74) rectangle ({\xx+0.20},3.94);
          }
          \foreach \xx in {3.88,4.10,4.32,4.54,4.76} {
            \fill[rgPurpleFill, draw=rgPurple, line width=0.4pt]
            (\xx,3.44) rectangle ({\xx+0.14},3.64);
            \fill[rgOrangeFill, draw=rgOrange, line width=0.4pt]
            ({\xx+0.14},3.44) rectangle ({\xx+0.20},3.64);
            \draw[wire] (\xx,3.44) rectangle ({\xx+0.20},3.64);
            \draw[wire] ({\xx+0.14},3.44) -- ({\xx+0.14},3.64);
          }
          \node[lab, fill=white] at (4.42,3.07) {queries at $t$; $\phi(t),m$};
        \end{scope}
        \draw[flow, rounded corners=2pt] (selected.east) -- (3.52,3.60) |- (ca.west);
        \draw[flow, rounded corners=2pt] (selected.east) -- (3.4,3.60) |- (qa.west);
        \node[lab, fill=white] at (3.6,4.08) {draw $p$};

        \begin{scope}[shift={(0.2cm,0cm)}]
          \node[model, draw=rgOrange, fill=rgOrangeFill!22, minimum width=1.46cm] (trainmodel) at (7.00,4.18)
          {ALICE};
          \node[anchor=south east, inner sep=1pt] at ([xshift=-2pt,yshift=2pt]trainmodel.south east)
          {\includegraphics[width=0.28cm]{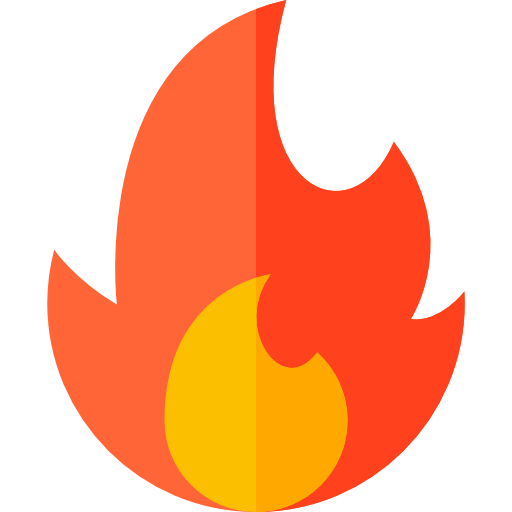}};
        \end{scope}
        \draw[flow, rounded corners=2pt] (ca.east) --+ (0.3, 0)  |- (trainmodel.west);
        \draw[flow, rounded corners=2pt] (qa.east) --+ (0.2, 0) |- (trainmodel.west);

        \begin{scope}[shift={(0.3cm,0cm)}]
          \node[module, minimum width=1.36cm, minimum height=0.58cm] (target) at (8.95,4.18) {};
          \foreach \xx/\shade in {8.41/black!18,8.63/black!44,8.85/black!28,
          9.07/black!62,9.29/black!36} {
            \fill[\shade, draw=black!70, line width=0.4pt]
            (\xx,4.08) rectangle ({\xx+0.20},4.28);
          }
          \node[lab] at (8.95,4.65) {velocity prediction};
        \end{scope}
        \draw[flow] (trainmodel.east) -- (target.west);

        \begin{scope}[shift={(0.3cm,0cm)}]
          \node[module, minimum width=0.78cm, minimum height=0.48cm] (loss) at (10.70,4.18)
          {$\mathcal L(\theta)$};
          \node[lab, anchor=west, fill=white] at (10.8,4.60) {velocity loss};
          \draw[flow, rounded corners=3pt]
          (target.east) -- (loss.west);
        \end{scope}
        \draw[flow, rounded corners=3pt]
        (loss.north) --+ (0,0.65) -| (trainmodel.north) ;

      \end{scope}

      \begin{scope}[shift={(0cm,-0.3cm)}]
        \node[integration, minimum width=6.32cm, minimum height=2.78cm]
        (integrationbox) at (7.34,1.16) {};
        \node[font=\sffamily\small\bfseries, anchor=west] at (0.90,2.55) {(b) Estimation};
        \begin{scope}[shift={(0.15cm,0cm)}]
          \node[module, minimum width=1.30cm, minimum height=0.78cm] (unseen) at (1.20,1.56) {};
          \foreach \xx/\yy/\cc in {0.76/1.78/rgRed,0.93/1.68/rgOrange,1.11/1.80/rgGreen,
            1.29/1.61/rgBlue,1.47/1.76/rgPurple,0.84/1.49/rgOrange,
          1.03/1.42/rgRed,1.22/1.50/rgGreen,1.42/1.39/rgBlue} {
            \fill[\cc] (\xx,\yy) circle (1.0pt);
          }
          \node[lab, text width=1.80cm] at (1.20,0.77) {samples of an unseen $p_{XY}$};
        \end{scope}

        \begin{scope}[shift={(0.15cm,0cm)}]
          \node[module, minimum width=1.20cm, minimum height=0.58cm] (cb) at (3.10,2.02) {};
          \foreach \xx in {2.56,2.78,3.00,3.22,3.44} {
            \fill[rgRedFill, draw=rgRed, line width=0.4pt]
            (\xx,1.92) rectangle ({\xx+0.20},2.12);
          }
          \node[lab] at (3.10,1.61) {context $C$};
        \end{scope}

        \begin{scope}[shift={(0.15cm,0cm)}]
          \node[module, minimum width=1.00cm, minimum height=0.50cm] (qb) at (3.10,1.08) {};
          \foreach \xx in {2.79,3.21} {
            \fill[rgRedFill, draw=rgRed, line width=0.4pt]
            (\xx,0.98) rectangle ({\xx+0.20},1.18);
          }
          \node[lab] at (2.89,0.70) {$x_0$};
          \node[lab] at (3.31,0.70) {$y_0$};
        \end{scope}
        \draw[flow, rounded corners=3pt] (unseen.east) -- (2.25,1.56) |- (cb.west);
        \draw[flow, rounded corners=3pt] (unseen.east) -- (2.25,1.56) |- (qb.west);

        \begin{scope}[shift={(.2cm,0cm)}]
          \begin{scope}[shift={(0.4cm,0cm)}]
            \node[draw=rgGray, line width=0.8pt, rounded corners=4pt, fill=white,
            minimum width=3.0cm, minimum height=1.90cm] (maskbox) at (5.45,1.36) {};
            \node[lab, fill=white] at (5.45,2.38) {noise \& mask};
          \end{scope}

          \begin{scope}[shift={(0.4cm,0cm)}]
            \node[module, minimum width=1.00cm, minimum height=0.50cm] (qfull) at (5.45,1.96) {};
            \fill[rgBlueFill, pattern=north east lines, pattern color=rgBlue,
            draw=rgBlue, line width=0.4pt]
            ([xshift=-3.1mm,yshift=-1mm]qfull.center)
            rectangle ([xshift=-1.1mm,yshift=1mm]qfull.center);
            \fill[rgOrangeFill, pattern=north east lines, pattern color=rgOrange,
            draw=rgOrange, line width=0.4pt]
            ([xshift=1.1mm,yshift=-1mm]qfull.center)
            rectangle ([xshift=3.1mm,yshift=1mm]qfull.center);
            \node[lab, anchor=east, fill=white] at ([yshift=1.25mm]qfull.west) {$m_{XY}$};
          \end{scope}

          \begin{scope}[shift={(0.4cm,0cm)}]
            \node[module, minimum width=1.00cm, minimum height=0.50cm] (qx) at (5.45,1.36) {};
            \fill[rgBlueFill, pattern=north east lines, pattern color=rgBlue,
            draw=rgBlue, line width=0.4pt]
            ([xshift=-3.1mm,yshift=-1mm]qx.center)
            rectangle ([xshift=-1.1mm,yshift=1mm]qx.center);
            \fill[rgOrangeFill, draw=rgOrange, line width=0.4pt]
            ([xshift=1.1mm,yshift=-1mm]qx.center)
            rectangle ([xshift=3.1mm,yshift=1mm]qx.center);
            \node[lab, anchor=east, fill=white]
            at ([yshift=1.25mm]qx.west) {$m_X$};
          \end{scope}

          \begin{scope}[shift={(0.4cm,0cm)}]
            \node[module, minimum width=1.00cm, minimum height=0.50cm] (qy) at (5.45,0.76) {};
            \fill[rgBlueFill, draw=rgBlue, line width=0.4pt]
            ([xshift=-3.1mm,yshift=-1mm]qy.center)
            rectangle ([xshift=-1.1mm,yshift=1mm]qy.center);
            \fill[rgOrangeFill, pattern=north east lines, pattern color=rgOrange,
            draw=rgOrange, line width=0.4pt]
            ([xshift=1.1mm,yshift=-1mm]qy.center)
            rectangle ([xshift=3.1mm,yshift=1mm]qy.center);
            \node[lab, anchor=east, fill=white]
            at ([yshift=1.25mm]qy.west) {$m_Y$};
          \end{scope}

          \begin{scope}[shift={(0cm,0cm)}]
            \draw[maskline] (qb.east) -- ([yshift=-0.28cm]maskbox.west);
            \draw[maskflow] ([yshift=-0.28cm]maskbox.west) --+(0.3,0) |- (qfull.west);
            \draw[maskflow] ([yshift=-0.28cm]maskbox.west) --+(0.3,0) |- (qx.west);
            \draw[maskflow] ([yshift=-0.28cm]maskbox.west) --+(0.3,0) |- (qy.west);
            \draw[maskflow] ([xshift=-0.6cm, yshift=0.28cm]maskbox.west) -- ([yshift=0.28cm]maskbox.west)
            node[midway, left, fill=white] {$t$};
          \end{scope}

          \begin{scope}[shift={(0cm,0cm)}]
            \draw[maskline] (qfull.east) --+(0.2,0) |- ([yshift=+0.cm]maskbox.east);
            \draw[maskline] (qx.east) -- (maskbox.east);
            \draw[maskline] (qy.east) --+ (0.2, 0) |- ([yshift=-0.cm]maskbox.east);
          \end{scope}

        \end{scope}

        \begin{scope}[shift={(0.65cm,-.2cm)}]
          \node[model, draw=rgBlue, fill=rgBlueFill!22, minimum width=1.30cm, minimum height=1.10cm] (frozenmodel) at (8.00,1.56)
          {ALICE};
          \node[anchor=south east, inner sep=1pt] at ([xshift=-2pt,yshift=2pt]frozenmodel.south east)
          {\includegraphics[width=0.27cm]{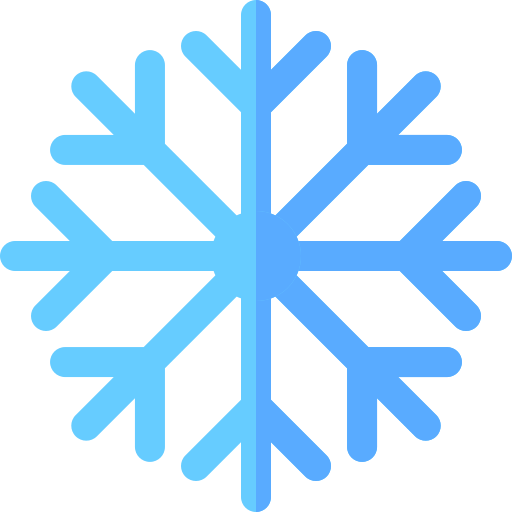}};
        \end{scope}
        \draw[flow, rounded corners=3pt]
        (cb.east) -- (4.08,2.02) -- (4.08,2.62) -- (8.00,2.62) -| (frozenmodel.north);
        \begin{scope}[shift={(0cm,0cm)}]
          \draw[maskflow] ([xshift=-0.3cm]maskbox.east) -- (frozenmodel.west);
        \end{scope}

        \begin{scope}[shift={(0.7cm,-0.2cm)}]
          \node[module, minimum width=0.50cm, minimum height=0.98cm] (vels) at (9.25,1.56) {};
          \foreach \yy/\cc/\ff in {1.82/rgRed/rgRedFill,1.56/rgOrange/rgOrangeFill,
          1.30/rgGreen/rgGreenFill} {
            \fill[\ff, draw=\cc, line width=0.4pt]
            (9.15,\yy-0.10) rectangle (9.35,\yy+0.10);
          }
          \node[lab] at (9.25,2.34) {three velocities};
        \end{scope}
        \draw[flow] (frozenmodel.east) -- (vels.west);

        \node[lab, anchor=east, text=rgBlue] at (9.52,0.08) {integrate over $t$};
        \begin{scope}[shift={(9.65,0.31)}]
          \path[fill=rgBlue, draw=rgBlueFill!6, line width=0.85pt, line join=round]
          svg[xscale=0.042,yscale=-0.042] {
            M152.924,300.748c84.319,0,152.912-68.6,152.912-152.918
            c0-39.476-15.312-77.231-42.346-105.564c0,0,3.938-8.857,8.814-19.783
            c4.864-10.926-2.138-18.636-15.648-17.228l-79.125,8.289
            c-13.511,1.411-17.999,11.467-10.021,22.461l46.741,64.393
            c7.986,10.992,17.834,12.31,22.008,2.937l7.56-16.964
            c12.172,18.012,18.976,39.329,18.976,61.459
            c0,60.594-49.288,109.875-109.87,109.875
            c-60.591,0-109.882-49.287-109.882-109.875
            c0-19.086,4.96-37.878,14.357-54.337
            c5.891-10.325,2.3-23.467-8.025-29.357
            c-10.328-5.896-23.464-2.3-29.36,8.031
            C6.923,95.107,0,121.27,0,147.829
            C0,232.148,68.602,300.748,152.924,300.748z
          };
        \end{scope}

        \begin{scope}[shift={(0.23cm,-0.2cm)}]
          \node[module, draw=rgGreen, fill=rgGreenFill!25, text width=1.42cm,
          minimum width=1.70cm, minimum height=0.92cm] (estimate) at (11.42,1.56)
          {MI estimation\\$\widehat I(X;Y)$};
        \end{scope}
        \draw[flow] (vels.east) -- (estimate.west);

      \end{scope}
    \end{tikzpicture}%
  }
  \caption{\textsc{Alice} overview.
    \textbf{(a)} Pretraining: a distribution $p$ drawn from the corpus $\mathcal T$ provides a clean context and queries noised according to indicator $m$; the model regresses the velocity target $z_0-\epsilon$.
  \textbf{(b)} Estimation: samples of an unseen distribution provide a clean context and query $z_0=(x_0,y_0)$; one shared Gaussian perturbation $\epsilon=(\epsilon_X,\epsilon_Y)$ at time $t$ produces the three masked queries 
  $m_{XY}$, $m_X$, and $m_Y$, 
  whose block-wise velocity difference $g$, weighted by $(1-t)/t$, averages to the estimate.}
  \label{fig:schematic}
\end{figure}

\subsection{Mutual information estimation}
\label{sec:mi-from-kl}

\gls{MI} is the KL divergence between the joint law $p_{XY}$ and the product of its marginals $p_X\otimes p_Y$.
For two densities following a common rectified-flow interpolant, this KL divergence is a time integral of squared differences between their velocity fields \citep{guo2005immse,franzese2024minde,wang2026relative,butakov2026fmmi}.
We now derive the main velocity-form identity for \gls{MI}, while we defer the full derivation and discussion to \Cref{app:mi-variants}.

Let $z_0=(x_0,y_0)\sim p_{XY}$ be a clean joint sample, and let $\epsilon=(\epsilon_X,\epsilon_Y)$ be an independent standard Gaussian perturbation.
The two components of $z_0$ define the $X$ and $Y$ blocks, each of which may contain multiple coordinates.
For $t\in[0,1]$, diffuse the two blocks as $x_t=(1-t)x_0+t\epsilon_X$ and $y_t=(1-t)y_0+t\epsilon_Y$.
Let $z_t=(x_t,y_t)$ denote the jointly noised point.
For a concatenated vector $u=(u_X,u_Y)$, the selections $u|_X$ and $u|_Y$ retain the coordinates in the corresponding blocks.

Let $\vel_t(z)$ denote the joint velocity field.
Let $\vel_t(x\mid y_0)$ denote the conditional velocity field of $p_{X\mid Y=y_0}$ evaluated at $x$.
Let $\vel_t(y\mid x_0)$ denote the conditional velocity field of $p_{Y\mid X=x_0}$ evaluated at $y$.
The resulting identity, proved in \Cref{app:mi-variants} (\Cref{thm:mi-velocity}) and closest in mechanism to the decompositions of \citet{franzese2024minde} and \citet{wang2026relative}, is:
\begin{equation}
  \begin{aligned}
    \MI(X;Y)
    =\int_0^{1}\frac{1-t}{t}\;\Ex_{x_0,y_0,\epsilon}\!\Bigg[
      \Norm{\vel_t(z_t)|_X-\vel_t(x_t\mid y_0)}^2
      +\Norm{\vel_t(z_t)|_Y-\vel_t(y_t\mid x_0)}^2
    \Bigg]\d{t}.
  \end{aligned}
  \label{eq:mi-velocity}
\end{equation}

In principle, \Cref{eq:mi-velocity} involves three velocity fields: the joint field and two block-conditional fields.
In practice, one can amortize these fields with a single model, represented by the parametric velocity field $v_\theta(z,t,m)$ for $z\in\bbR^d$ \citep{franzese2024minde}.
\looseness=-1
The mask $m\in\{0,1\}^d$ identifies the coordinates that are diffused and predicted and those held clean as evidence.
The joint evaluation diffuses both blocks, and each conditional evaluation diffuses one block while holding the other clean.

We estimate the integral in \Cref{eq:mi-velocity} with Monte Carlo.
For each Monte Carlo draw $i$, sample a clean joint point ${z_0^{(i)}=(x_0^{(i)},y_0^{(i)})}$, a time $t_i\sim\cU[0,1]$, and one independent standard Gaussian perturbation $\smash[t]{\epsilon^{(i)}}$. %
Evaluating the definitions above at $t_i$ gives the full query point $z_{t_i}^{(i)}=(x_{t_i}^{(i)},y_{t_i}^{(i)})$.
Define $m_{XY}=(\indone_X,\indone_Y)$, $m_X=(\indone_X,\indzero_Y)$, and $m_Y=(\indzero_X,\indone_Y)$, where $\indone_X$ and $\indzero_X$ are the all-one and all-zero vectors on the $X$ block, with the analogous convention for $Y$.
Then, we have that 
\[
  \widehat v^{(i)}=v_\theta(z_{t_i}^{(i)},t_i,m_{XY}),\qquad
  \widehat v_X^{(i)}=v_\theta((x_{t_i}^{(i)},y_0^{(i)}),t_i,m_X),\qquad
  \widehat v_Y^{(i)}=v_\theta((x_0^{(i)},y_{t_i}^{(i)}),t_i,m_Y).
\]
Using the same $\smash[t]{\epsilon^{(i)}}$ in all three model evaluations, the Monte Carlo estimator is
\begin{equation}
  \widehat\MI(X;Y)=\frac{1}{N_{\mathrm{MC}}}\sum_{i=1}^{N_{\mathrm{MC}}}\frac{1-t_i}{t_i}
  \left[\Norm{(\widehat v^{(i)}-\widehat v_X^{(i)})|_X}^2
  +\Norm{(\widehat v^{(i)}-\widehat v_Y^{(i)})|_Y}^2\right],
  \label{eq:estimator}
\end{equation}
where $N_{\mathrm{MC}}$ is the number of samples 
\looseness=-1
(see \Cref{fig:schematic}--b and \Cref{alg:mi} in \Cref{app:algorithm} for details).

Thus, our estimator requires a model that conditions on a clean sample context and accepts the query point, time, and noising indicator.
The construct that meets these requirements is described next.

\subsection{The in-context velocity field}
\label{sec:method:posterior}

A conventional flow-matching model associates one set of parameters with one distribution $p$ and approximates the map $(z_t,t)\mapsto\vel_t(z_t)$.
Evaluating \Cref{eq:estimator} would require training one separate model for each distribution before its three velocity fields could be queried.
We instead define, train, and use a single context-conditioned velocity model across a family of distributions.
We represent this model as the map from a clean context \emph{and} a query to a velocity, $(\ctx,z_t,t,m)\mapsto v_\theta(z_t,t,m;\ctx)$, where $\ctx=\{z^{(k)}\}_{k=1}^{n}$ is a collection of $n$ clean samples from the unseen distribution.
At inference, the context determines \emph{which} velocity field the in-context learning represents, while the query gives the argument \emph{where} that field is evaluated.
From a statistical learning perspective, the context size contributes to the \emph{bias} of the estimator, while the query contributes to its \emph{variance}.
We implement this context-conditioned map with an attention-based transformer.
A growing literature gives theoretical analyses and empirical demonstrations that transformers can implement learning procedures in their forward pass from in-context data \citep{garg2022what,aky_urek2023what,von_oswald2023transformers,bai2023transformers,xie2022explanation,zhang2025what,xie2025initialization}.
In a setting close to ours, \citet{smart2025incontext} show that a one-layer attention model can solve certain in-context denoising problems optimally.
This motivates our approach, in which pretraining over a family of distributions teaches one shared attention model to infer the distribution-specific velocity computation from the context.

\subsection{Architecture}
\label{sec:method:arch}

The architecture must process the context as a matrix $\ctx\in\bbR^{n\times d}$, with one row per sample and one column per coordinate, for any context size $n$ and joint width $d$ with one set of parameters.
Our method builds on recent work on in-context learning and set transformers, including the scalar tokenization of Chronos \citep{ansari2024chronos}, the any-variate attention of Moirai \citep{woo2024moirai}, and the amortized in-context inference of TabPFN \citep{hollmann2023tabpfn} and TabICL \citep{qu2025tab}.
The rows of the context are evidence about which distribution's velocity field to represent.
The columns carry the dependence between coordinates, which the velocity of one coordinate needs from the values of the others.
\Cref{app:alice-details} complements the high-level description we discuss next.

\noindent \textbf{Size-independent representation.}
Each scalar $z^{(k)}[i]$, coordinate $i$ of context sample $k$, becomes one token, with one input projection and one scalar output head shared over all samples and coordinates.
Context tokens contain a clean value and a type indicator; query tokens additionally contain the time features $\phi(t)$ and the noising-indicator entry $m[i]$.
Since the context rows form a set and coordinate order is arbitrary, the model uses no positional encodings along either axis, and attention over the rows is applied separately to each column.
A validity mask makes padded rows invisible, so the same parameters accept any context size $n$ and are invariant to the order of the rows.

\noindent \textbf{Induced latents.}
Self-attention over the $n$ context tokens in every block would make compute and memory quadratic in $n$.
To keep the cost linear in the context size, we introduce a bottleneck of $K$ induced tokens that summarize the context for each coordinate, drawing inspiration from inducing variables in sparse Gaussian processes \citep{snelson2005sparse,titsias2009variational}, their use in deep Gaussian processes \citep{damianou2013deep,salimbeni2017doubly}, and induced set attention and latent-array architectures \citep{lee2019set,jaegle2021perceiver}.
The $K$ induced tokens are the rows of one learned matrix $U\in\bbR^{K\times D}$, where $D$ is the width of the token representations.
For each coordinate, \model{} updates a copy of $U$ through cross-attention over the context tokens, producing $K$ context-specific latent vectors.
For $L$ blocks, this changes the context-dependent attention cost from $\mathcal O(Ldn^2)$ to $\mathcal O(dnK+LdK^2)$, which grows linearly in $n$ for fixed $K$.

\noindent \textbf{Context-derived relation graph.}
The velocity of one coordinate can depend on the values of other coordinates, and this dependence changes with the distribution.
Separate marginal summaries cannot identify it: independently shuffling one context column preserves its marginal samples while changing which values occur together in a joint observation.
We therefore construct a weighted graph with one node per coordinate, computed once from the clean context, whose edges control information exchange between coordinate representations; this follows the pattern of inferring interactions from observations to guide message passing \citep{kipf2018neural}.
The edge between coordinates $i$ and $j$ is derived from the covariance, over the context, of learned nonlinear features of $z^{(k)}[i]$ and $z^{(k)}[j]$, a principle also used in kernel dependence measures \citep{gretton2005measuring}.
Nonlinear features expose relations such as $z[j]\approx z[i]^2$ that linear correlation misses, and centering the features makes the population descriptor vanish under independence.
Each attention head turns this descriptor into a signed, gated edge and uses the edges to mix the coordinate representations in every graph layer: within each context row before latent compression, and between latent and query representations.
\ifarchdetail
\begin{wrapfigure}{r}{0.33\textwidth}
  \raggedright
  \hspace*{-18pt}%
  \resizebox{1.1\linewidth}{!}{%
    \definecolor{rgRed}{HTML}{ED1C24}
\definecolor{rgRedFill}{HTML}{EF5B61}
\definecolor{rgOrange}{HTML}{FF7F00}
\definecolor{rgOrangeFill}{HTML}{FFA64D}
\definecolor{rgGreen}{HTML}{4DAF4A}
\definecolor{rgGreenFill}{HTML}{86C982}
\definecolor{rgBlue}{HTML}{377EB8}
\definecolor{rgBlueFill}{HTML}{73A4CC}
\definecolor{rgPurple}{HTML}{984EA3}
\definecolor{rgPurpleFill}{HTML}{B17CBB}
\definecolor{rgGray}{HTML}{7F7F7F}

\begin{tikzpicture}[
    x=1cm,
    y=1cm,
    font=\sffamily\scriptsize,
    text=black!90,
    >={Latex[length=2mm,width=1.6mm]},
    panel/.style={draw=rgGray, line width=0.9pt, rounded corners=7pt, fill=white},
    module/.style={draw=rgGray, line width=0.8pt, rounded corners=4pt, fill=white},
    flow/.style={->, line width=0.8pt, black!80},
    wire/.style={line width=0.55pt, black!70},
    label/.style={inner sep=1pt, align=center},
  ]
  \node[panel, minimum width=6.15cm, minimum height=0.68cm] at (3.10,9.84) {};
  \fill[rgRedFill, draw=rgRed, line width=0.75pt] (0.28,9.67) rectangle (0.66,10.01);
  \node[anchor=west] at (0.76,9.84) {coordinate};
  \fill[rgRedFill, draw=rgRed, line width=0.65pt] (2.40,9.68) rectangle (2.63,10.00);
  \fill[rgGreenFill, draw=rgGreen, line width=0.65pt] (2.63,9.68) rectangle (2.86,10.00);
  \node[anchor=west] at (2.96,9.84) {pair $j\!\to\!i$};
  \draw[flow] (4.62,9.84) -- (5.04,9.84);
  \node[anchor=west] at (5.14,9.84) {flow};

  \node[panel, minimum width=6.15cm, minimum height=2.95cm] at (3.10,7.73) {};
  \node[font=\sffamily\small\bfseries] at (3.10,8.99) {Context-derived relation graph};

  \node[label] at (1.18,8.48) {clean context $\ctx$\\paired sample rows};
  \foreach \yy in {8.05,7.75,7.45,7.15} {
    \fill[rgRedFill, draw=rgRed, line width=0.45pt] (0.43,\yy-0.10) rectangle (0.80,\yy+0.10);
    \fill[rgOrangeFill, draw=rgOrange, line width=0.45pt] (0.80,\yy-0.10) rectangle (1.17,\yy+0.10);
    \fill[rgGreenFill, draw=rgGreen, line width=0.45pt] (1.17,\yy-0.10) rectangle (1.54,\yy+0.10);
    \fill[rgBlueFill, draw=rgBlue, line width=0.45pt] (1.54,\yy-0.10) rectangle (1.91,\yy+0.10);
  }
  \node at (1.17,6.84) {$\vdots$};
  \draw[flow] (2.08,7.60) -- (2.48,7.60);

  \node[module, minimum width=1.52cm, minimum height=2.08cm] (features) at (3.28,7.60) {};
  \node[label] at (3.28,8.35) {shared maps\\$\ell,r$};
  \fill[rgRedFill, draw=rgRed, line width=0.55pt] (2.80,7.86) rectangle (3.11,8.13);
  \fill[white, opacity=0.35] (2.87,7.92) rectangle (3.04,8.07);
  \fill[rgGreenFill, draw=rgGreen, line width=0.55pt] (3.15,7.86) rectangle (3.46,8.13);
  \fill[white, opacity=0.35] (3.22,7.92) rectangle (3.39,8.07);
  \draw[flow] (3.13,7.78) -- (3.13,7.55);
  \fill[rgRedFill, draw=rgRed, line width=0.55pt] (2.84,7.26) rectangle (3.13,7.51);
  \fill[white, opacity=0.35] (2.90,7.31) rectangle (3.07,7.46);
  \fill[rgGreenFill, draw=rgGreen, line width=0.55pt] (3.13,7.26) rectangle (3.42,7.51);
  \fill[white, opacity=0.35] (3.19,7.31) rectangle (3.36,7.46);
  \node[label, font=\sffamily\tiny] at (3.28,6.91) {row average};
  \draw[flow] (4.08,7.60) -- (4.45,7.60);

  \node[label] at (5.30,8.48) {relation graph\\$A^{(h)}$};
  \coordinate (g1) at (5.16,8.02);
  \coordinate (g2) at (5.72,7.69);
  \coordinate (g3) at (5.56,7.05);
  \coordinate (g4) at (4.91,7.08);
  \draw[rgPurple, line width=1.5pt] (g1) -- (g2);
  \draw[rgGreen, line width=2.1pt] (g2) -- (g3);
  \draw[rgBlue, line width=1.0pt] (g3) -- (g4);
  \draw[rgOrange, line width=1.7pt] (g4) -- (g1);
  \draw[black!35, line width=0.55pt] (g1) -- (g3);
  \draw[black!35, line width=0.55pt] (g2) -- (g4);
  \fill[rgRedFill, draw=rgRed, line width=0.8pt] (g1) circle (0.15);
  \fill[rgOrangeFill, draw=rgOrange, line width=0.8pt] (g2) circle (0.15);
  \fill[rgGreenFill, draw=rgGreen, line width=0.8pt] (g3) circle (0.15);
  \fill[rgBlueFill, draw=rgBlue, line width=0.8pt] (g4) circle (0.15);

  \begin{scope}[yshift=0.6cm]
    \node[panel, minimum width=6.15cm, minimum height=3.68cm] at (3.10,3.19) {};
    \node[font=\sffamily\small\bfseries] at (3.10,4.67) {Graph attention};

    \node[label] at (0.91,4.19) {$\mathbf{r}_j^{(h)}$};
    \fill[rgRedFill, draw=rgRed, line width=0.70pt] (0.43,3.71) rectangle (1.39,4.03);
    \fill[rgOrangeFill, draw=rgOrange, line width=0.70pt] (0.43,3.26) rectangle (1.39,3.58);
    \fill[rgGreenFill, draw=rgGreen, line width=0.70pt] (0.43,2.81) rectangle (1.39,3.13);
    \fill[rgBlueFill, draw=rgBlue, line width=0.70pt] (0.43,2.36) rectangle (1.39,2.68);
    \fill[rgPurpleFill, draw=rgPurple, line width=0.70pt] (0.43,1.91) rectangle (1.39,2.23);

    \node[label] at (2.46,4.19) {edges $A_{ij}^{(h)}$};
    \foreach \yy/\cc/\ff in {3.87/rgRed/rgRedFill,3.42/rgOrange/rgOrangeFill,2.97/rgGreen/rgGreenFill,2.52/rgBlue/rgBlueFill,2.07/rgPurple/rgPurpleFill} {
      \fill[\ff, draw=\cc, line width=0.55pt] (2.05,\yy-0.11) rectangle (2.34,\yy+0.11);
      \fill[rgGreenFill, draw=rgGreen, line width=0.55pt] (2.38,\yy-0.11) rectangle (2.67,\yy+0.11);
    }
    \foreach \yy in {3.87,3.42,2.97,2.52,2.07} {
      \draw[wire] (1.45,\yy) -- (1.96,\yy);
    }
    \node[label, font=\sffamily\tiny] at (2.46,1.67) {green target $i$};
    \draw[flow] (2.82,2.97) -- (3.15,2.97);

    \node[module, minimum width=1.32cm, minimum height=2.64cm] (aggregate) at (3.87,2.97) {};
    \node[label, font=\sffamily\tiny] at (3.87,4.00) {weighted\\messages};
    \foreach \yy/\cc/\ff in {3.72/rgRed/rgRedFill,3.43/rgOrange/rgOrangeFill,3.14/rgBlue/rgBlueFill} {
      \fill[\ff, draw=\cc, line width=0.50pt] (3.48,\yy-0.10) rectangle (3.75,\yy+0.10);
      \fill[white, opacity=0.35] (3.54,\yy-0.06) rectangle (3.69,\yy+0.06);
      \fill[rgGreenFill, draw=rgGreen, line width=0.50pt] (3.78,\yy-0.10) rectangle (4.05,\yy+0.10);
      \fill[white, opacity=0.35] (3.84,\yy-0.06) rectangle (3.99,\yy+0.06);
    }
    \draw[flow] (3.87,3.00) -- (3.87,2.80);
    \node[font=\normalsize] at (3.87,2.57) {$\sum$};
    \draw[flow] (3.87,2.32) -- (3.87,2.16);
    \node[label, font=\sffamily\tiny] at (3.87,1.91) {normalize};
    \draw[flow] (4.57,2.97) -- (4.88,2.97);

    \node[label] at (5.46,4.15) {message $u_i^{(h)}$};
    \fill[rgGreenFill, draw=rgGreen, line width=0.85pt] (4.97,2.79) rectangle (5.95,3.15);
    \fill[white, opacity=0.35] (5.08,2.86) rectangle (5.84,3.08);
    \node at (5.46,2.97) {$u_i^{(h)}$};
    \draw[flow] (5.46,2.74) -- (5.46,2.55);
    \fill[rgGreenFill, draw=rgGreen, line width=0.9pt] (4.82,2.12) rectangle (6.10,2.52);
    \fill[white, opacity=0.30] (4.96,2.20) rectangle (5.96,2.44);
    \node[label, font=\sffamily\tiny] at (5.46,1.73) {residual update};
  \end{scope}

  \draw[flow, rounded corners=3pt]
  (5.56,6.88) -- (5.56,5.96) -- (3.10,5.96) -- (3.10,5.59);
  \node[label, fill=white, font=\tiny] at (3.6,6.02) {$A^{(h)}$};
\end{tikzpicture}%
  }
  \caption{\model{} relation-graph construction and coordinate mixing.
    Node and block colors identify coordinates, and each two-color block denotes a source-target pair $j\to i$.
    Top: shared feature maps $\ell,r$ aggregate paired clean-context rows into signed, gated relation weights $A_{ij}^{(h)}$.
  Bottom: for the green target $i$, each source representation $\mathbf{r}_j^{(h)}$ is weighted by its incoming edge, and the center module sums and normalizes these contributions to produce $u_i^{(h)}$ for the residual update.}
  \label{fig:method-relation-graph}
\end{wrapfigure}
Let $\smash[t]{\hat z^{(k)}[i]}$ be coordinate $i$ of clean context sample $k$, standardized over the context.
Two learned maps $\ell,r:\bbR\to\bbR^R$, each shared across coordinates and context samples, take this single scalar as input and output $R$ nonlinear features.
Subtracting each feature's context mean gives $\smash[t]{\tilde\ell_i^{(k)}}$ and $\smash[t]{\tilde r_i^{(k)}}$.
The descriptor of coordinates $i$ and $j$ is then
\begin{equation}
  c_{ij}=\frac{1}{2n}\sum_{k=1}^{n}
  \left(\tilde\ell_i^{(k)}\odot\tilde r_j^{(k)}
  +\tilde r_i^{(k)}\odot\tilde\ell_j^{(k)}\right)\in\bbR^R.
  \label{eq:method-relation}
\end{equation}
Each component of $c_{ij}$ is an average of two empirical feature covariances, with the common index $k$ preserving the joint observations and symmetrization giving $c_{ij}=c_{ji}$.
Centering makes the population descriptor vanish under independence, although finite contexts introduce sampling fluctuations.

For graph head $h$, learned projection vectors $\smash[t]{w_s^{(h)}},\smash[t]{w_g^{(h)}}\in\bbR^R$ and scalar biases $\smash[t]{b_s^{(h)}},\smash[t]{b_g^{(h)}}$ convert the descriptor into a signed, gated edge.
With $\sigma$ the sigmoid and $\smash[t]{\mathbf{r}_j^{(h)}}$ the projected representation of coordinate $j$ at the same context, latent, or query position, the edge and aggregated message are
\begin{equation}
  A_{ij}^{(h)}=\tanh\!\left((w_s^{(h)})^\top c_{ij}+b_s^{(h)}\right)
  \sigma\!\left((w_g^{(h)})^\top c_{ij}+b_g^{(h)}\right),
  \qquad
  u_i^{(h)}=\frac{\sum_{j\ne i}A_{ij}^{(h)}\mathbf{r}_j^{(h)}}
  {\max\!\left(1,\sum_{j\ne i}|A_{ij}^{(h)}|\right)}.
  \label{eq:method-graph}
\end{equation}
The signed factor allows additive or subtractive contributions, while the gate controls their magnitude.
Normalizing by absolute edge mass bounds the aggregate contribution, and the lower bound of one preserves small updates when all edges are weak.
Initializing gates near zero suppresses cross-coordinate messages at the start of training; exact disconnection under independence is not enforced.
The feature maps and edge projections are learned through the velocity objective, so the edges represent pairwise associations useful for prediction without imposing a conditional-independence interpretation.
Sharing these maps across coordinates keeps the parameter count independent of $d$ and makes the graph equivariant to coordinate permutations.
\fi

\noindent \textbf{Attention pattern.}
We use separate attention operations for context, latent, and query representations.
Latent representations are updated by cross-attention from context tokens and by latent self-attention.
Query representations attend to the latent and context representations and do not attend to one another, so each query is processed independently conditional on the same context-derived states.
The shared attention and graph operations, together with the absence of positional encodings, preserve invariance to permutations of context samples and equivariance to permutations of coordinates.

\noindent \textbf{Caching.}
The relation graph, the context tokens after the input projection, and the induced latents depend only on $\ctx$, so they are computed once and reused across all queries for that distribution.
Although this is not strictly useful during training, it is essential for inference on large contexts, where recomputing these states for every query would otherwise be costly.

\subsection{Pretraining}
\label{sec:method:pretraining}

We train \model{} to infer the velocity field of an unseen distribution from its context.

\noindent \textbf{Pretraining corpus $\mathcal{T}$.}
Each episode is a synthetic joint distribution over $z=(x,y)\in\bbR^d$.
The corpus combines base distributions (Gaussian, Student's t) with copula mixtures, latent warps, nonparametric regressions, and manifolds to vary dependence structure, conditional behavior, and support geometry.
Copula mixtures follow the dependence-diversity construction of InfoAtlas \citep{hu2026infoatlas} and use additive-coupling bijections \citep{dinh2017realnvp} to enrich the sampled dependencies.
Latent warps transform Gaussian mixtures through shifts, folds, and rotations, nonparametric regressions generate responses from random Fourier-feature functions with noise, and manifolds generate near-singular supports.
\Cref{app:training:corpus} gives the full construction.

\noindent \textbf{Training objective.}
The identity in \Cref{eq:mi-velocity} reduces \gls{MI} estimation to differences between a joint velocity field and masked conditional fields, so pretraining focuses on predicting these fields.
The objective uses samples from each synthetic distribution and requires no \gls{MI} labels, which lets one model amortize velocity-field estimation across the corpus $\cT$.
At each step, we sample a distribution $p\sim\cT$, a context $\ctx$ of $n$ independent samples from $p$, a further clean sample $z_0\sim p$, a time $t$, Gaussian noise $\epsilon$, and a noising indicator $m$.
The indicator selects the coordinates that follow the interpolant in the query point $z_t=m\odot\bigl((1-t)z_0+t\epsilon\bigr)+(1-m)\odot z_0$, while the remaining coordinates stay clean as evidence (\Cref{fig:schematic}--a).
We then regress the model output on the flow-matching direction over the selected coordinates:
\begin{equation}
  \cL(\netpar)=\Ex_{p\sim\cT,\,\ctx,\,z_0\sim p,\,t\sim\cU[0,1],\,\epsilon,\,m}
  \left[\Norm{m}_1^{-1}\Norm{m\odot\bigl(\vmodel(z_t,t,m;\ctx)-(z_0-\epsilon)\bigr)}^2\right].
  \label{eq:objective}
\end{equation}
The factor $\Norm{m}_1^{-1}$ makes the loss a mean over noised coordinates, and the target
contains no $1/t$ factor, so the training loss has no singularity as $t\to0$.
The noising indicator is sampled to cover the three fields required by \Cref{eq:mi-velocity}, all coordinates noised for the joint field and one block noised while the other remains clean for each block-conditional field, together with random coordinate subsets for general partial observation; since $m$ is an input, one network represents all of these fields.

\needspace{8\baselineskip}
\section{Validation}
\label{sec:experiments}
\label{sec:exp:czyz}
\begin{figure}[t]
  \centering
  \includegraphics[width=\textwidth]{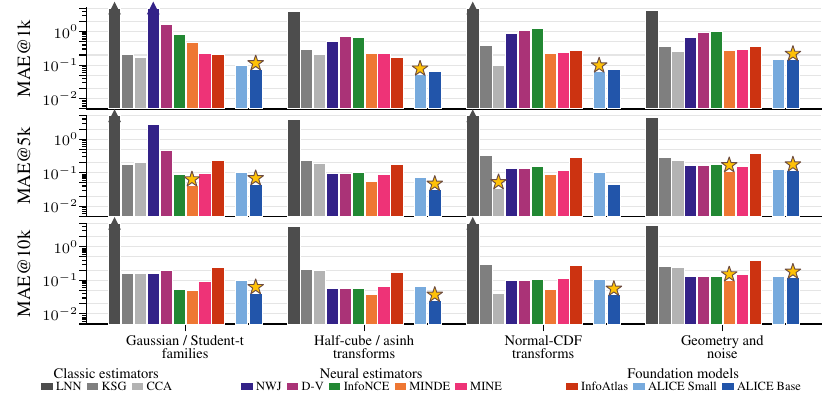}
  \caption{Category-wise MAE on the $40$-task \citet{czyz2023beyond} benchmark at matched data budgets.
    Rows show budgets of $1$k, $5$k, and $10$k samples, and columns group tasks by base family or transformation.
  Upward triangles mark values above the plotted range. Stars mark the lowest MAE.}%
  \label{fig:czyz-category-mae}
\end{figure}
We evaluate \model{} (Small and Base variants, see \Cref{app:alice-details}) on the $40$ tasks of the suite by \citet{czyz2023beyond}, which spans joint widths from $2$ to $100$ and provides a closed-form ground-truth \gls{MI}.

\noindent \textbf{Protocol.}
Every estimator receives the same data budget of $N\in\{1\text{k},5\text{k},10\text{k}\}$ samples per task.
\model{} splits the budget into $64$ query samples and a context of the remaining $N-64$ clean samples, and estimates \gls{MI} in-context; estimates are in nats, averaged over eight independent context draws, and \Cref{app:czyz} gives the full inference settings.
We compare against the following estimators: InfoAtlas \citep{hu2026infoatlas}, MINDE \citep{franzese2024minde}, MINE \citep{belghazi2018mine}, InfoNCE \citep{oord2019representation}, D-V \citep{donsker1975asymptotic}, NWJ \citep{nguyen2010nwj}, KSG \citep{kraskov2004estimating}, LNN \citep{gao2015efficient}, and CCA \citep{hotelling1936relations}.
InfoAtlas is the amortized baseline and is evaluated zero-shot on the same contexts as \model{}.
Neural estimators are trained (and tuned) separately on the $N$ samples of each distribution, and classic estimators are fit directly on them.

\noindent \textbf{Results.}
We report the mean absolute error (MAE) between estimate and ground truth over the $40$ tasks, in nats; \Cref{fig:czyz-category-mae} shows it per distribution group and \Cref{tab:czyz-summary} in \Cref{app:czyz} over the whole suite.
\model{} Base has the lowest MAE at every budget: $0.092$ nats at $1$k samples, $0.063$ at $5$k, and $0.060$ at $10$k, against $0.195$ for the best competitor at $1$k (CCA) and $0.070$ and $0.065$ for MINDE at $5$k and $10$k.
Within groups, \model{} has the lowest MAE in all four groups at $1$k and in the three Gaussian-based groups at $10$k.
\model{} Small ($19$M parameters, against $85$M for Base) has an MAE of $0.10$ nats at every budget: second-lowest at $1$k, and below MINE, D-V, NWJ, and the classic estimators at $5$k and $10$k.
The two sizes differ on the wide tasks alone: on the $7$ tasks of joint width $50$ and $100$ the MAE of Base falls from $0.153$ nats at $1$k to $0.086$ at $10$k while that of Small rises from $0.197$ to $0.241$; on the $33$ narrower tasks they are within $0.02$ nats of each other (\Cref{tab:czyz-width}).
InfoAtlas has an MAE between $0.25$ and $0.28$ nats at every budget.
\begin{figure}[t]
  \centering
  \includegraphics[width=0.42\textwidth]{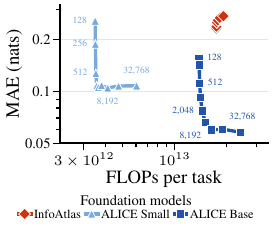}
  \caption{MAE against inference FLOPs per task and seed on the $40$-task Czy\.{z} benchmark; labels show \model{} total budgets ($64$ queries).
  }
  \label{fig:czyz-flops}
\end{figure}

\noindent \textbf{Small-data regime.}
The trained estimators need the step from $1$k to $5$k/$10$k samples to become practically usable: the MAE of MINDE falls from $0.353$ to $0.070$ nats, that of InfoNCE from $0.887$ to $0.123$, and that of D-V from $1.229$ to $0.279$, while NWJ diverges at $1$k and is still at $1.261$ nats at $5$k.
At a $1$k budget, \model{} Base outperforms every competitor by a factor of at least two, and is more accurate than MINE, InfoNCE, D-V, and NWJ at $5$k.

\noindent \textbf{Inference compute.}
\Cref{fig:czyz-flops} compares operator-level FLOPs per task for \model{} and InfoAtlas.
Here, \model{} uses a budget $N$ from $128$ to $32768$, split into $N-64$ context samples and $64$ query samples, with $64$ time draws per query; InfoAtlas uses contexts from $128$ to $8192$ samples.
For each task, we average the \gls{MI} estimates over $32$ seeds for \model{} and eight seeds for InfoAtlas, then compute MAE across the $40$ tasks.
With a budget of $8192$, Small and Base achieve MAEs of $0.105$ and $0.060$ nats for $4.1\cdot10^{12}$ and $1.6\cdot10^{13}$ FLOPs per task, respectively, compared with $0.274$ nats for $1.9\cdot10^{13}$ FLOPs for InfoAtlas.
Increasing the budget to $32768$ gives Base an MAE of $0.058$ nats for $2.4\cdot10^{13}$ FLOPs per task.

\section{Applications}
\label{sec:applications}
In this section we showcase \model{} on three scientific applications: biology, genetics and neuroscience.
In these applications, datasets include discrete distributions, sequences of tokens, and time-series of real numbers: not only \model{}'s training corpus never encountered such distribution types, the model itself has never been trained on such data.

\subsection{Analysis of multivariate single-cell signaling responses}
\label{sec:exp:sc-signalling}
Cellular signaling can be naturally described in information-theoretic terms: an extracellular stimulus $X$ is transmitted through a stochastic biochemical network to a cellular response $Y$; $\MI(X;Y)$ measures how reliably a cell can infer the stimulus, and the channel capacity is the maximum of $\MI(X;Y)$ over input distributions \citep{nurse2008life,brennan2012information,jetka2018information}. We use \model{} to analyze the NF-$\mathcal{K}$B pathway, which responds to the inflammatory cytokine TNF-$\alpha$~\citep{jetka2019information}: $15{,}632$ cells stimulated with one of $m=11$ TNF-$\alpha$ concentrations ($0$ to $100$ ng/ml) and imaged for $2$ h at $3$-min resolution (40 frames in total), the response being the nuclear-to-cytoplasmic NF-$\mathcal{K}$B ratio.

\noindent \textbf{Protocol.} \gls{MI} decomposes as $\MI(X;Y)=\sum_i p_i D_i$, where $D_i$ is the divergence of the response distribution at dose $i$ from the mixture over doses. We estimate every $D_i$ from the velocity difference between a joint context and a label-shuffled context (see \Cref{app:mi-variants:cond} for a detailed formulation); \model{} returns both fields, with no training. From the $D_i$ we obtain the \gls{MI} at uniform input, the capacity by Blahut--Arimoto ascent, and the probability of correct discrimination (PCD) of every pair of doses, bracketed by the Jensen--Shannon divergence
(see \Cref{app:slemi-details} for details).
This is a small-data regime: after the filtering of the reference analysis, a dose has between $536$ and $1{,}307$ cell samples, and each $D_i$ is estimated from a context of $1{,}024$ cells.

\begin{figure}[t]
  \centering
  \begin{tikzpicture}
    \node[inner sep=0pt] (nfkbfigure) {\includegraphics[width=\textwidth]{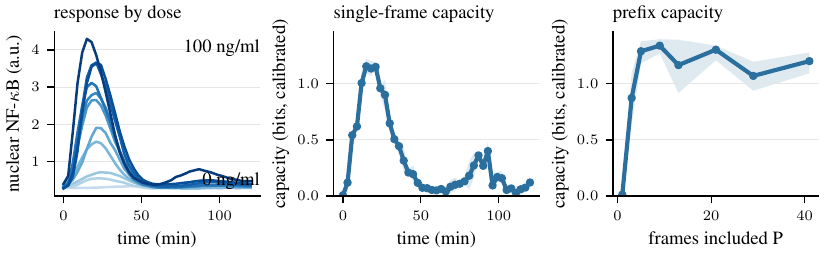}};
    \node[anchor=north] at ([xshift=0.190\textwidth,yshift=2pt]nfkbfigure.south west) {(a)};
    \node[anchor=north] at ([xshift=0.529\textwidth,yshift=2pt]nfkbfigure.south west) {(b)};
    \node[anchor=north] at ([xshift=0.868\textwidth,yshift=2pt]nfkbfigure.south west) {(c)};
  \end{tikzpicture}\\[-10pt]
  \caption{In-context analysis of the NF-$\mathcal{K}$B dose channel.
    (a) Median nuclear NF-$\mathcal{K}$B response per TNF-$\alpha$ dose, with the interquartile band at the two extreme doses.
    (b) Capacity of each single frame and (c) of the prefix of frames from minute $0$, with the envelope over seeds.
  The pairwise discrimination matrices are in \Cref{app:slemi-details}.}
  \label{fig:nfkb}
\end{figure}

\noindent \textbf{Results.} \Cref{fig:nfkb} shows the three findings of the analysis.
First, the information carried by a single frame follows the NF-$\mathcal{K}$B translocation: the capacity of one frame rises with the first nuclear peak, reaches about $1.2$ bits at minutes $15$ to $21$, and decays, with a smaller second rise at the second peak (panel (b)).
Second, the trajectory carries more than any single frame: the capacity of a prefix of frames saturates by frame $12$ (panel (c)), so the dose is encoded in the timing and amplitude of the first response peak, and the prefix of frames reaches a capacity of about $1.3$ bits against $1.2$ for the best single frame.
Third, dynamics separate the high doses: from a single frame, pairs of doses at or above $0.5$ ng/ml are close to indistinguishable (mean PCD $0.56$, where chance is $0.5$), and the trajectory raises their PCD to $0.67$, while the low doses are separable from a single frame already (\Cref{fig:nfkb-pcd} in \Cref{app:slemi-details}).
The three findings, the position and height of the capacity peak, and the discrimination pattern are obtained from a single model never trained on biological data, and not only corroborate those of \citet{jetka2019information}, but overcome the limiting assumptions required approximate \gls{MI} by fitting a linear classifier per analysis, which might not hold in more complex scenarios.

\subsection{Promoter Identification}
\label{sec:exp:promoter}
Regulatory motifs are short DNA patterns that control gene expression, and \gls{MI}-based methods locate them by measuring the dependence between the content of a regulatory region and the expression it drives \citep{elemento2007universal, rao2007motif}. We use \model{} to locate the \textsc{tata-box}, a core promoter motif whose preferred position in \textit{Arabidopsis thaliana} lies $26$ to $39$ bases upstream of the transcription start site (TSS) \citep{bernard2010tc}, on the promoter and non-promoter sequences of \citet{umarov2017recognition} from the \textsc{epd} database \citep{dreos2013epd}: $1{,}497$ sequences per class after balancing, each of $251$ bases spanning positions $-200$ to $+50$ around the TSS, so the promoter label $X$ is uniform and every \gls{MI} value is bounded by $H(X)=\ln 2$ nats.

\noindent \textbf{Protocol.} For a window of $L\in\{4,6\}$ bases starting at position $s$, we estimate $\MI(X;Y)$ between the promoter label $X$ and the window content $Y$. Sliding the window along the sequence produces a \gls{MI} profile: windows on segments unrelated to promoter status yield near zero values, and windows overlapping an informative motif obtain high values. Each window is scored on its own, so a motif is detected even when another motif is correlated with it.
Bases are input in the velocity fields of \Cref{eq:mi-velocity} through a fixed injective embedding of one real coordinate per base, which preserves $\MI(X;Y)$ exactly, and the blocks of dimension $1$ and $L$ are handled natively by \model{}. For every window position, \model{} conditions on a context of $1{,}024$ label--window pairs and evaluates on $1{,}024$ held-out pairs. This is a small-data regime: the whole dataset holds $2{,}994$ sequences, an order of magnitude below the training sets that neural estimators require, and \model{} produces each window from $1{,}024$ of them (see \Cref{app:promoter-details} for additional details).

\begin{figure}[t]
  \centering
  \begin{minipage}[t]{0.38\textwidth}
    \centering
    \includegraphics[width=\linewidth]{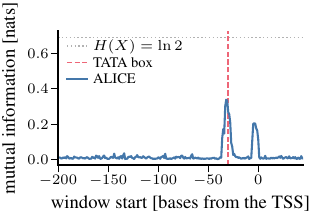}
    \captionof{figure}{\gls{MI} between the promoter label and a sliding window of $L=6$, against the offset of the window start from the TSS; the dotted line marks the $\ln 2$ ceiling of the label entropy.}
    \label{fig:tata-box-search}
  \end{minipage}\hfill
  \begin{minipage}[t]{0.60\textwidth}
    \centering
    \includegraphics[width=\linewidth]{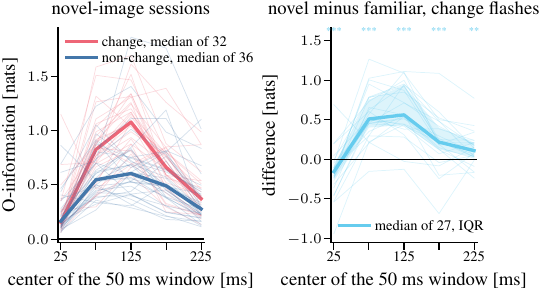}
    \captionof{figure}{
    \acrshort{O-information} of the six visual areas, one estimate per session.
    Left: novel-image sessions, one thin line per mouse and flash type, medians in bold.
    Right: novel minus familiar session for change flashes, with the median, the interquartile band, and a signed-rank test per window ( $**$: $p<0.01$, $***$: $p<0.001$).}
    \label{fig:soi-mice}
  \end{minipage}
  \vspace{-12pt}
\end{figure}

\noindent \textbf{Results.} \Cref{fig:tata-box-search} shows the profile for $L=6$. The estimate is flat and near zero over the $200$ bases upstream of the motif and over the $50$ bases downstream of the TSS, rises sharply over the \textsc{tata-box} band, with its maximum at $30$ to $32$ bases upstream of the TSS for both window lengths, and shows a second, smaller maximum on the TSS itself, which corresponds to the initiator element. The maximum lies inside the documented \textsc{tata-box} band, which serves as a positive control for localization.
The existing neural competitor for this task is \textsc{Info-SEDD} \citep{foresti2026infosedd}, a discrete-diffusion estimator trained on this dataset, which locates the \textsc{tata-box} with windows realized by masking. Its profile scans positions $-60$ to $-25$ and reports a single peak, with a bias floor substantially higher than  \model{} (see \Cref{app:promoter-details} for additional results).

\subsection{Brain Region Activity Patterns}
\label{sec:exp:soi-mice}
We use \model{} to estimate the \gls{O-information} \citep{Rosas2019QuantifyingHI} of six visual-cortex areas of mice performing a visual change-detection task, on the Visual Behavior Neuropixels recordings of the Allen Institute \citep{allen-inst}, first analyzed by \citet{venkatesh2023gaussian}. \citet{bounoua2024} estimated the \acrshort{O-information} of these recordings with a score-based estimator whose networks are trained on all sessions pooled together; we obtain the estimate from our  frozen \model{} checkpoint, with no training on neural data, for every single session. 
For $N$ random variables, the quantity $\Omega=\mathrm{TC}-\mathrm{DTC}$ is the difference between the total correlation and the dual total correlation; a positive value indicates that redundancy dominates the interactions, that is, the variables carry overlapping information, and a negative value indicates that synergy dominates. Both terms are time integrals of squared velocity differences of the form of \Cref{eq:estimator}, between the joint field and the concatenation of the $N$ marginal fields ($\mathrm{TC}$) or of the $N$ conditional fields ($\mathrm{DTC}$), so one checkpoint provides all the necessary fields (see \Cref{app:soi-mice-details} for details and validation).

\noindent \textbf{Protocol.} A mouse watches a natural image shown for $250$ ms every $750$ ms; the image repeats for several presentations (flashes) and then changes. We use the $72$ sessions selected by \citet{bounoua2024}: $36$ mice, each recorded on one day with a familiar image set and on another day with a novel one. For every flash, spikes are counted in five consecutive $50$ ms windows and averaged over the units of each of six visual areas, so one flash is one draw of six variables and each window is one system of joint width six; change flashes and non-change flashes (repeats) are analyzed separately. Each session is estimated separately: \model{} conditions on a context of $128$ flashes of the session and evaluates on the remaining flashes. Paired comparisons follow between the two flash types of a session and between the two sessions of a mouse. This is a small-data regime: a session provides about $150$ to $200$ independent flashes per flash type (\Cref{app:soi-mice-details}), too few to train an estimator per session, so \citet{bounoua2024} pool all $72$ sessions, and obtain no per-session estimate.

\noindent \textbf{Results.} \Cref{fig:soi-mice} shows how the six areas share information after a flash. In novel-image sessions the \acrshort{O-information} is positive in every window and every mouse: the areas carry overlapping information. This redundancy is low at flash onset, maximal at $100$ to $150$ ms, when the visual response has reached all six areas, and decays afterwards. A change of image produces more redundancy than a repeat: in the peak window the within-session difference is positive in $31$ of $32$ sessions, and it is absent in the first window, before the visual response reaches the cortex. The same comparison in familiar-image sessions gives no difference at the peak and a reversed sign in the late windows, and the two sessions of each mouse (\Cref{fig:soi-mice}, right) show that novelty raises the redundancy of the response in $25$ of $27$ animals for change flashes and in $28$ of $36$ for non-change flashes. We observe that a novel image drives a stimulus signal that is broadcast across the visual areas, and that this shared component fades with familiarity. The pooled result of \citet{bounoua2024}, a larger \acrshort{O-information} after a change flash, therefore holds for novel images and in one window only, and the dependence on experience is a new finding of this work: the pooled analysis merges the two days of every mouse and cannot separate them. \model{} produced the $675$ per-session systems in few forward passes of one frozen model; a trained-per-system estimator would require $675$ training runs (\Cref{app:soi-mice-details}).

\section{Conclusion and Limitations}
\label{sec:conclusion}
\label{sec:limitations}

We presented \model{}, the first foundation model for \gls{MI} estimation, whose zero-shot accuracy matches that of existing estimators trained per distribution. \model{} is a single Transformer, trained once as an in-context rectified-flow velocity field, that provides the joint and block-conditional fields of an unseen distribution from samples alone. A known identity uses such fields to estimate \gls{MI}.

\model{} was trained exclusively on synthetic data and produced \gls{MI} estimates zero-shot, on distributions and data types absent from its training corpus.
On the ``Beyond Normal'' benchmark, it has the lowest error of all estimators at matched budgets of $1$k, $5$k, and $10$k samples, although every competitor is trained and tuned on each distribution; at $1$k samples, its error is lower by a factor of at least two.
In three scientific applications, \model{} reproduces the findings of dedicated estimators in the small-data regime typical of biology and neuroscience, with a few forward passes per estimate.

We believe \model{} to be an invaluable asset for scientific discoveries across fields, that materializes as a local model that can be run ``plug-and-play'' on modest hardware.

\noindent \textbf{Limitations.}
Our implementation is research code, and it has not been thoroughly optimized.
Model size and training budget can be increased, training corpus can be augmented with higher dimensional data, maximum context size at training time can be increased, which might yield even better results in our benchmark validation.

\section*{Acknowledgments}
This project was provided with AI computing and storage resources by GENCI at IDRIS thanks to the grant AD011018178 on the supercomputer Jean Zay's H100 partition. The Authors acknowledge the support of CIRCALIS AI-HPC facility at EURECOM, with partial funding from French Region Sud.

\bibliographystyle{plainnat}
\bibliography{references,references_fminfo}

\newpage

\beginappendix
\crefalias{section}{appendix}
\crefalias{subsection}{subappendix}
\crefalias{subsubsection}{subsubappendix}
\section{Related Work}
\label{sec:related}

We here expand on the closest prior works: per-distribution neural estimators, in-context inference with Transformers, and amortized estimation from synthetic corpora.

\paragraph{Variational MI estimation.}
Neural lower bounds (MINE \citep{belghazi2018mine}, InfoNCE/CPC
\citep{oord2019representation}, NWJ \citep{nguyen2010nwj}, and the bias/variance study of
SMILE \citep{song2020smile} and \citet{poole2019variational}) optimize a bound
per distribution and are the standard against which diffusion estimators are measured.
Classic $k$NN estimators \citep{kraskov2004estimating} remain strong nonparametric
baselines, and MIENF \citep{butakov2024mienf} fits normalizing flows that separate the copula from the marginals.

\paragraph{Diffusion and information.}
MINDE \citep{franzese2024minde} expresses MI through a score-difference integral;
information-theoretic diffusion \citep{kong2023infodiffusion} and the MMSE-gap
estimator \citep{mmg2025} develop the denoiser view; the velocity-form relative
entropy of \citet{wang2026relative} provides the basic estimator we use; InfoBridge \citep{kholkin2026infobridge} replaces score matching by bridge matching and obtains an exact drift-difference identity. All connect to the I-MMSE relation \citep{guo2005immse}
and the likelihood weighting of \citet{song2021maxlik}. These are the closest
prior estimators based on \textit{diffusion models}; each trains a network per distribution, which is the step
\model{} amortizes.

\paragraph{Foundation models and in-context inference.}
Amortized in-context inference is realized by TabPFN \citep{hollmann2023tabpfn}
for tabular prediction, and that transformers learn function classes in-context
is established broadly by \citet{garg2022what}; \model{} adapts this inference mechanism to information estimation. Closest in the mechanism,
\citet{smart2025incontext} study in-context denoising with one-layer transformers
and its connection to associative memory \citep{ramsauer2021hopfield}. The induced-latent context bottleneck follows
set-attention and latent-array designs \citep{lee2019set,jaegle2021perceiver}.

\paragraph{Amortized estimation and training corpora.}
The zero-shot claim depends on a broad synthetic training distribution.
We extend the dependence-diversity design of InfoAtlas \citep{hu2026infoatlas}, random copula mixtures with coupling-flow \citep{dinh2017realnvp} augmentation, in the spirit of scaling-law-driven pretraining \citep{kaplan2020scaling}.
InfoAtlas is the closest amortized estimator, and \model{} differs from it in three respects.
First, InfoAtlas trains a hypernetwork that outputs the weights of a separate variational estimator for each distribution.
\model{} keeps a single network and conditions it on the samples through attention, so no distribution-specific parameters are produced.
Second, the coordinate-shared architecture of \Cref{sec:method:arch} is applied at any joint width, including widths absent from the corpus, while the weights a hypernetwork emits have a fixed shape and bind the estimator to the joint widths it was trained on.
Third, \model{} estimates \gls{MI} through the velocity-difference identity of \Cref{eq:mi-velocity}, which is exact for the true fields and involves no variational bound.
Variational estimators output lower bounds, and a high-confidence lower bound above $\log N$ nats cannot be certified from $N$ samples \citep{mcallester2020formal}.

\section{Mutual Information as a Velocity-Difference Integral}
\label{app:mi-variants}

This appendix proves \Cref{eq:mi-velocity} in the notation of \Cref{sec:introduction,sec:mi-from-kl}. \Cref{thm:kl-velocity} expresses the KL divergence between two densities as a weighted time integral of the squared difference of their velocity fields, and \Cref{thm:mi-velocity} turns it into the joint-versus-conditional form the estimator uses. Results of the same kind exist in the I-MMSE relation of \citet{franzese2024minde,guo2005immse,wang2026relative}.

\subsection{Velocity and score}
\label{app:mi-variants:fields}

As in \Cref{sec:introduction}, for a density $p$ on $\bbR^d$, $Z_0\sim p$, and $\epsilon\sim\Normald{0}{I}$ independent of $Z_0$, the interpolant $Z_t=(1-t)Z_0+t\epsilon$ has density $p_t$, and $\vel_t(z)=\Ex_p[Z_0-\epsilon\mid Z_t=z]$ is the velocity field of $p$. When two densities are compared we mark the density as a superscript, $\vel^p_t$ and $\vel^q_t$. Throughout, densities are assumed smooth with finite second moments and, for $t>0$, with Gaussian tails, so that integrals can be differentiated under the sign and boundary terms of integrations by parts vanish; for $t>0$ every $p_t$ is a Gaussian convolution and has these properties.

\begin{lemma}[Velocity and score]
\label{lem:tweedie}
For $t\in(0,1)$ and every $z$,
\begin{equation}
  \nabla\log p_t(z)=\frac{(1-t)\,\vel_t(z)-z}{t}.
  \label{eq:vsd}
\end{equation}
\end{lemma}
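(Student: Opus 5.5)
The plan is to express both sides of \Cref{eq:vsd} through the conditional mean of the noise, $\Ex[\epsilon\mid Z_t=z]$, and then eliminate it. The right-hand side comes from the affine constraint linking $Z_0$, $\epsilon$ and $Z_t$. The left-hand side comes from Tweedie's formula for the Gaussian-convolved density $p_t$.

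\textbf{Step 1 (Tweedie).} For $t\in(0,1)$, given $Z_0=z_0$ the point $Z_t$ is Gaussian with mean $(1-t)z_0$ and covariance $t^2 I$. Hence
\[
  p_t(z)=\int p(z_0)\,\mathcal N\!\left(z;(1-t)z_0,\,t^2 I\right)\d{z_0}.
\]
Differentiating under the integral sign, as permitted by the standing regularity assumptions, gives
\[
  \nabla p_t(z)=\int p(z_0)\,\frac{(1-t)z_0-z}{t^2}\,\mathcal N\!\left(z;(1-t)z_0,\,t^2 I\right)\d{z_0}.
\]
Dividing by $p_t(z)$ turns the integral into a posterior expectation:
\[
  \nabla\log p_t(z)=\frac{(1-t)\Ex[Z_0\mid Z_t=z]-z}{t^2}.
\]
Since $tZ_0-t\epsilon$ is exactly what Step 2 uses, it is convenient to rewrite this via $\epsilon=(Z_t-(1-t)Z_0)/t$, which gives
\[
  \nabla\log p_t(z)=-\frac{\Ex[\epsilon\mid Z_t=z]}{t}.
\]

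\textbf{Step 2 (velocity in terms of the noise mean).} From $Z_t=(1-t)Z_0+t\epsilon$ we get $Z_0=(Z_t-t\epsilon)/(1-t)$. Conditioning on $Z_t=z$ and using linearity,
\[
  \vel_t(z)=\Ex[Z_0-\epsilon\mid Z_t=z]=\frac{z-t\Ex[\epsilon\mid Z_t=z]}{1-t}-\Ex[\epsilon\mid Z_t=z]=\frac{z-\Ex[\epsilon\mid Z_t=z]}{1-t}.
\]
This can be solved for the noise mean: $\Ex[\epsilon\mid Z_t=z]=z-(1-t)\vel_t(z)$.

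\textbf{Step 3 (combine).} Substituting the expression from Step 2 into Step 1 yields
\[
  \nabla\log p_t(z)=\frac{(1-t)\vel_t(z)-z}{t},
\]
which is \Cref{eq:vsd}.

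The only delicate point is justifying differentiation under the integral in Step 1. It holds because the Gaussian kernel with variance $t^2>0$ and its gradient are dominated by integrable functions when $p$ has finite second moments. The restriction $t\in(0,1)$ ensures that neither $t$ nor $1-t$ vanishes in the divisions.
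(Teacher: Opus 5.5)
Your proposal is correct and follows the paper's proof essentially step for step: you differentiate the Gaussian convolution to get $\nabla\log p_t(z)=-\Ex[\epsilon\mid Z_t=z]/t$, then condition the affine relation $z=(1-t)Z_0+t\epsilon$ to obtain $\Ex[\epsilon\mid Z_t=z]=z-(1-t)\vel_t(z)$. The only cosmetic difference is that you solve for $Z_0$ and substitute, whereas the paper subtracts $(1-t)$ times the definition of $\vel_t$ from the conditioned relation; this is the same linear elimination.
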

\begin{proof}
Given $Z_0$, the noised point is Gaussian, $Z_t\sim\Normald{(1-t)Z_0}{t^2I}$, so $p_t(z)=\Ex_p[\varphi_t(z-(1-t)Z_0)]$ with $\varphi_t$ the density of $\Normald{0}{t^2I}$. Differentiating under the expectation and dividing by $p_t(z)$,
\begin{equation*}
  \nabla\log p_t(z)=-\frac{1}{t^2}\,\Ex_p\!\left[z-(1-t)Z_0\mid Z_t=z\right]=-\frac{1}{t}\,\Ex_p\!\left[\epsilon\mid Z_t=z\right],
\end{equation*}
since $z-(1-t)Z_0=t\epsilon$ when $Z_t=z$. Taking conditional expectations in $z=(1-t)Z_0+t\epsilon$ gives $z=(1-t)\Ex_p[Z_0\mid Z_t=z]+t\,\Ex_p[\epsilon\mid Z_t=z]$; subtracting $(1-t)$ times the definition of $\vel_t(z)$ yields $\Ex_p[\epsilon\mid Z_t=z]=z-(1-t)\vel_t(z)$, and the claim follows.
\end{proof}

At $t=0$ the interpolant is the clean sample and $\Ex_p[\epsilon\mid Z_0]=0$ by independence, so $\vel_0(z)=z$ for every density: all velocity fields agree at the boundary.

\begin{lemma}[Continuity equation]
\label{lem:continuity}
For $t\in(0,1)$, $\partial_t p_t=\nabla\!\cdot\!\left(p_t\,\vel_t\right)$.
\end{lemma}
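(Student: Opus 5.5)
The plan is to verify the identity with a test function. For a smooth compactly supported $f$ we compute $\partial_t\Ex[f(Z_t)]$ in two ways and compare. The interpolant is $Z_t=(1-t)Z_0+t\epsilon$, so pathwise $\frac{d}{dt}Z_t=\epsilon-Z_0$. Differentiating under the expectation, which the standing Gaussian-tail assumptions allow for $t\in(0,1)$, gives
\begin{equation*}
  \frac{d}{dt}\Ex[f(Z_t)]=\Ex\!\left[\nabla f(Z_t)\cdot(\epsilon-Z_0)\right]
  =-\Ex\!\left[\nabla f(Z_t)\cdot\Ex[Z_0-\epsilon\mid Z_t]\right]
  =-\int p_t(z)\,\nabla f(z)\cdot\vel_t(z)\,\d{z}.
\end{equation*}
The middle step conditions on $Z_t$ and uses the tower property; the last step is just the definition of $\vel_t$.

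Next we integrate by parts. The boundary term vanishes because $f$ has compact support, and this leaves $\int f(z)\,\nabla\!\cdot\!(p_t\vel_t)(z)\,\d{z}$. On the other side, $\frac{d}{dt}\int f\,p_t=\int f\,\partial_tp_t$. Since the two expressions agree for every test function $f$, and both $\partial_tp_t$ and $\nabla\!\cdot(p_t\vel_t)$ are continuous for $t>0$ (each $p_t$ is a Gaussian convolution), they are equal pointwise.

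The smoothness of $p_t\vel_t$ deserves a check. One route is the identity $p_t\vel_t=p_t\bigl(z+t\nabla\log p_t\bigr)/(1-t)=\bigl(z\,p_t+t\nabla p_t\bigr)/(1-t)$, which follows from \Cref{lem:tweedie}; this expression is manifestly smooth. The same identity also gives a direct alternative proof. Writing $p_t(z)=\Ex[\varphi_t(z-(1-t)Z_0)]$, we can differentiate in $t$ using $\partial_t\varphi_t=t\Delta\varphi_t$ (heat equation in the variance $t^2$) together with the chain rule through $(1-t)Z_0$. The result is $\partial_tp_t=\nabla\!\cdot(z p_t)/(1-t)+\frac{t}{1-t}\Delta p_t$, and this matches $\nabla\!\cdot(p_t\vel_t)$ by the identity above. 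Showing that these two terms match after the $(1-t)$ rescaling is the only step where care is needed; the test-function route avoids it, so I would present that route and keep this computation as a consistency check.
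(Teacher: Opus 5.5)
Your proposal is correct and follows the paper's proof essentially step for step: you take the pathwise derivative $\epsilon-Z_0$, use the tower property to replace it by $-\vel_t(Z_t)$, and integrate by parts against a compactly supported test function. Your extra remarks, pointwise equality from continuity and the direct heat-equation check (whose formula $\partial_t p_t=\nabla\!\cdot(zp_t)/(1-t)+\frac{t}{1-t}\Delta p_t$ is indeed correct), go slightly beyond what the paper writes but are not needed.
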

\begin{proof}
Along each sample path $\frac{\mathrm{d}}{\mathrm{d}t}Z_t=\epsilon-Z_0$. For a smooth compactly supported test function $\phi$, the tower property and the definition of $\vel_t$ give
\begin{equation*}
  \frac{\mathrm{d}}{\mathrm{d}t}\,\Ex_p[\phi(Z_t)]=\Ex_p\!\left[\nabla\phi(Z_t)\cdot(\epsilon-Z_0)\right]
  =-\Ex_p\!\left[\nabla\phi(Z_t)\cdot\vel_t(Z_t)\right]
  =-\int\nabla\phi\cdot\vel_t\,p_t\,\d z .
\end{equation*}
The left side equals $\int\phi\,\partial_t p_t$, and integrating the right side by parts gives $\int\phi\,\nabla\!\cdot(p_t\vel_t)$.
\end{proof}

\subsection{KL divergence in velocity form}
\label{app:mi-variants:weight}

\begin{theorem}[KL divergence as a velocity-difference integral]
\label{thm:kl-velocity}
For two densities $p$ and $q$ on $\bbR^d$ with $\KL{p}{q}<\infty$,
\begin{equation}
  \KL{p}{q}=\int_0^1\frac{1-t}{t}\,
  \Ex_{z\sim p_t}\!\left[\Norm{\vel^p_t(z)-\vel^q_t(z)}^2\right]\d{t}.
  \label{eq:kl-velocity}
\end{equation}
\end{theorem}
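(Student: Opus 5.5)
Differentiate $f(t)=\KL{p_t}{q_t}$ along the common rectified-flow path, then integrate from $t=0$ to $t=1$. At $t=1$ both $p_1$ and $q_1$ equal $\Normald{0}{I}$, so $f(1)=0$. At $t=0$, $f(0)=\KL{p}{q}$. Hence
\[
\KL{p}{q}=-\int_0^1 f'(t)\,\d t .
\]
It therefore suffices to show that, for $t\in(0,1)$,
\[
f'(t)=-\frac{1-t}{t}\,\Ex_{p_t}\!\left[\Norm{\vel^p_t-\vel^q_t}^2\right].
\]

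\textbf{Computing $f'(t)$.} Write $f(t)=\int p_t\log(p_t/q_t)$ and differentiate under the integral. This gives
\[
f'(t)=\int \partial_t p_t\,\log\frac{p_t}{q_t}+\int \partial_t p_t-\int \frac{p_t}{q_t}\,\partial_t q_t .
\]
The middle term vanishes because mass is conserved. Next apply \Cref{lem:continuity} to both densities and integrate by parts; the regularity assumptions stated before \Cref{lem:tweedie} kill the boundary terms. This yields
\[
f'(t)=-\int p_t\,\vel^p_t\cdot\nabla\log\frac{p_t}{q_t}
+\int q_t\,\vel^q_t\cdot\nabla\frac{p_t}{q_t}.
\]
Using $q_t\nabla(p_t/q_t)=p_t\nabla\log(p_t/q_t)$, the two terms combine into
\[
f'(t)=-\Ex_{p_t}\!\left[(\vel^p_t-\vel^q_t)\cdot\nabla\log\frac{p_t}{q_t}\right].
\]

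\textbf{Substituting the scores.} By \Cref{lem:tweedie}, each score equals $((1-t)\vel_t-z)/t$. The $-z/t$ parts cancel in the difference, so
\[
\nabla\log p_t-\nabla\log q_t=\frac{1-t}{t}\,(\vel^p_t-\vel^q_t).
\]
Substituting this into the expression for $f'(t)$ gives exactly the claimed formula, and integrating over $t\in(0,1)$ proves \Cref{eq:kl-velocity}.

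\textbf{Main obstacle: the endpoints.} The computation above is only valid on the open interval, so the endpoints need separate justification.
\begin{itemize}
\item[(i)] Near $t=0$ we need $f(t)\to\KL{p}{q}$ as $t\to0^+$. The data-processing inequality gives $f(t)\le\KL{p}{q}$, and lower semicontinuity of KL under the weak convergence $p_t\to p$, $q_t\to q$ gives $\liminf_{t\to0^+} f(t)\ge\KL{p}{q}$; together these give the limit. Since the integrand is nonnegative, monotone convergence then lets us pass to the limit inside $\int_s^1(\cdot)\,\d t$ as $s\to0$, even though the weight $(1-t)/t$ is singular there.
\item[(ii)] Near $t=1$ we need $f(t)\to0$. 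I would get this from convexity of KL applied to the Gaussian mixture representation $p_t=\Ex_p[\Normald{(1-t)Z_0}{t^2I}]$ (and likewise for $q_t$); the finite second moments then make the bound go to zero as $t\to1$.
\end{itemize}
The differentiation under the integral sign on $(0,1)$ is covered by the stated standing assumptions of Gaussian tails for $t>0$.
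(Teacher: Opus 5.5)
Your proof is correct and follows the paper's argument step for step. You differentiate $\KL{p_t}{q_t}$, use the continuity equation and integration by parts to reach $-\Ex_{p_t}\bigl[(\vel^p_t-\vel^q_t)\cdot\nabla\log(p_t/q_t)\bigr]$, and then substitute the velocity--score identity so that the $-z/t$ terms cancel. Your endpoint analysis supplies rigor that the paper leaves implicit when it simply writes $\KL{p}{q}=-\int_0^1F'(t)\,\mathrm{d}t$: at $t\to0^+$ you use data processing plus lower semicontinuity, with monotone convergence handling the singular weight. At $t\to1^-$ you use joint convexity of KL over the product coupling, which bounds $\KL{p_t}{q_t}$ by $\frac{(1-t)^2}{2t^2}\Ex\Norm{Z_0-Z_0'}^2\to0$ for independent $Z_0\sim p$, $Z_0'\sim q$.
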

\begin{proof}
Let $F(t)=\KL{p_t}{q_t}=\int p_t\log(p_t/q_t)$. At $t=1$ both interpolants equal $\epsilon$, so $p_1=q_1=\Normald{0}{I}$ and $F(1)=0$; at $t=0$, $F(0)=\KL{p}{q}$. Hence $\KL{p}{q}=-\int_0^1F'(t)\,\d t$, and it remains to compute $F'$. Since $\int\partial_t p_t=0$,
\begin{equation*}
  F'(t)=\int\partial_t p_t\,\log\frac{p_t}{q_t}-\int\frac{p_t}{q_t}\,\partial_t q_t .
\end{equation*}
Substituting \Cref{lem:continuity} for $p_t$ and $q_t$ and integrating by parts,
\begin{align*}
  \int\nabla\!\cdot(p_t\vel^p_t)\log\frac{p_t}{q_t}&=-\int p_t\,\vel^p_t\cdot\nabla\log\frac{p_t}{q_t},\\
  \int\frac{p_t}{q_t}\,\nabla\!\cdot(q_t\vel^q_t)&=-\int q_t\,\vel^q_t\cdot\nabla\frac{p_t}{q_t}=-\int p_t\,\vel^q_t\cdot\nabla\log\frac{p_t}{q_t},
\end{align*}
so that
\begin{equation*}
  F'(t)=-\Ex_{z\sim p_t}\!\left[\bigl(\vel^p_t(z)-\vel^q_t(z)\bigr)\cdot\bigl(\nabla\log p_t(z)-\nabla\log q_t(z)\bigr)\right].
\end{equation*}
By \Cref{lem:tweedie}, $\nabla\log p_t-\nabla\log q_t=\frac{1-t}{t}(\vel^p_t-\vel^q_t)$, since the term $-z/t$ is common to both. Therefore $F'(t)=-\frac{1-t}{t}\,\Ex_{z\sim p_t}\Norm{\vel^p_t(z)-\vel^q_t(z)}^2$, and integrating over $[0,1]$ gives \Cref{eq:kl-velocity}.
\end{proof}

The weight $(1-t)/t$ diverges as $t\to0$, and the integral is finite because both fields converge to the identity at the boundary. Substituting \Cref{eq:vsd} instead expresses the same integral as a score-difference integral with weight $t/(1-t)$; the velocity form is the one whose integrand is bounded at every $t$, which is why our model predicts velocities.

\subsection{Mutual information}
\label{app:mi-variants:eq21}

We use the notation of \Cref{sec:mi-from-kl}: $z_0=(x_0,y_0)\sim p_{XY}$, $\epsilon=(\epsilon_X,\epsilon_Y)$, $x_t=(1-t)x_0+t\epsilon_X$, $y_t=(1-t)y_0+t\epsilon_Y$, $z_t=(x_t,y_t)$, the joint field $\vel_t(z)$ with blocks $\vel_t(z)|_X$ and $\vel_t(z)|_Y$, and the conditional fields $\vel_t(x\mid y_0)$ and $\vel_t(y\mid x_0)$ of $p_{X\mid Y=y_0}$ and $p_{Y\mid X=x_0}$. In addition, $\vel^X_t(x)=\Ex[X_0-\epsilon_X\mid X_t=x]$ and $\vel^Y_t(y)$ denote the velocity fields of the marginals $p_X$ and $p_Y$. All expectations below are over $x_0,y_0,\epsilon$.

\begin{lemma}[Field of the product of marginals]
\label{lem:product}
The velocity field of $p_X\otimes p_Y$ is $(x,y)\mapsto\bigl(\vel^X_t(x),\vel^Y_t(y)\bigr)$.
\end{lemma}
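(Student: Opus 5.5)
Under $q=p_X\otimes p_Y$ the clean pair $(X_0,Y_0)$ has independent blocks, and $\epsilon=(\epsilon_X,\epsilon_Y)$ has independent blocks independent of $(X_0,Y_0)$. So the four variables $X_0,Y_0,\epsilon_X,\epsilon_Y$ are mutually independent. The $X$-block quantities $(X_0,\epsilon_X,X_t)$ are therefore independent of the $Y$-block quantities $(Y_0,\epsilon_Y,Y_t)$, where $X_t=(1-t)X_0+t\epsilon_X$ and $Y_t=(1-t)Y_0+t\epsilon_Y$. The plan is to use this independence directly in the definition $\vel^q_t(z)=\Ex_q[Z_0-\epsilon\mid Z_t=z]$, treating the two blocks separately.

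The $X$-block of the velocity is $\Ex_q[X_0-\epsilon_X\mid X_t=x,Y_t=y]$. Because $(X_0-\epsilon_X,X_t)$ is independent of $Y_t$, conditioning additionally on $Y_t$ changes nothing, and this equals $\Ex[X_0-\epsilon_X\mid X_t=x]=\vel^X_t(x)$. Here $X_0\sim p_X$, so this is exactly the marginal field defined above. The $Y$-block gives $\vel^Y_t(y)$ by the symmetric argument.

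At the density level the step is short. Independence factorizes the joint density of $(X_0,\epsilon_X,Y_0,\epsilon_Y)$ given $(X_t,Y_t)=(x,y)$. The conditional density is proportional to $p_X(x_0)\varphi(\epsilon_X)\delta(x-(1-t)x_0-t\epsilon_X)\times$ (the analogous $Y$ factor). It is therefore a product of the two block-conditional laws, and integrating out the $Y$ variables leaves the $X$ conditional law. Equivalently, $q_t=(p_X)_t\otimes(p_Y)_t$. One can also check consistency with \Cref{lem:tweedie}: the score of $q_t$ splits blockwise into the marginal scores, and the formula $\vel_t=(z+t\nabla\log q_t)/(1-t)$ then splits the same way.

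No step is a serious obstacle. The only point needing care is justifying that conditional expectations factor for $t\in(0,1)$, where all conditional densities are well defined because of the Gaussian convolution. This follows from the standard fact that $\Ex[f(A)\mid A',B']=\Ex[f(A)\mid A']$ whenever $(A,A')$ is independent of $B'$.
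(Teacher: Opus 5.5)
Your proof is correct and is essentially the paper's argument: under $p_X\otimes p_Y$ the blocks $(X_0,\epsilon_X)$ and $(Y_0,\epsilon_Y)$ are independent, so conditioning $X_0-\epsilon_X$ on $(X_t,Y_t)$ reduces to conditioning on $X_t$ alone, and symmetrically for $Y$. The density-level factorization $q_t=(p_X)_t\otimes(p_Y)_t$ and the score check via \Cref{lem:tweedie} are valid but not needed.
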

\begin{proof}
Under $p_X\otimes p_Y$ the pairs $(X_0,\epsilon_X)$ and $(Y_0,\epsilon_Y)$ are independent, so conditioning $X_0-\epsilon_X$ on $(X_t,Y_t)$ is the same as conditioning it on $X_t$ alone, and symmetrically for $Y$.
\end{proof}

\begin{proposition}[Product and conditional forms]
\label{prop:mi-forms}
\begin{align}
  \MI(X;Y)&=\int_0^1\frac{1-t}{t}\,\Ex\!\left[\Norm{\vel_t(z_t)|_X-\vel^X_t(x_t)}^2+\Norm{\vel_t(z_t)|_Y-\vel^Y_t(y_t)}^2\right]\d t,
  \label{eq:mi-product}\\
  \MI(X;Y)&=\int_0^1\frac{1-t}{t}\,\Ex\!\left[\Norm{\vel_t(x_t\mid y_0)-\vel^X_t(x_t)}^2\right]\d t
  =\int_0^1\frac{1-t}{t}\,\Ex\!\left[\Norm{\vel_t(y_t\mid x_0)-\vel^Y_t(y_t)}^2\right]\d t.
  \label{eq:mi-conditional}
\end{align}
\end{proposition}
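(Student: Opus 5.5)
The plan is to derive both forms from \Cref{thm:kl-velocity}, using that $\MI(X;Y)=\KL{p_{XY}}{p_X\otimes p_Y}$ and that $\MI(X;Y)=\Ex_{y_0}\bigl[\KL{p_{X\mid Y=y_0}}{p_X}\bigr]=\Ex_{x_0}\bigl[\KL{p_{Y\mid X=x_0}}{p_Y}\bigr]$.

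\textbf{Product form.} First I apply \Cref{thm:kl-velocity} with $p=p_{XY}$ and $q=p_X\otimes p_Y$. Both densities are noised by the same interpolant, so $z\sim p_t$ has the law of $z_t=(x_t,y_t)$ with $(x_0,y_0)\sim p_{XY}$ and $\epsilon$ independent. This lets me replace $\Ex_{z\sim p_t}$ by $\Ex_{x_0,y_0,\epsilon}$ evaluated at $z_t$. By \Cref{lem:product}, $\vel^q_t(z_t)=(\vel^X_t(x_t),\vel^Y_t(y_t))$. Splitting the squared norm over the $X$ and $Y$ blocks then gives \Cref{eq:mi-product} directly.

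\textbf{Conditional form.} Next, for fixed $y_0$, I apply \Cref{thm:kl-velocity} on $\bbR^{d_x}$ with $p=p_{X\mid Y=y_0}$ and $q=p_X$. Noising the conditional law gives $x_t=(1-t)x_0+t\epsilon_X$ with $x_0\sim p_{X\mid Y=y_0}$, whose field is $\vel_t(\cdot\mid y_0)$ by definition; the field of the noised marginal is $\vel^X_t$. This yields
\[
\KL{p_{X\mid Y=y_0}}{p_X}=\int_0^1\frac{1-t}{t}\,\Ex_{x_0\mid y_0,\epsilon_X}\Norm{\vel_t(x_t\mid y_0)-\vel^X_t(x_t)}^2\d t .
\]
I then average over $y_0\sim p_Y$ and exchange the $y_0$-expectation with the time integral using Tonelli, which applies because the integrand is nonnegative. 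The nested expectation combines into $\Ex_{x_0,y_0,\epsilon}$ under the joint law; $\epsilon_Y$ is simply unused. This gives the first equality in \Cref{eq:mi-conditional}, and exchanging the roles of $X$ and $Y$ gives the second.

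\textbf{Main obstacle.} The only delicate step is checking the hypotheses of \Cref{thm:kl-velocity} for the conditionals: $\KL{p_{X\mid Y=y_0}}{p_X}<\infty$ and the smoothness and tail assumptions must hold for $p_Y$-almost every $y_0$. Finiteness almost everywhere follows from $\MI(X;Y)<\infty$, since $\MI$ is the average of these nonnegative divergences. For the regularity conditions I will invoke the standing assumptions of \Cref{app:mi-variants:fields}, applied to the conditional densities, as a blanket hypothesis. Tonelli then guarantees that averaging the per-$y_0$ identities is legitimate even though the weight $(1-t)/t$ is singular at $t=0$.
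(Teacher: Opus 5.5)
Your proposal is correct and matches the paper's proof step for step. The paper also obtains the product form from \Cref{thm:kl-velocity} with $q=p_X\otimes p_Y$, \Cref{lem:product}, and a block split of the squared norm, and obtains the conditional form by applying the theorem to each pair $(p_{X\mid Y=y_0},p_X)$ and averaging over $y_0$. Your use of Tonelli and your check that the conditional divergences are finite for $p_Y$-almost every $y_0$ are extra rigor that the paper leaves implicit.
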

\begin{proof}
\Cref{eq:mi-product} is \Cref{thm:kl-velocity} with $p=p_{XY}$ and $q=p_X\otimes p_Y$, using \Cref{lem:product} and splitting the squared norm into its two blocks. For \Cref{eq:mi-conditional}, $\log\frac{p_{XY}(x,y)}{p_X(x)p_Y(y)}=\log\frac{p_{X\mid Y}(x\mid y)}{p_X(x)}$ gives $\MI(X;Y)=\Ex_{y_0}\KL{p_{X\mid Y=y_0}}{p_X}$; applying \Cref{thm:kl-velocity} to each pair $(p_{X\mid Y=y_0},p_X)$ and averaging over $y_0$ gives the first expression, and the second follows by symmetry.
\end{proof}

\begin{theorem}[Joint-versus-conditional form]
\label{thm:mi-velocity}
With one perturbation $\epsilon$ shared by the three fields, \Cref{eq:mi-velocity} holds:
\begin{equation*}
  \MI(X;Y)=\int_0^1\frac{1-t}{t}\,\Ex\!\left[
    \Norm{\vel_t(z_t)|_X-\vel_t(x_t\mid y_0)}^2
    +\Norm{\vel_t(z_t)|_Y-\vel_t(y_t\mid x_0)}^2
  \right]\d t .
\end{equation*}
\end{theorem}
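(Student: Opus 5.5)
The plan is to reduce \Cref{thm:mi-velocity} to the two identities of \Cref{prop:mi-forms} through a Pythagorean decomposition of conditional expectations. Fix $t\in(0,1)$ and work on the probability space carrying $(x_0,y_0,\epsilon_X,\epsilon_Y)$, all driven by the single shared perturbation $\epsilon$. Write $W_X=x_0-\epsilon_X$ for the $X$-block target. The three $X$-block fields evaluated at the noised samples are conditional expectations of $W_X$:
\begin{itemize}
\item $M_X=\vel^X_t(x_t)=\Ex[W_X\mid x_t]$;
\item $A_X=\vel_t(z_t)|_X=\Ex[W_X\mid x_t,y_t]$, the $X$ block of the joint field;
\item $B_X=\vel_t(x_t\mid y_0)=\Ex[W_X\mid x_t,y_0]$, since under $p_{X\mid Y=y_0}$ the pair $(x_0,\epsilon_X)$ has exactly the conditional law given $y_0$.
\end{itemize}

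\textbf{Step 1 (nesting).} The key claim is that these conditionings are nested. Trivially $\sigma(x_t)\subset\sigma(x_t,y_t)$. I will also show that $B_X=\Ex[W_X\mid x_t,y_0,y_t]$. Given $(x_t,y_0)$, the extra variable $y_t=(1-t)y_0+t\epsilon_Y$ brings in only $\epsilon_Y$. Now $\epsilon_Y$ is independent of $(x_0,\epsilon_X,y_0)$, hence of $(W_X,x_t,y_0)$, so adding it to the conditioning does not change the expectation. It follows that $\sigma(x_t)\subset\sigma(x_t,y_t)\subset\sigma(x_t,y_t,y_0)$, with $M_X,A_X,B_X$ the corresponding successive projections of $W_X$ in $L^2$. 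The finite second moments assumed in \Cref{app:mi-variants:fields} guarantee $W_X\in L^2$.

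\textbf{Step 2 (Pythagoras).} By the tower property, $B_X-A_X$ is orthogonal in $L^2$ to every $\sigma(x_t,y_t)$-measurable variable, in particular to $A_X-M_X$. Hence
\[
\Ex\Norm{B_X-M_X}^2=\Ex\Norm{B_X-A_X}^2+\Ex\Norm{A_X-M_X}^2 .
\]
The same argument with the roles of the blocks swapped gives the analogous identity for the $Y$ block, with $B_Y=\vel_t(y_t\mid x_0)$.

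\textbf{Step 3 (integration).} Multiply both identities by $(1-t)/t$, add them, and integrate over $t\in(0,1)$. By \Cref{eq:mi-conditional}, the integrated $\Norm{B-M}^2$ terms sum to $2\MI(X;Y)$. By \Cref{eq:mi-product}, the integrated $\Norm{A-M}^2$ terms sum to $\MI(X;Y)$. Subtracting leaves exactly the integrand of \Cref{eq:mi-velocity}, which therefore integrates to $\MI(X;Y)$.

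The step that needs care is this final subtraction, which requires each integral to be finite. All integrands are nonnegative, and \Cref{prop:mi-forms} shows that the relevant integrals equal $\MI(X;Y)$, which is finite by the standing assumption $\KL{p_{XY}}{p_X\otimes p_Y}<\infty$ inherited from \Cref{thm:kl-velocity}. The pointwise-in-$t$ Pythagorean identity then passes to the integrals by Tonelli, so the subtraction is legitimate. I expect the main conceptual point, rather than a technical obstacle, to be Step 1: recognising that the joint-noised conditioning sits between the marginal and the clean-conditional $\sigma$-algebras. This holds only because the same $\epsilon$ is shared across the three evaluations, which is exactly the hypothesis stated in \Cref{thm:mi-velocity}.
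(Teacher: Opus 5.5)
Your proposal is correct and follows essentially the same route as the paper: you express the three $X$-block fields as nested conditional expectations of $x_0-\epsilon_X$ (using independence of $\epsilon_Y$), apply the Pythagorean split, and combine the conditional and product forms of \Cref{prop:mi-forms} to obtain $\MI+\MI-\MI=\MI$. Your explicit nesting $\sigma(x_t)\subset\sigma(x_t,y_t)\subset\sigma(x_t,y_t,y_0)$ and your finiteness check for the final subtraction spell out steps the paper states more briefly.
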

\begin{proof}
Fix $t$ and consider the $X$ block. The three fields are conditional expectations of the same variable $X_0-\epsilon_X$ under three conditionings:
\begin{equation*}
  \vel^X_t(x_t)=\Ex[X_0-\epsilon_X\mid x_t],\quad
  \vel_t(z_t)|_X=\Ex[X_0-\epsilon_X\mid x_t,y_t],\quad
  \vel_t(x_t\mid y_0)=\Ex[X_0-\epsilon_X\mid x_t,y_0].
\end{equation*}
Since $\epsilon_Y$ is independent of $(x_0,\epsilon_X,y_0)$, the point $y_t=(1-t)y_0+t\epsilon_Y$ carries no information about $X_0-\epsilon_X$ beyond $(x_t,y_0)$, so by the tower property
\begin{equation*}
  \vel_t(z_t)|_X=\Ex\!\left[\vel_t(x_t\mid y_0)\mid x_t,y_t\right]
  \qquad\text{and}\qquad
  \vel^X_t(x_t)=\Ex\!\left[\vel_t(z_t)|_X\mid x_t\right].
\end{equation*}
That is, $\vel_t(z_t)|_X$ is the orthogonal projection of $\vel_t(x_t\mid y_0)$ onto the functions of $(x_t,y_t)$, and $\vel^X_t(x_t)$ is the projection of both onto the functions of $x_t$, so the two increments are orthogonal and
\begin{equation*}
  \Ex\Norm{\vel_t(x_t\mid y_0)-\vel^X_t(x_t)}^2
  =\Ex\Norm{\vel_t(x_t\mid y_0)-\vel_t(z_t)|_X}^2
  +\Ex\Norm{\vel_t(z_t)|_X-\vel^X_t(x_t)}^2 .
\end{equation*}
The same identity holds for the $Y$ block. Adding the two blocks, multiplying by $(1-t)/t$, and integrating, the left sides are the two expressions of \Cref{eq:mi-conditional}, each equal to $\MI(X;Y)$, and the last terms sum to the integrand of \Cref{eq:mi-product}, also equal to $\MI(X;Y)$. The integral of the middle terms is therefore $\MI(X;Y)+\MI(X;Y)-\MI(X;Y)=\MI(X;Y)$.
\end{proof}

\begin{remark}[Shared noise]
The projection argument requires the joint and the conditional field to be evaluated at the same noised block, with the same $\epsilon_X$ in $\vel_t(z_t)|_X$ and $\vel_t(x_t\mid y_0)$ and the same $\epsilon_Y$ in the $Y$ term. With independent perturbations the increments are no longer orthogonal and the integrand no longer averages to the mutual information.
\end{remark}

\subsection{Conditional variant: a discrete input as clean evidence}
\label{app:mi-variants:cond}

An equivalent form for estimating \gls{MI} writes $\MI(X;Y)=\Ex_{y}\KL{p_{X\mid Y=y}}{p_X}$ and compares the conditional velocity of $X$ given $Y$ to the marginal velocity of $X$. When both distributions are continuous we use the \Cref{eq:mi-velocity} form, because a single indicator-conditioned field yields all required partial velocities without a separate marginal model. When the input is discrete, the conditional form becomes a finite sum and we can use the estimator in \Cref{app:slemi-details}. Let $X$ take one of $m$ values $x_1,\dots,x_m$ with weights $p=(p_1,\dots,p_m)$ and let $Y$ be a continuous response. Then
\begin{equation}
  \MI(X;Y)=\sum_{i=1}^{m} p_i\,D_i,\qquad
  D_i=\KL{P_{Y\mid x_i}}{\bar P},\qquad
  \bar P=\sum_{j=1}^{m} p_j\,P_{Y\mid x_j},
  \label{eq:mi-cond}
\end{equation}
and each divergence is the velocity-form KL of \Cref{eq:kl-velocity} applied to the response alone,
\begin{equation}
  D_i=\int_0^1\frac{1-t}{t}\;
  \Ex_{y_0\sim P_{Y\mid x_i},\,\epsilon}
  \Norm{\vel_t(y_t|x_i)-\bar\vel_t(y_t)}^2\d{t},
  \qquad y_t=(1-t)y_0+t\epsilon.
  \label{eq:cond-kl}
\end{equation}
Only $Y$ is noised: every query holds the input block at the atom $x_i$ as clean evidence through the noising indicator (input coordinates clean, response coordinates noised), only the response block of the output is read, and no velocity field is needed for the discrete coordinate.

\paragraph{Two contexts from one field.}
The two fields in \Cref{eq:cond-kl} are the same model call, with the same indicator, at the same evaluation points, bound to two different contexts. Bound to a \emph{joint} context of clean $(x,y)$ rows, the evidence $x_i$ selects the conditional $P_{Y\mid x_i}$, and the call returns its velocity $\vel^{\,x_i}_t$. Bound to a \emph{shuffled} context, the same call returns $\bar\vel_t$. The shuffled context is built row by row from two independent draws from $p$: the first draw selects an input value and the row takes a response from the pool of that value, the second draw overwrites the input column. Input and response are therefore independent in the context, so the evidence carries no information about the response, and the response marginal of the context is $\bar P$ by construction, whatever $p$ is. Drawing the shuffled context from the same pooled rows as the joint context keeps part of the finite-context sampling noise common to the two fields. The joint context is stratified uniformly over the atoms, since the conditionals do not depend on $p$; the shuffled context follows $p$ which determines $\bar P$ when $p$ changes.

\section{Estimation algorithm}
\label{app:algorithm}

\Cref{alg:mi} lists the mutual information estimation procedure of \Cref{sec:mi-from-kl}: a disjoint context/evaluation split, one cached context encoding, three velocity queries per (point, time) pair with shared noise, and the weighted average of the block-wise velocity differences of \Cref{eq:estimator}.

\begin{algorithm}[h]
\caption{\model{} mutual information estimation}
\label{alg:mi}
\begin{algorithmic}[1]
\Require $N$ joint samples $z=(x,y)$; frozen \model{} $\vmodel$; time draws per point $n_t$; indicators $m_{XY}=(\indone_X,\indone_Y)$, $m_X=(\indone_X,\indzero_Y)$, $m_Y=(\indzero_X,\indone_Y)$
\State split the samples into a disjoint \emph{context} set $\ctx$ of size $n$ and \emph{evaluation} set $\cD_{\mathrm{eval}}$; fit a coordinate-wise copula map on $\ctx$ and apply it to both sets
\State encode $\ctx$ once and cache its keys and values \Comment{reused by every query below}
\For{each evaluation point $z_0=(x_0,y_0)\in\cD_{\mathrm{eval}}$ and each of $n_t$ draws $t\sim\cU[0,1]$}
  \State draw one $\epsilon\sim\cN(0,I)$; set $z_{\mathrm{full}}=(1-t)z_0+t\epsilon$ \Comment{shared by the three queries}
  \State $z_X\leftarrow m_X\odot z_{\mathrm{full}}+(1-m_X)\odot z_0$;\quad
         $z_Y\leftarrow m_Y\odot z_{\mathrm{full}}+(1-m_Y)\odot z_0$
  \State $v_{\mathrm{full}}\leftarrow\vmodel(z_{\mathrm{full}},t,m_{XY};\ctx)$ \Comment{three queries to the cached context}
  \State $v_X\leftarrow\vmodel(z_X,t,m_X;\ctx)$;\quad $v_Y\leftarrow\vmodel(z_Y,t,m_Y;\ctx)$
  \State $g\leftarrow \dfrac{1-t}{t}\Big[\Norm{(v_{\mathrm{full}}-v_X)|_X}^2+\Norm{(v_{\mathrm{full}}-v_Y)|_Y}^2\Big]$
\EndFor
\State \Return $\widehat\MI(X;Y)=\dfrac{1}{N_{\mathrm{MC}}}\sum g$ over the $N_{\mathrm{MC}}=|\cD_{\mathrm{eval}}|\,n_t$ pairs \Comment{\Cref{eq:estimator}}
\end{algorithmic}
\end{algorithm}

\section{\model{} Details}
\label{app:alice-details}

This section specifies the \model{} architecture summarized in \Cref{sec:method:arch}: the token layout and time conditioning, the relation graph and its attention rule, the induced context bottleneck, the boundary parameterization, and the model family. \Cref{fig:arch-detail} shows one forward pass through these components.

\begin{figure}[t]
  \centering
  \resizebox{0.94\linewidth}{!}{%
  \begin{tikzpicture}[
      font=\footnotesize,
      >={Latex[length=2mm]},
      box/.style={draw=black!70, rounded corners=2pt, inner sep=3pt, align=center, minimum width=36mm},
      embed/.style={box, fill=black!4},
      attn/.style={box, fill=blue!7},
      ffn/.style={box, fill=black!4},
      gph/.style={box, fill=orange!14},
      cont/.style={draw=black!45, rounded corners=4pt, inner sep=4pt},
      lab/.style={font=\scriptsize, inner sep=1.5pt, align=center},
      flow/.style={->, semithick, black!75},
      gflow/.style={->, densely dashed, thick, orange!75!black},
    ]
    \node[lab]   (C0) at (0,0)    {context $\ctx$: $n\times d$ clean scalars};
    \node[embed] (C1) at (0,0.66) {cells $[\,z[i],\ \phi(0),\ 0,\ 0\,]$};
    \node[embed] (C2) at (0,1.32) {shared projection};
    \node[gph]   (C3) at (0,1.98) {graph layer};
    \node[attn]  (R1) at (0,3.15) {cross-attention to encoded context};
    \node[ffn]   (R2) at (0,3.81) {feed-forward};
    \node[attn]  (B1) at (0,5.05) {self-attention over latents\\[-2pt]{\scriptsize (within one coordinate)}};
    \node[gph]   (B2) at (0,5.95) {graph attention, \Cref{eq:graph-attn}\\[-2pt]{\scriptsize (across coordinates)}};
    \node[lab]   (LT) at (0,6.95) {induced latents $(d\times K\times D)$};
    \begin{scope}[on background layer]
      \node[cont, fit=(R1)(R2)] (RC) {};
      \node[cont, fit=(B1)(B2)] (BC) {};
    \end{scope}
    \node[lab, anchor=west] at (RC.east) {$\times 2$};
    \node[lab, anchor=west] at (BC.east) {$\times L$};
    \draw[flow] (C0) -- (C1);
    \draw[flow] (C1) -- (C2);
    \draw[flow] (C2) -- (C3);
    \draw[flow] (C3) -- (R1);
    \draw[flow] (R1) -- (R2);
    \draw[flow] (R2) -- (B1);
    \draw[flow] (B1) -- (B2);
    \draw[flow] (B2) -- (LT);
    \fill[black!75] (0,2.55) circle (1.2pt);
    \node[lab, anchor=east] at (-0.35,2.55) {encoded context\\[-2pt]$(d\times n\times D)$, cached};
    \node[lab, draw=black!70, rounded corners=2pt, inner sep=2.5pt, anchor=east] (LQ) at (-2.6,3.15)
      {$K$ learned latents\\[-2pt](per coordinate)};
    \draw[flow] (LQ) -- (R1);
    \node[gph, minimum width=32mm] (G) at (4.6,0.85)
      {relation graph, from clean context\\[-1pt]
       {\scriptsize standardize; feature maps $\ell,r\to c_{ij}$ (\Cref{eq:relation});}\\[-2pt]
       {\scriptsize per-head signed gated edges $A_{ij}$}};
    \draw[flow, rounded corners=3pt] (C0.east) -| (G.south);
    \node[lab]   (Q0) at (9.5,0)    {query $(x,\,t,\,m)$};
    \node[embed] (Q1) at (9.5,0.66) {cells $[\,z_t[i],\ \phi(t),\ m[i],\ 1\,]$};
    \node[embed] (Q2) at (9.5,1.32) {shared projection};
    \node[gph]   (Q3) at (9.5,1.98) {graph layer};
    \node[attn]  (D1) at (9.5,3.92) {cross-attention to latents};
    \node[attn]  (D2) at (9.5,4.58) {cross-attention to encoded context};
    \node[ffn]   (D3) at (9.5,5.24) {feed-forward};
    \node[gph]   (D4) at (9.5,5.90) {graph layer};
    \begin{scope}[on background layer]
      \node[cont, fit=(D1)(D4)] (DC) {};
    \end{scope}
    \node[lab, anchor=west, align=left] at ($(DC.east)+(1.5mm,0)$) {queries are never\\[-2pt]keys or values};
    \node[attn]  (H1) at (9.5,6.95) {LayerNorm, shared scalar head};
    \node[box]   (H2) at (9.5,7.61) {$\headout(x,t,m;\ctx)$: one scalar per coordinate};
    \node[embed] (H3) at (9.5,8.45)
      {$\vmodel(x,t,m;\ctx)=x+\headout(x,t,m;\ctx)-\headout(x,0,m;\ctx)$\\[-1pt]
       {\scriptsize boundary parameterization, \Cref{sec:method:arch}}};
    \draw[flow] (Q0) -- (Q1);
    \draw[flow] (Q1) -- (Q2);
    \draw[flow] (Q2) -- (Q3);
    \draw[flow] (Q3) -- (D1);
    \draw[flow] (D1) -- (D2);
    \draw[flow] (D2) -- (D3);
    \draw[flow] (D3) -- (D4);
    \draw[flow] (D4) -- (H1);
    \draw[flow] (H1) -- (H2);
    \draw[flow] (H2) -- (H3);
    \draw[gflow, rounded corners=3pt] (G.north) |- (C3.east);
    \draw[gflow, rounded corners=3pt] (G.north) |- (Q3.west);
    \draw[gflow, rounded corners=3pt] (G.north) -- (4.6,5.95) -- (B2.east);
    \draw[gflow, rounded corners=3pt] (4.6,5.90) -- (D4.west);
    \draw[flow, rounded corners=3pt] (0,2.55) -- (7.0,2.55) -- (7.0,4.58) -- (D2.west);
    \node[lab, black!60] at (5.85,2.75) {linear in $n$};
    \draw[flow, rounded corners=3pt] (LT.east) -- (7.35,6.95) -- (7.35,3.92) -- (D1.west);
  \end{tikzpicture}%
  }
  \caption{One forward pass of \model{}. Clean context values determine the
  relation graph, whose gated edges $A_{ij}$ condition every graph layer
  (dashed). Context and query cells pass through the same shared projection
  and input graph layer; $K$ learned latents per coordinate read the encoded
  context twice through cross-attention, and $L$ blocks alternate
  self-attention among one coordinate's latents with graph attention across
  coordinates. A query decodes in one pass, cross-attending to the latents for
  the global summary and to the encoded context for local detail, and a shared
  scalar head produces one output per coordinate; two head evaluations, at
  times $t$ and $0$, form the velocity through the boundary parameterization.
  The context side, left on the figure, is encoded once per context
  and cached; every interaction with the $n$ context samples is linear in $n$.}
  \label{fig:arch-detail}
\end{figure}

\subsection{Architecture}
\label{app:alice-arch}

\noindent \textbf{Conditioning and caching.}
Queries interact with the network only through cross-attention: they are not used as keys or values. Three properties follow. 1) Context representations never depend on queries, so a context is encoded once per distribution, cached, and reused by every velocity evaluation. 2) Each query's output is a function of $(z_t,t,m;\ctx)$ alone, and does not depend on other queries that might be added or permuted: hence, many queries are scored in one pass and the boundary parameterization of \Cref{sec:method:arch} is exact. 3) The context-validity mask excludes padded context samples from every attention over the context, so a padded context is equivalent to a physically truncated one and the same weights serve any context length. Every token is embedded by a shared projection, but no channel of the embedding encodes the index of the sample or of the coordinate a token comes from: a context is an exchangeable set of samples and a sample is an unordered set of coordinates, so the network carries no positional information along either axis. The noising indicator $m$ plays no role in attention; it is used only as an input channel of the query tokens.

\noindent \textbf{Per-coordinate tokens and time conditioning.}
The joint width $d$ is not fixed a priori. Every scalar coordinate of every sample becomes one token $[\text{value},\,\phi(t),\,m[i],\,\text{type}]$, lifted to width $D$ by one shared projection; context and query tokens then pass through one shared input graph layer before any processing (\Cref{fig:arch-detail}, bottom). All parameters live in coordinate-shared maps: the token projection, the attention and feed-forward weights, the latent bank, and a scalar output head. Changing $d$ therefore changes only the number of tokens per sample, and the same model weights can be used at any joint width.
We use sinusoidal time features $\phi(t)=[t,\sin(2^k t),\cos(2^k t)]_{k<F}$, and the noising-indicator entry $m[i]\in\{0,1\}$ encodes partial observation: a $1$ marks a coordinate that follows the interpolant at time $t$ and is to be predicted, a $0$ indicates a coordinate held clean at its observed value as evidence. Context tokens carry zero time features and an all-clean indicator. The indicator channel lets a single model produce the three partially noised velocities of \Cref{eq:mi-velocity}; the boundary parameterization evaluates the network at times $t$ and $0$ with the same indicator $m$.

\noindent \textbf{The relation graph.}
The cross-coordinate mechanism must represent which coordinates depend on which, with what sign, possibly through non-monotone relations, all varying from distribution to distribution. A learned $d\times d$ interaction parameter would be tied to one dimension and one dependence pattern, and softmax attention across coordinates produces weights that are dense, nonnegative, and sum to one, so independent coordinates would still exchange information. \model{} instead measures the dependence structure from the clean context and uses the result as a weighted graph over coordinates, recomputed once per forward pass whenever the context changes; the resulting edges condition every graph layer in \Cref{fig:arch-detail} (dashed). \Cref{fig:app-relation-graph} summarizes the construction.

\begin{figure}[t]
  \centering
  \resizebox{0.55\linewidth}{!}{%
    \definecolor{rgRed}{HTML}{ED1C24}
\definecolor{rgRedFill}{HTML}{EF5B61}
\definecolor{rgOrange}{HTML}{FF7F00}
\definecolor{rgOrangeFill}{HTML}{FFA64D}
\definecolor{rgGreen}{HTML}{4DAF4A}
\definecolor{rgGreenFill}{HTML}{86C982}
\definecolor{rgBlue}{HTML}{377EB8}
\definecolor{rgBlueFill}{HTML}{73A4CC}
\definecolor{rgPurple}{HTML}{984EA3}
\definecolor{rgPurpleFill}{HTML}{B17CBB}
\definecolor{rgGray}{HTML}{7F7F7F}

\begin{tikzpicture}[
    x=1cm,
    y=1cm,
    font=\sffamily\scriptsize,
    text=black!90,
    >={Latex[length=2mm,width=1.6mm]},
    panel/.style={draw=rgGray, line width=0.9pt, rounded corners=7pt, fill=white},
    module/.style={draw=rgGray, line width=0.8pt, rounded corners=4pt, fill=white},
    flow/.style={->, line width=0.8pt, black!80},
    wire/.style={line width=0.55pt, black!70},
    label/.style={inner sep=1pt, align=center},
  ]
  \node[panel, minimum width=6.15cm, minimum height=0.68cm] at (3.10,9.84) {};
  \fill[rgRedFill, draw=rgRed, line width=0.75pt] (0.28,9.67) rectangle (0.66,10.01);
  \node[anchor=west] at (0.76,9.84) {coordinate};
  \fill[rgRedFill, draw=rgRed, line width=0.65pt] (2.40,9.68) rectangle (2.63,10.00);
  \fill[rgGreenFill, draw=rgGreen, line width=0.65pt] (2.63,9.68) rectangle (2.86,10.00);
  \node[anchor=west] at (2.96,9.84) {pair $j\!\to\!i$};
  \draw[flow] (4.62,9.84) -- (5.04,9.84);
  \node[anchor=west] at (5.14,9.84) {flow};

  \node[panel, minimum width=6.15cm, minimum height=2.95cm] at (3.10,7.73) {};
  \node[font=\sffamily\small\bfseries] at (3.10,8.99) {Context-derived relation graph};

  \node[label] at (1.18,8.48) {clean context $\ctx$\\paired sample rows};
  \foreach \yy in {8.05,7.75,7.45,7.15} {
    \fill[rgRedFill, draw=rgRed, line width=0.45pt] (0.43,\yy-0.10) rectangle (0.80,\yy+0.10);
    \fill[rgOrangeFill, draw=rgOrange, line width=0.45pt] (0.80,\yy-0.10) rectangle (1.17,\yy+0.10);
    \fill[rgGreenFill, draw=rgGreen, line width=0.45pt] (1.17,\yy-0.10) rectangle (1.54,\yy+0.10);
    \fill[rgBlueFill, draw=rgBlue, line width=0.45pt] (1.54,\yy-0.10) rectangle (1.91,\yy+0.10);
  }
  \node at (1.17,6.84) {$\vdots$};
  \draw[flow] (2.08,7.60) -- (2.48,7.60);

  \node[module, minimum width=1.52cm, minimum height=2.08cm] (features) at (3.28,7.60) {};
  \node[label] at (3.28,8.35) {shared maps\\$\ell,r$};
  \fill[rgRedFill, draw=rgRed, line width=0.55pt] (2.80,7.86) rectangle (3.11,8.13);
  \fill[white, opacity=0.35] (2.87,7.92) rectangle (3.04,8.07);
  \fill[rgGreenFill, draw=rgGreen, line width=0.55pt] (3.15,7.86) rectangle (3.46,8.13);
  \fill[white, opacity=0.35] (3.22,7.92) rectangle (3.39,8.07);
  \draw[flow] (3.13,7.78) -- (3.13,7.55);
  \fill[rgRedFill, draw=rgRed, line width=0.55pt] (2.84,7.26) rectangle (3.13,7.51);
  \fill[white, opacity=0.35] (2.90,7.31) rectangle (3.07,7.46);
  \fill[rgGreenFill, draw=rgGreen, line width=0.55pt] (3.13,7.26) rectangle (3.42,7.51);
  \fill[white, opacity=0.35] (3.19,7.31) rectangle (3.36,7.46);
  \node[label, font=\sffamily\tiny] at (3.28,6.91) {row average};
  \draw[flow] (4.08,7.60) -- (4.45,7.60);

  \node[label] at (5.30,8.48) {relation graph\\$A^{(h)}$};
  \coordinate (g1) at (5.16,8.02);
  \coordinate (g2) at (5.72,7.69);
  \coordinate (g3) at (5.56,7.05);
  \coordinate (g4) at (4.91,7.08);
  \draw[rgPurple, line width=1.5pt] (g1) -- (g2);
  \draw[rgGreen, line width=2.1pt] (g2) -- (g3);
  \draw[rgBlue, line width=1.0pt] (g3) -- (g4);
  \draw[rgOrange, line width=1.7pt] (g4) -- (g1);
  \draw[black!35, line width=0.55pt] (g1) -- (g3);
  \draw[black!35, line width=0.55pt] (g2) -- (g4);
  \fill[rgRedFill, draw=rgRed, line width=0.8pt] (g1) circle (0.15);
  \fill[rgOrangeFill, draw=rgOrange, line width=0.8pt] (g2) circle (0.15);
  \fill[rgGreenFill, draw=rgGreen, line width=0.8pt] (g3) circle (0.15);
  \fill[rgBlueFill, draw=rgBlue, line width=0.8pt] (g4) circle (0.15);

  \begin{scope}[yshift=0.6cm]
    \node[panel, minimum width=6.15cm, minimum height=3.68cm] at (3.10,3.19) {};
    \node[font=\sffamily\small\bfseries] at (3.10,4.67) {Graph attention};

    \node[label] at (0.91,4.19) {$\mathbf{r}_j^{(h)}$};
    \fill[rgRedFill, draw=rgRed, line width=0.70pt] (0.43,3.71) rectangle (1.39,4.03);
    \fill[rgOrangeFill, draw=rgOrange, line width=0.70pt] (0.43,3.26) rectangle (1.39,3.58);
    \fill[rgGreenFill, draw=rgGreen, line width=0.70pt] (0.43,2.81) rectangle (1.39,3.13);
    \fill[rgBlueFill, draw=rgBlue, line width=0.70pt] (0.43,2.36) rectangle (1.39,2.68);
    \fill[rgPurpleFill, draw=rgPurple, line width=0.70pt] (0.43,1.91) rectangle (1.39,2.23);

    \node[label] at (2.46,4.19) {edges $A_{ij}^{(h)}$};
    \foreach \yy/\cc/\ff in {3.87/rgRed/rgRedFill,3.42/rgOrange/rgOrangeFill,2.97/rgGreen/rgGreenFill,2.52/rgBlue/rgBlueFill,2.07/rgPurple/rgPurpleFill} {
      \fill[\ff, draw=\cc, line width=0.55pt] (2.05,\yy-0.11) rectangle (2.34,\yy+0.11);
      \fill[rgGreenFill, draw=rgGreen, line width=0.55pt] (2.38,\yy-0.11) rectangle (2.67,\yy+0.11);
    }
    \foreach \yy in {3.87,3.42,2.97,2.52,2.07} {
      \draw[wire] (1.45,\yy) -- (1.96,\yy);
    }
    \node[label, font=\sffamily\tiny] at (2.46,1.67) {green target $i$};
    \draw[flow] (2.82,2.97) -- (3.15,2.97);

    \node[module, minimum width=1.32cm, minimum height=2.64cm] (aggregate) at (3.87,2.97) {};
    \node[label, font=\sffamily\tiny] at (3.87,4.00) {weighted\\messages};
    \foreach \yy/\cc/\ff in {3.72/rgRed/rgRedFill,3.43/rgOrange/rgOrangeFill,3.14/rgBlue/rgBlueFill} {
      \fill[\ff, draw=\cc, line width=0.50pt] (3.48,\yy-0.10) rectangle (3.75,\yy+0.10);
      \fill[white, opacity=0.35] (3.54,\yy-0.06) rectangle (3.69,\yy+0.06);
      \fill[rgGreenFill, draw=rgGreen, line width=0.50pt] (3.78,\yy-0.10) rectangle (4.05,\yy+0.10);
      \fill[white, opacity=0.35] (3.84,\yy-0.06) rectangle (3.99,\yy+0.06);
    }
    \draw[flow] (3.87,3.00) -- (3.87,2.80);
    \node[font=\normalsize] at (3.87,2.57) {$\sum$};
    \draw[flow] (3.87,2.32) -- (3.87,2.16);
    \node[label, font=\sffamily\tiny] at (3.87,1.91) {normalize};
    \draw[flow] (4.57,2.97) -- (4.88,2.97);

    \node[label] at (5.46,4.15) {message $u_i^{(h)}$};
    \fill[rgGreenFill, draw=rgGreen, line width=0.85pt] (4.97,2.79) rectangle (5.95,3.15);
    \fill[white, opacity=0.35] (5.08,2.86) rectangle (5.84,3.08);
    \node at (5.46,2.97) {$u_i^{(h)}$};
    \draw[flow] (5.46,2.74) -- (5.46,2.55);
    \fill[rgGreenFill, draw=rgGreen, line width=0.9pt] (4.82,2.12) rectangle (6.10,2.52);
    \fill[white, opacity=0.30] (4.96,2.20) rectangle (5.96,2.44);
    \node[label, font=\sffamily\tiny] at (5.46,1.73) {residual update};
  \end{scope}

  \draw[flow, rounded corners=3pt]
  (5.56,6.88) -- (5.56,5.96) -- (3.10,5.96) -- (3.10,5.59);
  \node[label, fill=white, font=\tiny] at (3.6,6.02) {$A^{(h)}$};
\end{tikzpicture}%
  }
  \caption{\model{} relation-graph construction and coordinate mixing.
    Node and block colors identify coordinates, and each two-color block denotes a source-target pair $j\to i$.
    Top: shared feature maps $\ell,r$ aggregate paired clean-context rows into signed, gated relation weights $A_{ij}^{(h)}$.
    Bottom: for the green target $i$, each source representation $\mathbf{r}_j^{(h)}$ is weighted by its incoming edge, and the center module sums and normalizes these contributions to produce $u_i^{(h)}$ for the residual update.}
  \label{fig:app-relation-graph}
\end{figure}

To describe pairwise dependence, we measure covariance between learned nonlinear features, a principle also used in kernel dependence measures \citep{gretton2005measuring}.
Let $\hat z^{(k)}[i]$ be coordinate $i$ of clean context sample $k$, standardized over the context. Two learned maps $\ell,r:\bbR\to\bbR^{R}$, each shared across coordinates and context samples, take this single scalar as input and output $R$ nonlinear features. Subtracting each feature's context mean gives $\tilde\ell_i^{(k)}=\ell(\hat z^{(k)}[i])-\frac1n\sum_{k'=1}^{n}\ell(\hat z^{(k')}[i])$, and likewise $\tilde r_i^{(k)}$. The descriptor of coordinates $i$ and $j$ is
\begin{equation}
  c_{ij}=\frac{1}{2n}\sum_{k=1}^{n}\Bigl(\tilde\ell_i^{(k)}\odot\tilde r_j^{(k)}
        +\tilde r_i^{(k)}\odot\tilde\ell_j^{(k)}\Bigr)\in\bbR^{R}.
  \label{eq:relation}
\end{equation}
Each component of $c_{ij}$ is an average of two empirical feature covariances, with the common index $k$ preserving the joint observations and symmetrization giving $c_{ij}=c_{ji}$. With identity feature maps, \Cref{eq:relation} reduces to the empirical correlation of the standardized coordinates; learned maps expose dependence, such as $z[j]\approx z[i]^2$, that correlation misses. Centering makes the population descriptor vanish under independence, since the expected product of centered features then factorizes, although finite contexts introduce sampling fluctuations. Permuting all context samples together leaves the descriptor invariant, while permuting one coordinate's values independently changes the empirical joint and therefore the graph.

The descriptor is shared by all attention heads. For graph head $h$, learned projection vectors $w_s^{(h)},w_g^{(h)}\in\bbR^R$ and scalar biases $b_s^{(h)},b_g^{(h)}$ convert it into a signed, gated edge. With $\sigma$ the sigmoid and $\mathbf{r}_j^{(h)}$ the projected representation of coordinate $j$ at the same context, latent, or query position, the edge and the aggregated message are
\begin{equation}
  A_{ij}^{(h)}=\tanh\!\left((w_s^{(h)})^\top c_{ij}+b_s^{(h)}\right)
  \sigma\!\left((w_g^{(h)})^\top c_{ij}+b_g^{(h)}\right),
  \qquad
  u_i^{(h)}=\frac{\sum_{j\ne i}A_{ij}^{(h)}\mathbf{r}_j^{(h)}}
  {\max\!\left(1,\sum_{j\ne i}|A_{ij}^{(h)}|\right)},
  \label{eq:graph-attn}
\end{equation}
followed by an output projection and the usual residual and feed-forward updates. The signed factor allows additive or subtractive contributions, while the gate controls their magnitude. Normalizing by absolute edge mass bounds the aggregate contribution, and the lower bound of one preserves small updates when all edges are weak. There are no self-edges. Gates are initialized nearly closed, so training starts from an independence prior and opens edges only where the context provides evidence of dependence; exact disconnection under independence is not enforced. The feature maps and edge projections are learned through the velocity objective, so the edges represent pairwise associations useful for prediction without imposing a conditional-independence interpretation. Sharing these maps across coordinates keeps the parameter count independent of $d$ and makes the graph equivariant to coordinate permutations.

\noindent \textbf{Induced context bottleneck.}
The model compresses each coordinate's $n$ context tokens into $K$ induced latents and runs its depth on the latents at a cost that is independent of $n$ (\Cref{fig:arch-detail}, left tower). In other words, a shared bank of $K$ learned vectors reads the encoded context through two cross-attentions, each linear in $n$, and the deep blocks then alternate self-attention among one coordinate's latents, refining that coordinate's summary of the context, with graph attention from \Cref{eq:graph-attn}, sharing the summaries across coordinates. A query decodes through two complementary mechanisms (\Cref{fig:arch-detail}, right tower): cross-attention to the latents provides the global summary, and a final cross-attention to the encoded context, linear in $n$, retrieves the local detail near the query that a $K$-vector summary cannot retain; a last graph layer and a shared scalar head produce one output per coordinate.
\begin{figure}[t]
  \centering
  \resizebox{0.55\linewidth}{!}{%
    \definecolor{rgRed}{HTML}{ED1C24}
\definecolor{rgRedFill}{HTML}{EF5B61}
\definecolor{rgOrange}{HTML}{FF7F00}
\definecolor{rgOrangeFill}{HTML}{FFA64D}
\definecolor{rgGreen}{HTML}{4DAF4A}
\definecolor{rgGreenFill}{HTML}{86C982}
\definecolor{rgBlue}{HTML}{377EB8}
\definecolor{rgBlueFill}{HTML}{73A4CC}
\definecolor{rgPurple}{HTML}{984EA3}
\definecolor{rgPurpleFill}{HTML}{B17CBB}
\definecolor{rgGray}{HTML}{7F7F7F}

\begin{tikzpicture}[
    x=1cm,
    y=1cm,
    font=\sffamily\scriptsize,
    text=black!90,
    >={Latex[length=1.35mm,width=1.05mm]},
    panel/.style={draw=rgGray, line width=0.9pt, rounded corners=7pt, fill=white},
    module/.style={draw=rgGray, line width=0.8pt, rounded corners=4pt, fill=white},
    flow/.style={->, line width=0.8pt, black!80, shorten >=1.6pt},
    wire/.style={->, line width=0.55pt, black!70, shorten >=1.6pt},
    label/.style={inner sep=1pt, align=center},
  ]
  \node[panel, minimum width=6.15cm, minimum height=6.15cm] at (3.10,3.10) {};
  \node[font=\sffamily\small\bfseries] at (3.10,5.83) {Induced context bottleneck};

  \node[label, anchor=west] at (1.38,5.20) {one learned bank $U\in\bbR^{K\times D}$};
  \foreach \yy in {4.95,5.12,5.29,5.46} {
    \fill[rgPurpleFill, draw=rgPurple, line width=0.55pt]
      (0.48,\yy-0.065) rectangle (1.20,\yy+0.065);
  }
  \draw[flow] (0.84,4.86) -- (0.84,4.58);
  \node[label, anchor=west, font=\sffamily\tiny] at (0.98,4.70) {broadcast over coordinates};

  \node[label, font=\sffamily\tiny] at (0.68,4.34) {$U$ queries};
  \node[label, font=\sffamily\tiny] at (1.55,4.34) {context\\$n$ keys/values};
  \node[label, font=\sffamily\tiny] at (2.45,4.34) {read 1};
  \node[label, font=\sffamily\tiny] at (3.85,4.34) {read 2};
  \node[label, font=\sffamily\tiny] at (4.88,4.34) {induced latents};

  \foreach \yy/\edge/\fillc/\idx in {
      3.78/rgRed/rgRedFill/1,
      2.93/rgOrange/rgOrangeFill/2,
      2.08/rgGreen/rgGreenFill/3,
      1.23/rgBlue/rgBlueFill/d} {
    \foreach \dy in {-0.16,0,0.16} {
      \fill[rgPurpleFill, draw=rgPurple, line width=0.45pt]
        (0.38,\yy+\dy-0.055) rectangle (0.98,\yy+\dy+0.055);
    }
    \foreach \dy in {-0.24,-0.12,0,0.12,0.24} {
      \fill[\fillc, draw=\edge, line width=0.40pt]
        (1.30,\yy+\dy-0.045) rectangle (1.80,\yy+\dy+0.045);
    }
    \node[module, minimum width=0.60cm, minimum height=0.46cm] (a\idx) at (2.45,\yy) {CA};
    \node[module, minimum width=0.60cm, minimum height=0.46cm] (b\idx) at (3.85,\yy) {CA};
    \draw[flow, rounded corners=2pt]
      (1.01,\yy) -- (1.08,\yy+0.34) -- (1.98,\yy+0.34) -- (1.98,\yy) -- (a\idx.west);
    \draw[wire] (1.83,\yy) -- (a\idx.west);
    \fill[\fillc, draw=\edge, line width=0.45pt]
      (3.08,\yy-0.12) rectangle (3.28,\yy+0.12);
    \fill[white, opacity=0.28]
      (3.12,\yy-0.07) rectangle (3.24,\yy+0.07);
    \draw[flow] (a\idx.east) -- (3.08,\yy);
    \draw[flow] (3.28,\yy) -- (b\idx.west);
    \draw[wire, rounded corners=2pt]
      (1.83,\yy-0.13) -- (1.95,\yy-0.13) -- (1.95,\yy-0.27) -- (3.43,\yy-0.27) -- (3.43,\yy-0.11) -- (b\idx.west);
    \foreach \dy in {-0.16,0,0.16} {
      \fill[\fillc, draw=\edge, line width=0.55pt]
        (4.55,\yy+\dy-0.055) rectangle (5.20,\yy+\dy+0.055);
      \fill[white, opacity=0.28]
        (4.64,\yy+\dy-0.027) rectangle (5.11,\yy+\dy+0.027);
    }
    \draw[flow] (b\idx.east) -- (4.55,\yy);
    \node[label, anchor=west, font=\sffamily\tiny] at (5.42,\yy) {$i=\idx$};
  }

  \node[label, font=\sffamily\tiny] at (5.62,1.66) {$\vdots$};
  \node[label, font=\sffamily\tiny, text width=5.35cm] at (3.10,0.54)
    {$Q$: $U$ for read 1, updated latents for read 2;\\$K,V$: the same context for both reads};
\end{tikzpicture}%
  }
  \caption{Induced context bottleneck.
    The learned bank $U$ is broadcast across coordinates.
    For each coordinate, the first cross-attention layer uses $U$ as queries and that coordinate's context tokens as keys and values; the second uses the updated latents as queries and the same context tokens as keys and values, producing $K$ context-specific latent vectors.}
  \label{fig:app-induced-latents}
\end{figure}
The remaining cost is the $d\times d$ relation graph, which is favorable in the long-context, moderate-dimension regime of \gls{MI} estimation.

\noindent \textbf{Boundary parameterization.}
The estimator multiplies squared velocity differences by $(1-t)/t$, which diverges as $t\to0$.
Since squared differences cannot be negative, any violation of the boundary condition $\vel_0(z)=z$ stated in \Cref{sec:introduction} becomes systematic positive bias where the weight is largest.
\model{} satisfies this condition by construction by adopting the parametrization described in \cite{wang2026relative,hu2025improving}.

\noindent \textbf{Model family.}
We instantiate \model{} at a range of sizes that share the number of induced latents $K$, the relation-feature width $R$, and the time-feature resolution, so that model size affects only the backbone capacity, without changing the context bottleneck or the graph statistic. Parameter counts are independent of the joint width and the context length, and a trained checkpoint is exported as a self-contained model (weights, configuration, and source), usable at any joint width without modification.

The Small and Base configurations are listed in \Cref{tab:alice-sizes}.
\model{} Base is the checkpoint reported in \Cref{sec:exp:czyz}, and \Cref{app:czyz} compares the Small and Base checkpoints on the benchmark.

\begin{table}[h]
  \centering\small
  \caption{The two \model{} variants, with model configuration values and parameter counts for each preset.}
  \label{tab:alice-sizes}
  \begin{tabular}{@{}lcc@{}}
    \toprule
    Config name & \model{} Small & \model{} Base \\
    \midrule
    Model hidden dimension & $384$ & $768$ \\
    Layers & $5$ & $6$ \\
    Attention heads & $8$ & $12$ \\
    Feed-forward dimension & $1536$ & $3072$ \\
    Time frequencies & $16$ & $16$ \\
    Dropout & $0.1$ & $0.1$ \\
    Induced latents & $128$ & $128$ \\
    Relation features & $16$ & $16$ \\
    Parameters & $19{,}296{,}113$ & $85{,}200{,}633$ \\
    \bottomrule
  \end{tabular}
\end{table}

\section{Training and Implementation Details}
\label{app:training}

This section presents the reference implementation of \model{}.

\subsection{The training corpus}
\label{app:training:corpus}

A corpus episode is one synthetic joint distribution over $z=(x,y)\in\bbR^d$, generated from a seed, with the fixed split $s=\lfloor d/2\rfloor$: coordinates $[0,s)$ are $X$ and $[s,d)$ are $Y$. Each episode is stored as a clean point pool of $2176$ samples in single precision, together with a few scalar metadata fields; normalization, noising, indicator sampling, and targets are computed at train time.

\paragraph{Composition.} The corpus covers the joint widths $\{2,3,4,5,6,8,10,12,16,20,25,32,50,100\}$ with $70{,}000$ episodes per width, drawn from four families: copula mixtures, latent warps, manifolds, and nonparametric regressions, with probabilities $0.30$, $0.25$, $0.25$, and $0.20$. A further copula-only share brings the copula fraction of the whole corpus to about $0.40$, and part of the corpus enables the two geometric modifications described below, same-sign factor covariances and the plane-rotation warp.

\paragraph{Copula mixtures.} Between $1$ and $60$ Gaussian or Student-t components with random weights and means. Each component draws a low-rank covariance $\Sigma=WW^\top+D$ of random rank, converted to a correlation and rescaled per coordinate; with probability $0.3$ it instead draws a sparse correlation with a few disjoint $X_i\!\leftrightarrow\!Y_i$ pairs whose strength is coherent within an episode; and with probability $\tfrac12$ the $X\!\leftrightarrow\!Y$ cross-block of every component is scaled down, to zero half the time, which produces weakly dependent and independent joints. Most sampled pools are then passed through an additive-coupling bijection \citep{dinh2017realnvp}, either within each block, which preserves $I(X;Y)$, or across a random coordinate partition, which leaves it unknown.

\paragraph{Latent warps.} A mixture of anisotropic Gaussians whose means lie along a random curve is standardized and pushed through a few random layers, each an additive coupling shift, an elementwise sinusoidal fold, or a rotation. The fold is non-injective, so $I(X;Y)$ is unknown by construction.

\paragraph{Nonparametric regressions.} The input is Gaussian, or a two-component mixture, and the response is a random Fourier-feature function of the input plus Gaussian noise of random scale, which provides a controlled noise floor and a smooth nonlinear conditional mean.

\paragraph{Manifolds.} The pool lies on a low-dimensional curved support, a curve or a surface winding around the origin, thickened by transverse Gaussian noise of random scale and rotated at random. At small noise the support is near-singular, which is the regime where a velocity field must resolve a thin set.

\paragraph{Plane-rotation warp.} An MI-preserving diffeomorphism rotates randomly chosen coordinate planes of a block by an angle that grows with the block norm, occasionally followed by a monotone radial stretch. Each rotation preserves the block norm, so $I(X;Y)$ is unchanged, and the warp acts on a point cloud, so it applies to every family.

\paragraph{Same-sign factor covariances.} The low-rank draw above has sign-symmetric loadings, so joints in which every coordinate pair is positively correlated, a common structure in measured data with a shared latent factor, have vanishing probability under it. Part of the corpus therefore draws equicorrelated or positive low-rank covariances instead.

\subsection{Batch construction}
\label{app:training:batch}

For each batch, the procedure (i) samples one context length shared by all its distributions (for variable-context training), (ii) samples disjoint context/query samples from each pool, and (iii) applies Gaussian-copula softrank normalization: each marginal is mapped to $\Normald{0}{1}$ through the empirical CDF \emph{fit on
the context} and applied out-of-sample to the query. Batches are dimension-homogeneous: each batch is drawn from a single joint width. Variable context length is realized either by truncating to the sampled length or by hiding context samples behind the context-validity mask; the two are equivalent (see also \Cref{sec:method:arch}).

\subsection{Objective, noising indicators, optimizer}
\label{app:training:opt}

The loss is the masked velocity MSE defined in \Cref{eq:objective}, supervised only on noised coordinates. Each query draws its own time $t\sim\cU[0,1]$ and noise $\epsilon\sim\Normald{0}{I}$, and the target $z_0-\epsilon$ is available exactly because the training loop draws $\epsilon$, $t$, and $m$ itself; no ground-truth density or \gls{MI} values are required for training at any point.

The per-query noising-indicator mixture is: all-noised with prob.\ $0.35$; an $X\!\mid\!Y$ or $Y\!\mid\!X$ block pattern with prob.\ $0.30$ (split evenly); otherwise a per-coordinate $\mathrm{Bernoulli}(0.5)$ indicator (all-zero draws fall back to all-noised). The mixture covers the three indicator patterns the estimator queries at inference (\Cref{eq:mi-velocity}) and, through the random subsets, general partial observation. The block patterns use the fixed split $s=\lfloor d/2\rfloor$. Training otherwise operates on the whole vector $z$: the $X{:}Y$ partition is used in training only through those block masks and through the corpus's block-structured couplings (decoupling and per-block flows, also at $s$), and the specific partition otherwise appears only at output time.

Each training step draws $128$ distributions from the corpus and one clean context from each. The context length is sampled uniformly per batch between $128$ and the training window (by truncation, or equivalently by masking, \Cref{app:training:batch}), so one set of weights is trained for every context length up to that window; longer contexts are extrapolation (\Cref{app:czyz}). The optimizer is AdamW with $\beta_1{=}0.9$, $\beta_2{=}0.95$, and weight decay $0.01$, with a linear warmup over $100$ steps and gradient norms clipped at $1.0$. Training is executed in \texttt{bf16} mixed precision with compiled kernels on two data-parallel replicas (DDP), with gradient accumulation setting the effective batch size. Training proceeds in two phases. The first runs $500{,}000$ steps with a context window of $1024$ samples and a cosine decay of the learning rate to zero after the warmup. The second starts from the first-phase weights with a fresh optimizer state and runs $60{,}000$ steps with a context window of $2048$ samples, holding the learning rate constant after the warmup. The peak learning rate is $3\cdot10^{-4}$ for \model{} Base and $10^{-3}$ for \model{} Small in both phases; the per-device batch is $16$ distributions with $4$ accumulation steps, except for \model{} Base in the second phase, which uses $8$ with $8$.

\subsection{Hardware for training and inference}
\label{app:hardware}

We train our \model{} variants using 2x H200 GPUs: \model{}-Small requires 2 days and 22 hours (about 8,674 optimizer steps per hour) whereas \model{}-Base requires 5 days and 10 hours (about 4,300 optimizer steps hour) of training. 
As a comparison, our understanding is that InfoAtlas \citep{hu2026infoatlas} requires 2 weeks of training on 16x H800 GPUs.

For inference, we use a single H200 GPU in all our experiments.

\section{Ground-truth benchmark: details and per-task results}
\label{app:czyz}

This Section completes \Cref{sec:exp:czyz}: it specifies the inference settings, reports the aggregate (\Cref{tab:czyz-summary}), joint-width (\Cref{tab:czyz-width}), and per-task (\Cref{tab:czyz-per-task}) results at the three matched budgets.

\paragraph{Inference settings.}
Every task provides precomputed samples and a closed-form ground-truth MI.
We use three sample budgets $N\in\{1000,5000,10000\}$ for all methods. \model{} splits each budget into $64$ query samples and a context of the remaining $N-64$ clean samples; \Cref{alg:mi} averages the velocity-difference integrand over the $64$ query samples and $n_t{=}64$ time draws per query sample.
Every reported number is a mean over eight independent context draws; where a spread is given, it is the sample standard deviation over the draws.
Training samples the context length uniformly between $128$ and the training window, $2048$ samples in the final phase (\Cref{app:training:opt}), so the contexts at the $5000$ and $10000$ budgets are extrapolation beyond the training window; the context attention is permutation invariant and uses no positional encoding, so the model accepts these longer contexts, and \Cref{tab:czyz-width} reports how each model size behaves there.
Per-coordinate monotone transforms (\texttt{normal\_cdf}, \texttt{half\_cube}, \texttt{asinh}) are absorbed by the rank-based copula normalization applied at inference; \Cref{fig:czyz-category-mae} groups them with their base tasks and \Cref{tab:czyz-per-task} lists them separately.
InfoAtlas conditions on the same contexts.
Competitor numbers are five-seed means: the neural estimators are trained on the $N$ samples of each individual task (including per-task hyperparameter tuning), and the classic estimators are fit on them.

\paragraph{Aggregate accuracy.}
\Cref{tab:czyz-summary} reports the MAE of every estimator over the suite at the three budgets.
\model{} Base has the lowest error at each budget among the estimators of \Cref{fig:czyz-category-mae}, and its lead is largest at $1$k samples, where every neural estimator is above $0.24$ nats and the best classic estimator, CCA, is at $0.195$.
\Cref{tab:czyz-per-task} reports every estimate at $10$k samples.

\begin{table}[h]
  \centering\footnotesize
  \caption{Mean absolute error in nats over the $40$ tasks of the Czy\.{z} benchmark at matched budgets of $1$k, $5$k, and $10$k samples per task.
    \model{} and InfoAtlas condition on a context of $N-64$ samples from that budget, and their cells give the mean and sample standard deviation over eight independent context draws; neural estimators are trained per distribution on that number of samples and classic estimators are fit on it, both as five-seed means.
    ``$>10$'' marks a diverged estimator.}
  \label{tab:czyz-summary}
  \begin{tabular}{@{}lccc@{}}
\toprule
Estimator & $1$k & $5$k & $10$k \\
\midrule
\multicolumn{4}{l}{\emph{Foundation models}} \\
\textbf{ALICE Base} & $0.092 \pm 0.004$ & $0.063 \pm 0.001$ & $0.060 \pm 0.001$ \\
\textbf{ALICE Small} & $0.101 \pm 0.005$ & $0.103 \pm 0.002$ & $0.104 \pm 0.001$ \\
InfoAtlas & $0.253 \pm 0.004$ & $0.271 \pm 0.005$ & $0.276 \pm 0.001$ \\
\midrule
\multicolumn{4}{l}{\emph{Neural estimators}} \\
MINDE--\textsc{c} & $0.353$ & $0.070$ & $0.065$ \\
MINE & $0.248$ & $0.117$ & $0.108$ \\
InfoNCE & $0.887$ & $0.123$ & $0.083$ \\
D-V & $1.229$ & $0.279$ & $0.144$ \\
NWJ & $>10$ & $1.261$ & $0.130$ \\
\midrule
\multicolumn{4}{l}{\emph{Classic estimators}} \\
KSG & $0.288$ & $0.240$ & $0.219$ \\
LNN & $4.864$ & $4.966$ & $4.970$ \\
CCA & $0.195$ & $0.192$ & $0.181$ \\
\bottomrule
\end{tabular}

\end{table}

\paragraph{Joint width.}
\Cref{tab:czyz-width} splits the error of both \model{} sizes and InfoAtlas between the $33$ tasks of joint width at most $10$ and the $7$ tasks of width $50$ and $100$.
\model{} Base improves with the context in both groups, and most in the wide one, from $0.153$ to $0.086$ nats.
\model{} Small improves on the narrow tasks, from $0.081$ to $0.074$ nats, and degrades on the wide ones, from $0.197$ to $0.241$, so its aggregate error is flat in $N$.
InfoAtlas is at $0.10$ to $0.12$ nats on the narrow tasks, where its per-width networks apply, and at $1.0$ nats on the wide tasks, where its sliced fallback outputs $0.02$ nats on the five sparse tasks and $0.46$ on the two dense ones against ground truths of $1.02$ to $1.62$.

\begin{table}[h]
  \centering\footnotesize
  \caption{Czy\.{z} benchmark accuracy by joint width against the sample budget $N$ for both \model{} sizes and InfoAtlas.
    ``dims $\le 10$'' aggregates the $33$ tasks of joint width at most $10$ and ``dims $50/100$'' the $7$ wider ones.
    Cells give the mean and sample standard deviation of the per-draw MAE over eight independent context draws.
    The \model{} training window is $2048$ samples, so the rows at $5000$ and $10000$ are context extrapolation.}
  \label{tab:czyz-width}
  \begin{tabular}{@{}lrccc@{}}
\toprule
Model & $N$ & All ($40$) & dims $\le 10$ ($33$) & dims $50/100$ ($7$) \\
\midrule
\textbf{ALICE Base} & $1000$ & $0.092 \pm 0.004$ & $0.079 \pm 0.004$ & $0.153 \pm 0.008$ \\
 & $5000$ & $0.063 \pm 0.001$ & $0.057 \pm 0.002$ & $0.090 \pm 0.004$ \\
 & $10000$ & $0.060 \pm 0.001$ & $0.054 \pm 0.001$ & $0.086 \pm 0.002$ \\
\midrule
\textbf{ALICE Small} & $1000$ & $0.101 \pm 0.005$ & $0.081 \pm 0.005$ & $0.197 \pm 0.017$ \\
 & $5000$ & $0.103 \pm 0.002$ & $0.077 \pm 0.001$ & $0.228 \pm 0.009$ \\
 & $10000$ & $0.104 \pm 0.001$ & $0.074 \pm 0.002$ & $0.241 \pm 0.006$ \\
\midrule
InfoAtlas & $1000$ & $0.253 \pm 0.004$ & $0.100 \pm 0.004$ & $0.978 \pm 0.008$ \\
 & $5000$ & $0.271 \pm 0.005$ & $0.117 \pm 0.006$ & $0.998 \pm 0.005$ \\
 & $10000$ & $0.276 \pm 0.001$ & $0.123 \pm 0.003$ & $1.000 \pm 0.006$ \\
\bottomrule
\end{tabular}

\end{table}

\paragraph{Error analysis.}
At $N{=}10000$, $35$ of the $40$ tasks fall within $0.1$ nats of ground truth for \model{} Base and the mean signed error is $-0.005$ nats, so the aggregate measure is not influenced by a global bias.
Two groups impact the results.
First, the spiral embeddings are under-estimated by $0.43$ and $0.44$ nats at joint widths $6$ and $10$ and by $0.18$ nats at width $50$, the largest errors \model{} experiences in the suite.
Second, the dense multinormal tasks are over-estimated, by $0.23$ nats at joint width $100$ and $0.15$ at width $50$, growing with width.
\model{} Small shares both failure modes with larger magnitudes: it under-estimates the width-$10$ spiral by $0.47$ nats and over-estimates the width-$100$ dense multinormal by $0.44$, and it also over-estimates three of the five sparse width-$50$ tasks by $0.24$ nats each.

\begin{landscape}

  \begin{table}
    \tiny
    \renewcommand{\tabcolsep}{0.0pt}

    \caption{Per-task MI estimates on the \citet{czyz2023beyond} suite against ground truth (GT), in nats, with every estimator at a budget of $10$k samples per task. Cell shading encodes the signed bias of the estimate: blue for over-estimation, red for under-estimation, with saturation growing linearly up to a bias of $0.6$ nats. Competitor rows report five-seed means. The \model{} rows use zero-shot in-context estimation with a frozen model, and the \model{} and InfoAtlas rows are means over eight independent context draws. Abbreviations: \textit{Mn} multinormal, \textit{St} Student-t, \textit{Nm} normal, \textit{Hc} half-cube, \textit{Sp} spiral.
    }
    \label{tab:czyz-per-task}
    \begin{tabular}{lrrrrrrrrrrrrrrrrrrrrrrrrrrrrrrrrrrrrrrrr}
\toprule
GT & 0.22 & 0.43 & 0.29 & 0.45 & 0.41 & 0.41 & 0.41 & 1.02 & 1.02 & 1.02 & 1.02 & 0.29 & 1.02 & 1.29 & 1.02 & 0.41 & 1.02 & 0.59 & 1.62 & 0.41 & 1.02 & 1.02 & 1.02 & 1.02 & 1.02 & 1.02 & 1.02 & 1.02 & 1.02 & 0.22 & 0.43 & 0.19 & 0.29 & 0.18 & 0.45 & 0.30 & 0.41 & 1.71 & 0.33 & 0.41 \\
\midrule
\multicolumn{41}{l}{\emph{Foundation models}} \\
\textbf{ALICE Base} & {\cellcolor[HTML]{E8EBEF}} 0.26 & {\cellcolor[HTML]{F1EEEF}} 0.42 & {\cellcolor[HTML]{F0F1F1}} 0.30 & {\cellcolor[HTML]{F2F1F1}} 0.44 & {\cellcolor[HTML]{EFF0F1}} 0.42 & {\cellcolor[HTML]{EFF0F1}} 0.42 & {\cellcolor[HTML]{EFF0F1}} 0.42 & {\cellcolor[HTML]{F0F0F1}} 1.03 & {\cellcolor[HTML]{E4E8EE}} 1.07 & {\cellcolor[HTML]{DDE3EB}} 1.09 & {\cellcolor[HTML]{E6EAEE}} 1.06 & {\cellcolor[HTML]{F2F2F2}} 0.29 & {\cellcolor[HTML]{F0F1F1}} 1.03 & {\cellcolor[HTML]{C8D5E5}} 1.44 & {\cellcolor[HTML]{E3E8ED}} 1.07 & {\cellcolor[HTML]{ECEEF0}} 0.43 & {\cellcolor[HTML]{DDE3EB}} 1.10 & {\cellcolor[HTML]{E5E9EE}} 0.64 & {\cellcolor[HTML]{B1C4DD}} 1.85 & {\cellcolor[HTML]{F0F0F1}} 0.42 & {\cellcolor[HTML]{EFF0F1}} 1.03 & {\cellcolor[HTML]{E2E7ED}} 1.08 & {\cellcolor[HTML]{DDE4EB}} 1.09 & {\cellcolor[HTML]{E9BBC8}} 0.84 & {\cellcolor[HTML]{DD7190}} 0.59 & {\cellcolor[HTML]{DD6C8C}} 0.58 & {\cellcolor[HTML]{F1ECEE}} 1.00 & {\cellcolor[HTML]{EFE1E5}} 0.96 & {\cellcolor[HTML]{F1EFEF}} 1.01 & {\cellcolor[HTML]{E8EBEF}} 0.26 & {\cellcolor[HTML]{F1EEEF}} 0.42 & {\cellcolor[HTML]{EEEFF1}} 0.21 & {\cellcolor[HTML]{EFF0F1}} 0.30 & {\cellcolor[HTML]{F0F1F1}} 0.19 & {\cellcolor[HTML]{F2F2F2}} 0.45 & {\cellcolor[HTML]{F2F1F1}} 0.29 & {\cellcolor[HTML]{DEE4EC}} 0.48 & {\cellcolor[HTML]{EEDADF}} 1.63 & {\cellcolor[HTML]{F1EAEC}} 0.31 & {\cellcolor[HTML]{F0F1F1}} 0.42 \\
\textbf{ALICE Small} & {\cellcolor[HTML]{F2F2F2}} 0.23 & {\cellcolor[HTML]{EAEDF0}} 0.46 & {\cellcolor[HTML]{F1F1F2}} 0.29 & {\cellcolor[HTML]{EEDBE1}} 0.37 & {\cellcolor[HTML]{DDE3EB}} 0.49 & {\cellcolor[HTML]{DDE3EB}} 0.49 & {\cellcolor[HTML]{DDE4EB}} 0.48 & {\cellcolor[HTML]{ABC1DC}} 1.27 & {\cellcolor[HTML]{DBE2EB}} 1.10 & {\cellcolor[HTML]{E8EBEF}} 1.06 & {\cellcolor[HTML]{CDD8E6}} 1.15 & {\cellcolor[HTML]{E6EAEE}} 0.34 & {\cellcolor[HTML]{ADC2DC}} 1.26 & {\cellcolor[HTML]{92AFD4}} 1.62 & {\cellcolor[HTML]{DCE3EB}} 1.10 & {\cellcolor[HTML]{DDE4EB}} 0.48 & {\cellcolor[HTML]{E9EBEF}} 1.05 & {\cellcolor[HTML]{CFDAE7}} 0.71 & {\cellcolor[HTML]{7399CA}} 2.06 & {\cellcolor[HTML]{DEE4EC}} 0.48 & {\cellcolor[HTML]{ACC1DC}} 1.26 & {\cellcolor[HTML]{DAE1EA}} 1.11 & {\cellcolor[HTML]{E8EBEF}} 1.06 & {\cellcolor[HTML]{ECCCD5}} 0.89 & {\cellcolor[HTML]{E28EA6}} 0.69 & {\cellcolor[HTML]{DB6587}} 0.55 & {\cellcolor[HTML]{DEE4EC}} 1.09 & {\cellcolor[HTML]{EEEFF1}} 1.03 & {\cellcolor[HTML]{F1EEEF}} 1.01 & {\cellcolor[HTML]{F1F1F2}} 0.23 & {\cellcolor[HTML]{ECEEF0}} 0.45 & {\cellcolor[HTML]{F0F0F1}} 0.20 & {\cellcolor[HTML]{F1F1F2}} 0.30 & {\cellcolor[HTML]{F2F0F1}} 0.17 & {\cellcolor[HTML]{EFDDE2}} 0.38 & {\cellcolor[HTML]{EFDDE2}} 0.23 & {\cellcolor[HTML]{CCD7E6}} 0.55 & {\cellcolor[HTML]{C9D6E5}} 1.85 & {\cellcolor[HTML]{EEEFF1}} 0.35 & {\cellcolor[HTML]{DEE4EC}} 0.48 \\
InfoAtlas & {\cellcolor[HTML]{F0E3E7}} 0.18 & {\cellcolor[HTML]{EDD5DC}} 0.33 & {\cellcolor[HTML]{EFDDE2}} 0.22 & {\cellcolor[HTML]{EDD1D9}} 0.34 & {\cellcolor[HTML]{F1EDEE}} 0.40 & {\cellcolor[HTML]{F1ECEE}} 0.39 & {\cellcolor[HTML]{F1EDEE}} 0.40 & {\cellcolor[HTML]{D53D69}} 0.02 & {\cellcolor[HTML]{F0E4E8}} 0.98 & {\cellcolor[HTML]{EFE2E6}} 0.97 & {\cellcolor[HTML]{EFE0E5}} 0.96 & {\cellcolor[HTML]{F1EDEE}} 0.28 & {\cellcolor[HTML]{D53D69}} 0.02 & {\cellcolor[HTML]{D53D69}} 0.46 & {\cellcolor[HTML]{F0E5E8}} 0.98 & {\cellcolor[HTML]{F1EAEC}} 0.39 & {\cellcolor[HTML]{EFE0E5}} 0.96 & {\cellcolor[HTML]{F0E7EA}} 0.56 & {\cellcolor[HTML]{D53D69}} 0.46 & {\cellcolor[HTML]{F1ECED}} 0.39 & {\cellcolor[HTML]{D53D69}} 0.02 & {\cellcolor[HTML]{F0E5E8}} 0.98 & {\cellcolor[HTML]{EFE0E4}} 0.96 & {\cellcolor[HTML]{D53D69}} 0.02 & {\cellcolor[HTML]{D85178}} 0.49 & {\cellcolor[HTML]{DE7795}} 0.61 & {\cellcolor[HTML]{D53D69}} 0.02 & {\cellcolor[HTML]{EBC5D0}} 0.87 & {\cellcolor[HTML]{EBC6D1}} 0.88 & {\cellcolor[HTML]{E9BCC9}} 0.04 & {\cellcolor[HTML]{E185A0}} 0.07 & {\cellcolor[HTML]{F0E6E9}} 0.15 & {\cellcolor[HTML]{EFDEE3}} 0.22 & {\cellcolor[HTML]{F0E5E8}} 0.14 & {\cellcolor[HTML]{EDD1D9}} 0.34 & {\cellcolor[HTML]{EFDDE2}} 0.23 & {\cellcolor[HTML]{E393AA}} 0.10 & {\cellcolor[HTML]{D53D69}} 1.10 & {\cellcolor[HTML]{EAC2CD}} 0.17 & {\cellcolor[HTML]{F1EDEE}} 0.40 \\
\midrule
\multicolumn{41}{l}{\emph{Neural estimators}} \\
MINDE--\textsc{c} & {\cellcolor[HTML]{F0E8EA}} 0.19 & {\cellcolor[HTML]{F0E5E8}} 0.39 & {\cellcolor[HTML]{F1E9EB}} 0.26 & {\cellcolor[HTML]{F0E6E9}} 0.41 & {\cellcolor[HTML]{F1EEEF}} 0.40 & {\cellcolor[HTML]{F1EEEF}} 0.40 & {\cellcolor[HTML]{F1EEEF}} 0.40 & {\cellcolor[HTML]{F0E2E6}} 0.97 & {\cellcolor[HTML]{F0E5E8}} 0.98 & {\cellcolor[HTML]{EFDFE4}} 0.96 & {\cellcolor[HTML]{F1EEEF}} 1.01 & {\cellcolor[HTML]{F2F1F1}} 0.29 & {\cellcolor[HTML]{EDEEF0}} 1.04 & {\cellcolor[HTML]{EED6DD}} 1.20 & {\cellcolor[HTML]{F0E8EB}} 0.99 & {\cellcolor[HTML]{EDEFF0}} 0.43 & {\cellcolor[HTML]{F0E8EB}} 0.99 & {\cellcolor[HTML]{EDEFF0}} 0.61 & {\cellcolor[HTML]{ECD0D8}} 1.51 & {\cellcolor[HTML]{F1EBED}} 0.39 & {\cellcolor[HTML]{F1EEEF}} 1.01 & {\cellcolor[HTML]{EED6DD}} 0.93 & {\cellcolor[HTML]{EDD3DB}} 0.92 & {\cellcolor[HTML]{ECCAD4}} 0.89 & {\cellcolor[HTML]{E8B2C2}} 0.81 & {\cellcolor[HTML]{E7ACBD}} 0.79 & {\cellcolor[HTML]{F1EEEF}} 1.01 & {\cellcolor[HTML]{EDD0D9}} 0.91 & {\cellcolor[HTML]{EDD0D9}} 0.91 & {\cellcolor[HTML]{ECCDD6}} 0.10 & {\cellcolor[HTML]{E5A3B6}} 0.17 & {\cellcolor[HTML]{F1EBED}} 0.17 & {\cellcolor[HTML]{EFE0E4}} 0.23 & {\cellcolor[HTML]{F2EFF0}} 0.17 & {\cellcolor[HTML]{EFE0E5}} 0.39 & {\cellcolor[HTML]{F1ECEE}} 0.28 & {\cellcolor[HTML]{EAC1CD}} 0.25 & {\cellcolor[HTML]{EED7DE}} 1.62 & {\cellcolor[HTML]{F0E5E8}} 0.29 & {\cellcolor[HTML]{F1EEEF}} 0.40 \\
MINE & {\cellcolor[HTML]{F2F1F1}} 0.22 & {\cellcolor[HTML]{EFE2E6}} 0.38 & {\cellcolor[HTML]{EED7DD}} 0.20 & {\cellcolor[HTML]{EBC5D0}} 0.30 & {\cellcolor[HTML]{F0F1F1}} 0.42 & {\cellcolor[HTML]{F0F1F1}} 0.42 & {\cellcolor[HTML]{F0F1F1}} 0.42 & {\cellcolor[HTML]{EBC4CF}} 0.87 & {\cellcolor[HTML]{F0E8EB}} 0.99 & {\cellcolor[HTML]{EFDFE4}} 0.96 & {\cellcolor[HTML]{F1EEEF}} 1.01 & {\cellcolor[HTML]{F0F1F1}} 0.30 & {\cellcolor[HTML]{EDD0D9}} 0.91 & {\cellcolor[HTML]{F0E5E8}} 1.25 & {\cellcolor[HTML]{F2F2F2}} 1.02 & {\cellcolor[HTML]{F2F1F1}} 0.41 & {\cellcolor[HTML]{F0E5E8}} 0.98 & {\cellcolor[HTML]{F1EEEF}} 0.58 & {\cellcolor[HTML]{F1EBEC}} 1.60 & {\cellcolor[HTML]{F2F1F1}} 0.41 & {\cellcolor[HTML]{E8B5C4}} 0.82 & {\cellcolor[HTML]{EDD0D9}} 0.91 & {\cellcolor[HTML]{EBC7D2}} 0.88 & {\cellcolor[HTML]{E49DB2}} 0.74 & {\cellcolor[HTML]{E1859F}} 0.66 & {\cellcolor[HTML]{DF7996}} 0.62 & {\cellcolor[HTML]{E8B2C2}} 0.81 & {\cellcolor[HTML]{EBC7D2}} 0.88 & {\cellcolor[HTML]{EAC1CD}} 0.86 & {\cellcolor[HTML]{E8B4C3}} 0.02 & {\cellcolor[HTML]{DE7694}} 0.02 & {\cellcolor[HTML]{EDD6DD}} 0.10 & {\cellcolor[HTML]{E9BBC9}} 0.11 & {\cellcolor[HTML]{EFE0E5}} 0.12 & {\cellcolor[HTML]{E498AE}} 0.15 & {\cellcolor[HTML]{ECCBD5}} 0.17 & {\cellcolor[HTML]{F1EEEF}} 0.40 & {\cellcolor[HTML]{F0E3E7}} 1.66 & {\cellcolor[HTML]{F1EBED}} 0.31 & {\cellcolor[HTML]{F0F1F1}} 0.42 \\
InfoNCE & {\cellcolor[HTML]{F2F1F1}} 0.22 & {\cellcolor[HTML]{F0E8EB}} 0.40 & {\cellcolor[HTML]{EFDDE2}} 0.22 & {\cellcolor[HTML]{EDD4DC}} 0.35 & {\cellcolor[HTML]{F2F1F1}} 0.41 & {\cellcolor[HTML]{F0F1F1}} 0.42 & {\cellcolor[HTML]{F0F1F1}} 0.42 & {\cellcolor[HTML]{E8B5C4}} 0.82 & {\cellcolor[HTML]{F1EBED}} 1.00 & {\cellcolor[HTML]{F0E2E6}} 0.97 & {\cellcolor[HTML]{F2F2F2}} 1.02 & {\cellcolor[HTML]{F2F1F1}} 0.29 & {\cellcolor[HTML]{EBC7D2}} 0.88 & {\cellcolor[HTML]{F0E5E8}} 1.25 & {\cellcolor[HTML]{F1EEEF}} 1.01 & {\cellcolor[HTML]{F2F1F1}} 0.41 & {\cellcolor[HTML]{F0E8EB}} 0.99 & {\cellcolor[HTML]{F1EEEF}} 0.58 & {\cellcolor[HTML]{EFDFE3}} 1.56 & {\cellcolor[HTML]{F2F1F1}} 0.41 & {\cellcolor[HTML]{E6A6B9}} 0.77 & {\cellcolor[HTML]{EFDFE4}} 0.96 & {\cellcolor[HTML]{EDD0D9}} 0.91 & {\cellcolor[HTML]{E394AB}} 0.71 & {\cellcolor[HTML]{E5A0B4}} 0.75 & {\cellcolor[HTML]{E28EA6}} 0.69 & {\cellcolor[HTML]{E5A3B6}} 0.76 & {\cellcolor[HTML]{EED6DD}} 0.93 & {\cellcolor[HTML]{EBC7D2}} 0.88 & {\cellcolor[HTML]{ECCAD3}} 0.09 & {\cellcolor[HTML]{E9B8C6}} 0.24 & {\cellcolor[HTML]{F0E8EA}} 0.16 & {\cellcolor[HTML]{EFE0E4}} 0.23 & {\cellcolor[HTML]{F0E6E9}} 0.14 & {\cellcolor[HTML]{EEDAE0}} 0.37 & {\cellcolor[HTML]{EED7DE}} 0.21 & {\cellcolor[HTML]{F1EEEF}} 0.40 & {\cellcolor[HTML]{F1ECEE}} 1.69 & {\cellcolor[HTML]{F1EEEF}} 0.32 & {\cellcolor[HTML]{F0F1F1}} 0.42 \\
D-V & {\cellcolor[HTML]{F2F1F1}} 0.22 & {\cellcolor[HTML]{F0E5E8}} 0.39 & {\cellcolor[HTML]{EFE0E4}} 0.23 & {\cellcolor[HTML]{EDD4DC}} 0.35 & {\cellcolor[HTML]{F2F1F1}} 0.41 & {\cellcolor[HTML]{F0F1F1}} 0.42 & {\cellcolor[HTML]{F2F1F1}} 0.41 & {\cellcolor[HTML]{E9B8C6}} 0.83 & {\cellcolor[HTML]{F1EBED}} 1.00 & {\cellcolor[HTML]{F0E2E6}} 0.97 & {\cellcolor[HTML]{F2F2F2}} 1.02 & {\cellcolor[HTML]{F2F1F1}} 0.29 & {\cellcolor[HTML]{ECCAD4}} 0.89 & {\cellcolor[HTML]{F0E8EB}} 1.26 & {\cellcolor[HTML]{F1EEEF}} 1.01 & {\cellcolor[HTML]{F2F1F1}} 0.41 & {\cellcolor[HTML]{F0E5E8}} 0.98 & {\cellcolor[HTML]{F1EEEF}} 0.58 & {\cellcolor[HTML]{F0E5E8}} 1.58 & {\cellcolor[HTML]{F2F1F1}} 0.41 & {\cellcolor[HTML]{E7ACBD}} 0.79 & {\cellcolor[HTML]{EFDFE4}} 0.96 & {\cellcolor[HTML]{EDD0D9}} 0.91 & {\cellcolor[HTML]{E397AD}} 0.72 & {\cellcolor[HTML]{E49DB2}} 0.74 & {\cellcolor[HTML]{E188A2}} 0.67 & {\cellcolor[HTML]{E6A9BB}} 0.78 & {\cellcolor[HTML]{EDD3DB}} 0.92 & {\cellcolor[HTML]{EBC7D2}} 0.88 & {\cellcolor[HTML]{E7AEBF}} 0.00 & {\cellcolor[HTML]{DE7392}} 0.01 & {\cellcolor[HTML]{D53F6B}} -0.40 & {\cellcolor[HTML]{D53D69}} -0.91 & {\cellcolor[HTML]{ECCBD5}} 0.05 & {\cellcolor[HTML]{DF7D99}} 0.06 & {\cellcolor[HTML]{EBC5D0}} 0.15 & {\cellcolor[HTML]{F1EEEF}} 0.40 & {\cellcolor[HTML]{F1ECEE}} 1.69 & {\cellcolor[HTML]{F1EEEF}} 0.32 & {\cellcolor[HTML]{F2F1F1}} 0.41 \\
NWJ & {\cellcolor[HTML]{F2F1F1}} 0.22 & {\cellcolor[HTML]{F0E8EB}} 0.40 & {\cellcolor[HTML]{EEDAE0}} 0.21 & {\cellcolor[HTML]{EDD1D9}} 0.34 & {\cellcolor[HTML]{F2F1F1}} 0.41 & {\cellcolor[HTML]{F0F1F1}} 0.42 & {\cellcolor[HTML]{F2F1F1}} 0.41 & {\cellcolor[HTML]{EABECB}} 0.85 & {\cellcolor[HTML]{F1EBED}} 1.00 & {\cellcolor[HTML]{F0E2E6}} 0.97 & {\cellcolor[HTML]{F2F2F2}} 1.02 & {\cellcolor[HTML]{F2F1F1}} 0.29 & {\cellcolor[HTML]{EDD0D9}} 0.91 & {\cellcolor[HTML]{F0E8EB}} 1.26 & {\cellcolor[HTML]{F1EEEF}} 1.01 & {\cellcolor[HTML]{F2F1F1}} 0.41 & {\cellcolor[HTML]{F0E5E8}} 0.98 & {\cellcolor[HTML]{F1EEEF}} 0.58 & {\cellcolor[HTML]{EEDCE1}} 1.55 & {\cellcolor[HTML]{F2F1F1}} 0.41 & {\cellcolor[HTML]{E7AFBF}} 0.80 & {\cellcolor[HTML]{EFDFE4}} 0.96 & {\cellcolor[HTML]{EDD0D9}} 0.91 & {\cellcolor[HTML]{E397AD}} 0.72 & {\cellcolor[HTML]{E49DB2}} 0.74 & {\cellcolor[HTML]{E188A2}} 0.67 & {\cellcolor[HTML]{E7ACBD}} 0.79 & {\cellcolor[HTML]{EED6DD}} 0.93 & {\cellcolor[HTML]{EBC7D2}} 0.88 & {\cellcolor[HTML]{E8B1C1}} 0.01 & {\cellcolor[HTML]{DD708F}} 0.00 & {\cellcolor[HTML]{EAC1CD}} 0.03 & {\cellcolor[HTML]{D53D69}} -0.78 & {\cellcolor[HTML]{ECCED7}} 0.06 & {\cellcolor[HTML]{DF7D99}} 0.06 & {\cellcolor[HTML]{E9BCC9}} 0.12 & {\cellcolor[HTML]{F1EEEF}} 0.40 & {\cellcolor[HTML]{F1E9EB}} 1.68 & {\cellcolor[HTML]{F1EEEF}} 0.32 & {\cellcolor[HTML]{F2F1F1}} 0.41 \\
\midrule
\multicolumn{41}{l}{\emph{Classic estimators}} \\
KSG & {\cellcolor[HTML]{F0F1F1}} 0.23 & {\cellcolor[HTML]{EFE2E6}} 0.38 & {\cellcolor[HTML]{EDD1D9}} 0.18 & {\cellcolor[HTML]{E8B3C2}} 0.24 & {\cellcolor[HTML]{F2F1F1}} 0.41 & {\cellcolor[HTML]{F2F1F1}} 0.41 & {\cellcolor[HTML]{F2F1F1}} 0.41 & {\cellcolor[HTML]{D53D69}} 0.17 & {\cellcolor[HTML]{ECCAD4}} 0.89 & {\cellcolor[HTML]{E0829D}} 0.65 & {\cellcolor[HTML]{EDEEF0}} 1.04 & {\cellcolor[HTML]{F0F1F1}} 0.30 & {\cellcolor[HTML]{D53D69}} 0.19 & {\cellcolor[HTML]{E7AFBF}} 1.07 & {\cellcolor[HTML]{EFDFE4}} 0.96 & {\cellcolor[HTML]{F0F1F1}} 0.42 & {\cellcolor[HTML]{E49DB2}} 0.74 & {\cellcolor[HTML]{F1EBED}} 0.57 & {\cellcolor[HTML]{E28DA6}} 1.29 & {\cellcolor[HTML]{F2F1F1}} 0.41 & {\cellcolor[HTML]{D53D69}} 0.20 & {\cellcolor[HTML]{EDD3DB}} 0.92 & {\cellcolor[HTML]{E394AB}} 0.71 & {\cellcolor[HTML]{D53D69}} 0.17 & {\cellcolor[HTML]{E397AD}} 0.72 & {\cellcolor[HTML]{DB6486}} 0.55 & {\cellcolor[HTML]{D53D69}} 0.19 & {\cellcolor[HTML]{ECCDD6}} 0.90 & {\cellcolor[HTML]{E28EA6}} 0.69 & {\cellcolor[HTML]{EFE2E6}} 0.17 & {\cellcolor[HTML]{E8B2C2}} 0.22 & {\cellcolor[HTML]{EDD3DA}} 0.09 & {\cellcolor[HTML]{EABECB}} 0.12 & {\cellcolor[HTML]{EDD1D9}} 0.07 & {\cellcolor[HTML]{E6A7B9}} 0.20 & {\cellcolor[HTML]{EBC5D0}} 0.15 & {\cellcolor[HTML]{F0F1F1}} 0.42 & {\cellcolor[HTML]{F1E9EB}} 1.68 & {\cellcolor[HTML]{F1EEEF}} 0.32 & {\cellcolor[HTML]{F2F1F1}} 0.41 \\
LNN & {\cellcolor[HTML]{EBEDF0}} 0.25 & {\cellcolor[HTML]{6D96C8}} 0.89 & {\cellcolor[HTML]{4479BB}} 2.72 & {\cellcolor[HTML]{4479BB}} 6.66 & {\cellcolor[HTML]{F2F1F1}} 0.41 & {\cellcolor[HTML]{F0F1F1}} 0.42 & {\cellcolor[HTML]{F0F1F1}} 0.42 & {\cellcolor[HTML]{4479BB}} 17.34 & {\cellcolor[HTML]{4479BB}} 2.68 & {\cellcolor[HTML]{4479BB}} 6.45 & {\cellcolor[HTML]{AAC0DB}} 1.27 & {\cellcolor[HTML]{8BAAD1}} 0.65 & {\cellcolor[HTML]{4479BB}} 17.34 & {\cellcolor[HTML]{4479BB}} 17.34 & {\cellcolor[HTML]{4479BB}} 3.10 & {\cellcolor[HTML]{4479BB}} 2.48 & {\cellcolor[HTML]{4479BB}} 7.31 & {\cellcolor[HTML]{4479BB}} 6.77 & {\cellcolor[HTML]{4479BB}} 33.48 & {\cellcolor[HTML]{F1EBED}} 0.39 & {\cellcolor[HTML]{4479BB}} 17.34 & {\cellcolor[HTML]{4479BB}} 2.49 & {\cellcolor[HTML]{4479BB}} 7.27 & {\cellcolor[HTML]{4479BB}} 17.34 & {\cellcolor[HTML]{4479BB}} 3.10 & {\cellcolor[HTML]{4479BB}} 7.31 & {\cellcolor[HTML]{4479BB}} 17.34 & {\cellcolor[HTML]{4479BB}} 2.38 & {\cellcolor[HTML]{4479BB}} 7.24 & {\cellcolor[HTML]{D85178}} -0.31 & {\cellcolor[HTML]{D53D69}} -0.70 & {\cellcolor[HTML]{9CB6D7}} 0.49 & {\cellcolor[HTML]{4479BB}} 1.01 & {\cellcolor[HTML]{4479BB}} 2.11 & {\cellcolor[HTML]{4479BB}} 2.47 & {\cellcolor[HTML]{4479BB}} 5.49 & {\cellcolor[HTML]{EEDCE1}} 0.34 & {\cellcolor[HTML]{E7ADBE}} 1.48 & {\cellcolor[HTML]{F0E5E8}} 0.29 & {\cellcolor[HTML]{F0F1F1}} 0.42 \\
CCA & {\cellcolor[HTML]{E7AEBF}} 0.00 & {\cellcolor[HTML]{DD708F}} 0.00 & {\cellcolor[HTML]{E49AB0}} 0.00 & {\cellcolor[HTML]{DC6B8C}} 0.00 & {\cellcolor[HTML]{EEDCE1}} 0.34 & {\cellcolor[HTML]{F2F1F1}} 0.41 & {\cellcolor[HTML]{F0E8EA}} 0.38 & {\cellcolor[HTML]{F0E5E8}} 0.98 & {\cellcolor[HTML]{F0E2E6}} 0.97 & {\cellcolor[HTML]{EFDCE2}} 0.95 & {\cellcolor[HTML]{F0F0F1}} 1.03 & {\cellcolor[HTML]{F0F1F1}} 0.30 & {\cellcolor[HTML]{EDEEF0}} 1.04 & {\cellcolor[HTML]{EAECEF}} 1.32 & {\cellcolor[HTML]{F0F0F1}} 1.03 & {\cellcolor[HTML]{F0F1F1}} 0.42 & {\cellcolor[HTML]{F2F2F2}} 1.02 & {\cellcolor[HTML]{F2F1F1}} 0.59 & {\cellcolor[HTML]{CED9E6}} 1.75 & {\cellcolor[HTML]{F1EBED}} 0.39 & {\cellcolor[HTML]{F0E8EB}} 0.99 & {\cellcolor[HTML]{EFDFE4}} 0.96 & {\cellcolor[HTML]{EFDFE4}} 0.96 & {\cellcolor[HTML]{EABECB}} 0.85 & {\cellcolor[HTML]{D53D69}} 0.23 & {\cellcolor[HTML]{D53D69}} 0.39 & {\cellcolor[HTML]{F0E5E8}} 0.98 & {\cellcolor[HTML]{E9BBC9}} 0.84 & {\cellcolor[HTML]{ECCAD4}} 0.89 & {\cellcolor[HTML]{B9CBE0}} 0.42 & {\cellcolor[HTML]{4479BB}} 2.15 & {\cellcolor[HTML]{EEDCE1}} 0.12 & {\cellcolor[HTML]{D5DEE9}} 0.39 & {\cellcolor[HTML]{EABFCB}} 0.01 & {\cellcolor[HTML]{E9ECEF}} 0.48 & {\cellcolor[HTML]{E59EB2}} 0.02 & {\cellcolor[HTML]{DF7B98}} 0.02 & {\cellcolor[HTML]{EEDAE0}} 1.63 & {\cellcolor[HTML]{EBC4CF}} 0.18 & {\cellcolor[HTML]{F0E8EA}} 0.38 \\
\midrule
dist & \begin{sideways} Asinh @ \textit{St} 1 $\times$ 1 (dof=1) \end{sideways} & \begin{sideways} Asinh @ \textit{St} 2 $\times$ 2 (dof=1) \end{sideways} & \begin{sideways} Asinh @ \textit{St} 3 $\times$ 3 (dof=2) \end{sideways} & \begin{sideways} Asinh @ \textit{St} 5 $\times$ 5 (dof=2) \end{sideways} & \begin{sideways} Bimodal 1 $\times$ 1 \end{sideways} & \begin{sideways} Bivariate \textit{Nm} 1 $\times$ 1 \end{sideways} & \begin{sideways} \textit{Hc} @ Bivariate \textit{Nm} 1 $\times$ 1 \end{sideways} & \begin{sideways} \textit{Hc} @ \textit{Mn} 25 $\times$ 25 (2-pair) \end{sideways} & \begin{sideways} \textit{Hc} @ \textit{Mn} 3 $\times$ 3 (2-pair) \end{sideways} & \begin{sideways} \textit{Hc} @ \textit{Mn} 5 $\times$ 5 (2-pair) \end{sideways} & \begin{sideways} \textit{Mn} 2 $\times$ 2 (2-pair) \end{sideways} & \begin{sideways} \textit{Mn} 2 $\times$ 2 (dense) \end{sideways} & \begin{sideways} \textit{Mn} 25 $\times$ 25 (2-pair) \end{sideways} & \begin{sideways} \textit{Mn} 25 $\times$ 25 (dense) \end{sideways} & \begin{sideways} \textit{Mn} 3 $\times$ 3 (2-pair) \end{sideways} & \begin{sideways} \textit{Mn} 3 $\times$ 3 (dense) \end{sideways} & \begin{sideways} \textit{Mn} 5 $\times$ 5 (2-pair) \end{sideways} & \begin{sideways} \textit{Mn} 5 $\times$ 5 (dense) \end{sideways} & \begin{sideways} \textit{Mn} 50 $\times$ 50 (dense) \end{sideways} & \begin{sideways} \textit{Nm} CDF @ Bivariate \textit{Nm} 1 $\times$ 1 \end{sideways} & \begin{sideways} \textit{Nm} CDF @ \textit{Mn} 25 $\times$ 25 (2-pair) \end{sideways} & \begin{sideways} \textit{Nm} CDF @ \textit{Mn} 3 $\times$ 3 (2-pair) \end{sideways} & \begin{sideways} \textit{Nm} CDF @ \textit{Mn} 5 $\times$ 5 (2-pair) \end{sideways} & \begin{sideways} \textit{Sp} @ \textit{Mn} 25 $\times$ 25 (2-pair) \end{sideways} & \begin{sideways} \textit{Sp} @ \textit{Mn} 3 $\times$ 3 (2-pair) \end{sideways} & \begin{sideways} \textit{Sp} @ \textit{Mn} 5 $\times$ 5 (2-pair) \end{sideways} & \begin{sideways} \textit{Sp} @ \textit{Nm} CDF @ \textit{Mn} 25 $\times$ 25 (2-pair) \end{sideways} & \begin{sideways} \textit{Sp} @ \textit{Nm} CDF @ \textit{Mn} 3 $\times$ 3 (2-pair) \end{sideways} & \begin{sideways} \textit{Sp} @ \textit{Nm} CDF @ \textit{Mn} 5 $\times$ 5 (2-pair) \end{sideways} & \begin{sideways} \textit{St} 1 $\times$ 1 (dof=1) \end{sideways} & \begin{sideways} \textit{St} 2 $\times$ 2 (dof=1) \end{sideways} & \begin{sideways} \textit{St} 2 $\times$ 2 (dof=2) \end{sideways} & \begin{sideways} \textit{St} 3 $\times$ 3 (dof=2) \end{sideways} & \begin{sideways} \textit{St} 3 $\times$ 3 (dof=3) \end{sideways} & \begin{sideways} \textit{St} 5 $\times$ 5 (dof=2) \end{sideways} & \begin{sideways} \textit{St} 5 $\times$ 5 (dof=3) \end{sideways} & \begin{sideways} Swiss roll 2 $\times$ 1 \end{sideways} & \begin{sideways} Uniform 1 $\times$ 1 (additive noise=.1) \end{sideways} & \begin{sideways} Uniform 1 $\times$ 1 (additive noise=.75) \end{sideways} & \begin{sideways} Wiggly @ Bivariate \textit{Nm} 1 $\times$ 1 \end{sideways} \\
\bottomrule
\end{tabular}

  \end{table}

\end{landscape}

\section{Single-cell signaling responses: technical details}
\label{app:slemi-details}

This section complements \Cref{sec:exp:sc-signalling}: it provides additional details and results. 
The reference study for this section is \cite{jetka2019information}, 
for which there are no ground truth \gls{MI} estimates: as such, throughout this section, we compare against the biological conclusions of that study, and note that \gls{MI} estimates are essentially equivalent to our results.

\paragraph{Application domain.}
Cells sense extracellular cues through signaling pathways that convert ligand concentrations into effector activity and gene regulation. A canonical example is the NF-$\mathcal{K}$B pathway, which responds to the inflammatory cytokine TNF-$\alpha$ and regulates immune responses; although the underlying biochemistry is well characterized, how reliably individual cells infer stimulus strength from their response trajectories remains unclear~\citep{purvis2013encoding,lee2014fold,antebi2017operational}. Over the past two decades, cellular signaling has increasingly been formulated in terms of information theory~\citep{nurse2008life,waltermann2011information,brennan2012information,jetka2018information,petkova2019optimal}: an extracellular stimulus ($X$) is transmitted through a stochastic biochemical network to produce a cellular response ($Y$), so mutual information $\MI(X;Y)$ quantifies how much observing the response reduces uncertainty about the stimulus, while channel capacity measures the maximum information transmissible over input distributions. This perspective has enabled measurements of signaling fidelity in pathways such as TNF-$\alpha$--NF-$\mathcal{K}$B and has shown that time-resolved response trajectories can transmit more information than static measurements \citep{tostevin2009mutual,cheong2011information,selimkhanov2014accurate}.

\paragraph{Estimand and metrics.}
The input $X$ is an experimentally controlled stimulus taking one of $m$ values with input distribution $p(X)$, and the output $Y \in \mathbb{R}^{d}$ is a vector of single-cell measurements distributed according to the unknown conditionals $P(Y \mid X = x_i)$. Mutual information decomposes as $\MI(X;Y) = \sum_i p_i D_i$, where $D_i = \KL{P(Y \mid x_i)}{\bar{P}}$ is the divergence of each dose-conditional from the output mixture $\bar{P} = \sum_j p_j P(Y \mid x_j)$. We estimate each $D_i$ with the conditional variant of the estimand (\Cref{app:mi-variants:cond}).
Capacity $C = \max_p \MI(X;Y)$ is computed by the Blahut--Arimoto algorithm \citep{blahut1972computation,arimoto1972algorithm} run directly on the estimated per-dose divergences: since the conditional fields do not depend on $p$, they are cached once, and only the mixture field is re-estimated as the ascent updates $p_i \propto p_i e^{D_i}$. At each iteration the shuffled context of \Cref{app:mi-variants:cond} is redrawn with the current $p$: doses drawn from $p$ select the pool from which each response row is taken, and the dose column is overwritten by an independent draw from $p$, so the response marginal of the context is $\sum_i p_i P(Y \mid x_i)$ and every $D_i$ is measured against the mixture of the current iterate. The sampling noise of the redraw is held fixed across iterations by reseeding from one base seed, so the ascent is a deterministic function of $p$. 
For the pairwise probability of correct discrimination (PCD), which is the Bayes accuracy of deciding between doses $i$ and $j$ from a single cell under equal priors, we exploit the fact that the two-dose mutual information at $p = (\tfrac{1}{2}, \tfrac{1}{2})$ equals the Jensen--Shannon divergence $J_{ij}$, which brackets the Bayes accuracy as $\tfrac{1}{2}(1 + J_{ij}) \le \mathrm{PCD}_{ij} \le \tfrac{1}{2}\big(1 + \min(1, \sqrt{2 \ln 2 \cdot J_{ij}})\big)$ (with $J$ in bits). 

\paragraph{Results in full.}
\Cref{fig:nfkb-pcd} shows the six-panel version of \Cref{fig:nfkb}, with the pairwise discrimination matrices.
The precise numbers presented in \Cref{sec:exp:sc-signalling} are the following. The single-frame capacity (panel B) peaks at $1.12$ to $1.16$ bits at minutes $15$ to $21$, against about $1$ bit in \citet{jetka2019information}; it falls to $0.04$ bits at minute $66$, and the second rise reaches $0.40$ bits at minute $93$. The prefix capacity is $0.87$ bits with the first three frames, $1.29$ with the first five, $1.34$ with the first nine, and between $1.07$ and $1.34$ afterwards, so the prefix of frames exceeds the best single frame. The trajectory capacity is $C\approx1.05$ bits ($0.93$ to $1.20$ across seeds), against $1.3$ bits in the reference study. The PCD averages $0.74$ over the $55$ dose pairs for the single frame at minute $21$ and $0.84$ for the trajectory; over the $15$ pairs of doses at or above $0.5$ ng/ml, where the amplitude of the first peak saturates, the averages are $0.56$ and $0.67$, so the gain from dynamics is concentrated at high doses. 

\begin{figure}[t]
  \centering
  \includegraphics[width=\textwidth]{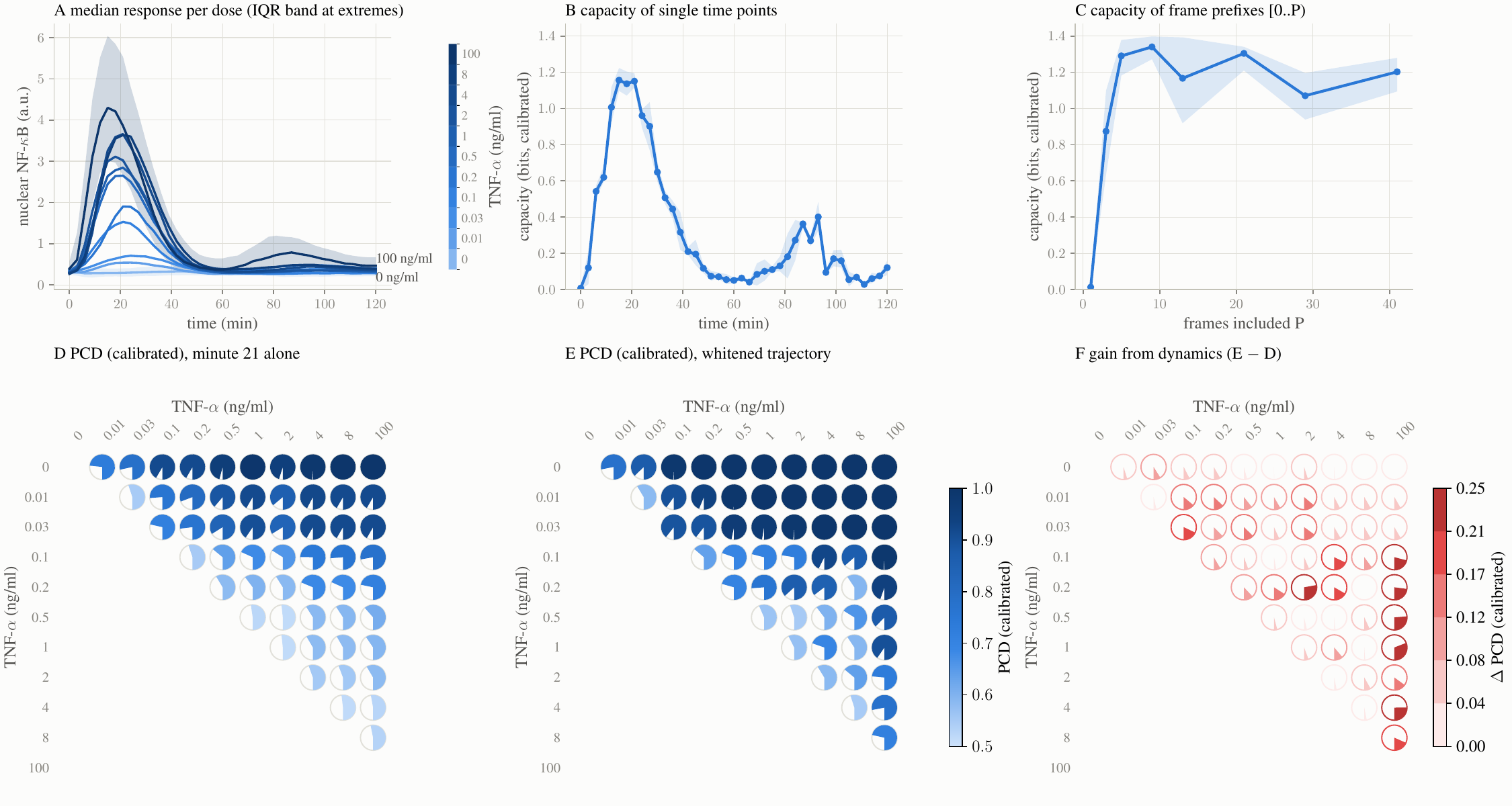}
  \caption{Six-panel analysis of the NF-$\mathcal{K}$B dose channel. \textbf{(A--C)}~As in \Cref{fig:nfkb}. \textbf{(D--F)}~Pairwise probability of correct discrimination between doses: from the single frame at minute $21$ (D), from the trajectory (E), and the gain from dynamics (F, E minus D). The filled fraction of each circle and its color both encode the value, from chance ($0.5$) to certain discrimination ($1$) in D and E, and from $0$ to $0.25$ in F.}
  \label{fig:nfkb-pcd}
\end{figure}

\section{Promoter identification: technical details}
\label{app:promoter-details}

This section expands on the application domain, the dataset, the encoding, and the diagnostics behind the results of \Cref{sec:exp:promoter}, and reports the numbers that the main text summarizes.

\paragraph{Application domain.}
Genomics relies on computational methods to find patterns in large datasets from basic and clinical research \citep{libbrecht2015machine, whalen2022navigating, teschendorff2025epigenetic}. DNA is a sequence of four bases, adenine (A), thymine (T), guanine (G), and cytosine (C), and the order of the bases determines the biological instructions that a strand of DNA carries. We follow the recent practice of treating DNA sequences as text \citep{dotan2024effect, qiao2024model, malusare2024understanding, eapen2025genomic}, with the simplest tokenization: each base is one token, so a sequence is a high-dimensional vector whose coordinates take four values. A central question in molecular biology is the regulation of gene expression: expression requires a stretch of regulatory DNA called a promoter, which contains motifs, that is, patterns whose presence shows a statistically significant dependence with expression levels. Computational methods based on \gls{MI} \citep{elemento2007universal, rao2007motif} search whole genomes for the key elements of transcription regulation by quantifying the dependence between the presence of a motif in a regulatory region and the expression of the corresponding gene; further motif properties, such as position bias, orientation preference, and functional interactions, can be studied with \gls{MI} as well \citep{elemento2007universal}. A minimal eukaryotic promoter contains a transcription start site (TSS) and a \textsc{tata-box} motif about $30$ base pairs upstream of the TSS; in \textit{Arabidopsis thaliana} the preferred position is between $-39$ and $-26$ relative to the TSS \citep{bernard2010tc}.

\paragraph{Dataset.}
\citet{umarov2017recognition} evaluate convolutional promoter-recognition models on sequences extracted from the \textsc{epd} database \citep{dreos2013epd}. We use their \textit{Arabidopsis thaliana} \textsc{tata}-promoter and non-promoter collection, $1{,}497$ and $2{,}879$ sequences respectively, each of $251$ bases; promoter sequences span positions $-200$ to $+50$ around the annotated TSS. We discard sequences containing ambiguous bases and subsample the non-promoter class to $1{,}497$ sequences, so the promoter label $X$ is uniform and the \gls{MI} of every window is bounded by $H(X)=\ln 2$ nats.

\paragraph{Relation to previous work on motif search.}
\citet{umarov2017recognition} localize functional elements by substituting a sliding region of the input with random bases and tracking the drop in classification accuracy. \citet{foresti2026infosedd} uses a recent \gls{MI} estimator based on discrete diffusion to reproduce the same protocol of \citet{umarov2017recognition}.
The \gls{MI} profile we obtain in \Cref{sec:exp:promoter} recasts this search using \model{}: windows on segments irrelevant to promoter status yield values near zero, and windows overlapping the \textsc{tata-box} motif yield high values. 

\paragraph{Encoding.}
The velocity fields of \Cref{eq:mi-velocity} are defined on $\bbR^d$, so discrete symbols are transformed by a fixed injective embedding: each symbol maps to a vector of $K$ real coordinates, where every coordinate holds an independent random permutation of equally spaced standard-normal quantiles, perturbed by a small uniform dither confined within each level. Injectivity preserves $\MI(X;Y)$ exactly for every $K$; the dither removes the ties that would otherwise collapse the Gaussian-copula normalization of the estimator, and each encoded coordinate is approximately standard normal, the scale on which \model{} is trained. A window of $L$ bases concatenates its per-base vectors, giving blocks of dimension $K$ for the label and $LK$ for the window; the unequal, length-dependent widths are handled natively by the variable-dimension capabilities of \model{}. The reported results use $K=1$, the minimal injective width.

\paragraph{Protocol and numbers.}
For each of the $246$ ($L=6$) or $248$ ($L=4$) window positions, \model{} conditions on a context of $1{,}024$ encoded label--window pairs, and the velocity differences are averaged over $1{,}024$ held-out pairs and $32$ time points. Both window lengths place the maximum at $30$ to $32$ bases upstream of the TSS, inside the documented \textsc{tata-box} band: the peak is $0.31$ nats at offset $-30$ for $L=4$ over $248$ windows, and $0.34$ nats at offset $-32$ for $L=6$ over $246$ windows. 

\paragraph{Discussion.}
Both window lengths place the top windows at TSS offsets $-32$ to $-30$, and both resolve the two core promoter elements the sequences carry: the \textsc{tata-box} at $-30$, whose top 6-mers (\texttt{TATATA}, \texttt{TATAAA}) each occur in about $4\%$ of promoters against about $0.4\%$ for the most frequent non-promoter 6-mer, and the initiator element straddling the TSS, which is a pyrimidine/purine pair (position $-1$ is C or T in $94\%$ of promoters, position $+1$ is A or G in $93\%$) and has no recurring k-mer. Since the promoter set is the \textsc{tata}-containing subset of \textsc{epd}, the $-30$ peak acts as a positive control for localization.

\section{Brain Region Activity Patterns: Details}
\label{app:soi-mice-details}

This Section provides additional details about the \gls{O-information} estimator used in \Cref{sec:exp:soi-mice}, the structure of the Visual Behavior Neuropixels data, the selection of flashes that produced the analyzed tables, the experimental protocol, and the full per-window results. 

\paragraph{Estimator.}
For $N$ blocks $X=(X_1,\dots,X_N)$ the total correlation and the dual total correlation are $\mathrm{TC}=\mathrm{KL}\big(p(x)\,\|\,\prod_i p(x_i)\big)$ and $\mathrm{DTC}=H(X)-\sum_i H(X_i\mid X_{\setminus i})$, where $X_{\setminus i}$ denotes all blocks except $X_i$, and the \acrshort{O-information} is $\Omega=\mathrm{TC}-\mathrm{DTC}$ \citep{Rosas2019QuantifyingHI}. \citet{bounoua2024} write both terms as time integrals of squared score differences, evaluated at the same noised coordinates: for $\mathrm{TC}$, between the joint score and the concatenation of the $N$ marginal scores; for $\mathrm{DTC}$, between the joint score and the concatenation of the $N$ scores of each block conditioned on the clean values of the other blocks. Under the interpolant $x_t=(1-t)x_0+t\,\varepsilon$ of \Cref{sec:method}, the score of a block and its velocity are related by $s=((1-t)v-x_t)/t$, so two fields that share the noised coordinate differ by $\Delta s=\tfrac{1-t}{t}\Delta v$. With the weight of \Cref{eq:estimator} this gives
\begin{align}
  \mathrm{TC}  &= \int_0^1 \frac{1-t}{t}\;\Ex\sum_{i=1}^N \Norm{\vmodel(x_t,t;\ctx)\big|_i-\vmodel\big(x_{i,t},t;\ctx_i\big)}^2\,\d t, \label{eq:tc-velocity}\\
  \mathrm{DTC} &= \int_0^1 \frac{1-t}{t}\;\Ex\sum_{i=1}^N \Norm{\vmodel(x_t,t;\ctx)\big|_i-\vmodel\big([x_{i,t},x_{0,\setminus i}],t,\indone_i;\ctx\big)\big|_i}^2\,\d t, \label{eq:dtc-velocity}
\end{align}
where $\ctx_i$ is the context restricted to the columns of block $i$, $\indone_i$ is the noising indicator that noises block $i$ and holds the other blocks at their clean values, and $|_i$ selects the coordinates of block $i$. The marginal field in \Cref{eq:tc-velocity} requires no dedicated mechanism: \model{} accepts any joint width, so the field conditioned on $\ctx_i$ is the field of the marginal law of $X_i$. The conditional field in \Cref{eq:dtc-velocity} is the masked field of \Cref{eq:estimator}, and for $N=2$ \Cref{eq:dtc-velocity} is the \gls{MI} estimator of \Cref{sec:method}. \Cref{eq:dtc-velocity} rests on the identity $\Ex\big[s_{i\mid\setminus i}(x_{i,t};x_{0,\setminus i})\,\big|\,x_t\big]=s(x_t)|_i$. Given the clean values of the other blocks, the noised block $i$ and the noised other blocks are independent, so the conditional score of block $i$ equals the block-$i$ score of $p_t(x_t\mid x_{0,\setminus i})$; averaging that score over $p(x_{0,\setminus i}\mid x_t)$ gives the joint score. All fields are evaluated under common random numbers: the same $(x_0,t,\varepsilon)$ draw is used for every term. Per Monte-Carlo row, the estimator evaluates $2N+1$ velocities: one joint, $N$ conditional, and $N$ marginal. Both integrals are invariant under any per-block bijection, so the copula normalization of \Cref{sec:method}, which is applied per coordinate, leaves them unchanged, and the sub-context fields see the normalized columns of the joint context.

\paragraph{Task and trial structure.}
One image is shown to a mouse for $250$ ms followed by $500$ ms of gray screen, so a new flash starts every $750$ ms. The image repeats over several flashes and then changes; the mouse earns water by licking after a change. A trial is one run of repeats together with the change flash that ends it, and the next trial repeats the image that the change introduced. The position of a flash is its rank inside its trial, counting from one. About $5\%$ of flashes are omitted by design (the screen stays gray). The active block of a session holds about $4{,}800$ flashes. Neuropixels probes record single units in the areas \textsc{VISp}, \textsc{VISl}, \textsc{VISal}, \textsc{VISrl}, \textsc{VISam}, and \textsc{VISpm}. We use the $72$ sessions of \citet{bounoua2024}: mice with a familiar-image and a novel-image session, recorded on consecutive days, with more than $20$ well-isolated units (signal-to-noise ratio above $1$ and fewer than one inter-spike-interval violation) in each of the six areas. The pairing of the two sessions of a mouse and their order (familiar first) were verified against the session table of the Allen Institute.

\paragraph{Selection of flashes.}
The preprocessing of \citet{bounoua2024} is designed to keep, for both flash types, only trials in which the mouse was rewarded, and to drop non-change flashes during which the mouse licked. We reproduced their tables exactly from the raw spike times (identical row counts and values in every session we compared) and found that neither filter has an effect in the released code: the reward filter tests a field that is always missing and is therefore always satisfied, and the lick exclusion is negated twice and is also always satisfied. The analyzed tables are therefore defined as follows. A change flash is any flash of the active block at which the image changed. A non-change flash is any non-omitted flash of the active block at positions $4$ to $10$ of its trial that still shows the image the trial started with. A session holds $145$ to $351$ change flashes and $1{,}154$ to $1{,}715$ non-change flashes. On the first session, for example, the $253$ change flashes comprise $203$ hits and $47$ misses, and the $1{,}476$ non-change flashes comprise $518$ flashes of hit trials, $154$ of miss trials, $546$ of aborted trials, and $247$ of catch trials, $111$ of them with a lick. We keep this selection so that our estimates and those of \citet{bounoua2024} describe the same rows; the outcome of every trial is stored with every flash, so the hit-only and lick-free selections need no new data. Positions $1$ to $3$ of a trial are excluded because the response to a repeated image decreases over the first repeats and levels off from the fourth.

\paragraph{Windows, step size, and dimension.}
For every flash and unit, spikes are counted in $250$ bins of $1$ ms after flash onset, averaged over the units of an area, cut into five windows of $50$ ms, and summed inside each window in steps of $s$ ms. The dimension of the variable of one area in one window is therefore $50/s$: one number at $s=50$ ms (used in \Cref{sec:exp:soi-mice}), $25$ numbers at $s=2$ ms (the main figure of \citet{bounoua2024}), and $10$ or $50$ at $s=5$ or $1$ ms (their appendix). The joint width is the number of areas times this dimension. The file distributed with \citet{bounoua2024} contains the tables at $s=50$ ms; the finer resolutions were rebuilt from the raw spike times. The rebuilt $50$ ms tables reproduce the distributed ones exactly: estimating the $3{,}375$ per-session values from the rebuilt tables returns the same numbers to machine precision.

\paragraph{Correlation between flashes.}
Rows of a session are flashes in temporal order, and consecutive flashes are correlated. \Cref{tab:soi-mice-autocorr} reports the lag autocorrelation of the six-area vector within a session at $100$ to $150$ ms. Change flashes occur once per trial and are close to independent; non-change flashes occur about six times per trial, $750$ ms apart, and are strongly correlated at short lags. The effective number of independent draws per session, from the truncated autocorrelation sum, is $185$ for change flashes (of $251$ nominal rows, median over sessions) and $146$ to $205$ for non-change flashes (of $1{,}447$). The two flash types therefore carry a similar amount of information despite a six-fold difference in row count. Two consequences follow for the protocol. Sample sizes are quoted as effective draws. The context and the evaluation rows of a session are split by contiguous runs, because a random split places repeats of one trial on both sides, and the evaluation points then have near copies in the context.

\begin{table}[t]
  \centering\small
  \caption{Within-session autocorrelation of the six-area vector at lag $k$ (in flashes), mean over areas and sessions, $100$ to $150$ ms window.}
  \label{tab:soi-mice-autocorr}
  \begin{tabular}{@{}lrrrrrrrr@{}}
    \toprule
    $k$ & 1 & 2 & 3 & 5 & 10 & 20 & 50 & 100 \\
    \midrule
    change     & $-0.02$ & $+0.07$ & $+0.07$ & $+0.07$ & $+0.06$ & $+0.05$ & $+0.01$ & $-0.03$ \\
    non-change & $+0.47$ & $+0.38$ & $+0.33$ & $+0.25$ & $+0.15$ & $+0.10$ & $+0.08$ & $+0.07$ \\
    \bottomrule
  \end{tabular}
\end{table}

\paragraph{Protocol details.}
Each session, window, and flash type is estimated from the rows of that session with \model{}-Base: we use a context of $128$ rows and an evaluation set of the remaining rows, capped at $512$, assigned by contiguous runs of $32$ rows, $64$ time draws per evaluation row, and five context draws. 
The context size is the largest one that keeps most change sessions: $192$ rows are required for $128$ context rows and $64$ evaluation rows, and sessions with fewer change flashes are excluded from the change condition, which leaves $31$ familiar and $32$ novel sessions for change flashes and all $36$ of each kind for non-change flashes. The same context size is applied to both flash types because the estimate depends on the context size. In a first run with the context of each flash type set by its own row count ($128$ to $150$ rows for change flashes and $1{,}024$ for non-change flashes), the two flash types differed already in the first window, before the visual response ($+0.12$ nats, $p=2\cdot10^{-8}$ over $63$ sessions); with matched contexts the first-window difference is $+0.01$ nats ($p=0.9$) and the peak difference is unchanged. Two further controls quantify the choices above. Assigning rows to the context at random, without the run structure, changes the estimates by $0.01$ nats at this dimension. Drawing the $128$ context rows from the other $71$ sessions raises the median estimate at the peak from $0.49$ to $0.75$ nats for non-change flashes and from $0.70$ to $0.83$ for change flashes, and removes most of the difference between familiar- and novel-image sessions (medians of $0.80$ and $0.87$ nats for change flashes, against $0.41$ and $1.08$ with own-session contexts); a context pooled across animals describes a mixture whose components share the animal-specific level of activity, and that shared component is attributed to redundancy. Between-session differences account for $27$ to $56\%$ of the variance of every column of the pooled table.

\paragraph{Per-window results.}
\Cref{fig:soi-mice-appendix} completes \Cref{fig:soi-mice} with the familiar-image sessions, the within-session difference between the two flash types for both image sets, and the difference between the two sessions of a mouse for non-change flashes. \Cref{tab:soi-mice-windows} lists the median over sessions of the per-session estimates behind both figures, and \Cref{tab:soi-mice-paired} the paired contrasts with signed-rank tests. In novel-image sessions the \acrshort{O-information} is positive in every window for every mouse, and the maximum over windows falls at $100$ to $150$ ms for $78\%$ of the mice for change flashes and for $47\%$ for non-change flashes. In familiar-image sessions the change-flash profile is flat, and the non-change profile rises late, with its maximum at $150$ to $200$ ms. The familiar session of a mouse precedes the novel one by one day, and the two sessions already differ before the visual response arrives, by $-0.16$ nats in the first window. \Cref{tab:soi-mice-variance} decomposes the variance of the per-session estimates by sequential sums of squares of the crossed factors; the seed share is the variability of the estimator across context draws. Across all $63$ sessions with both flash types, animal identity accounts for $38\%$ of the variance of the peak-window estimates, experience level for $23\%$, flash type for $8\%$, and the variability of the estimator across context draws for $10\%$.

\begin{figure}[t]
  \centering
  \includegraphics[width=\textwidth]{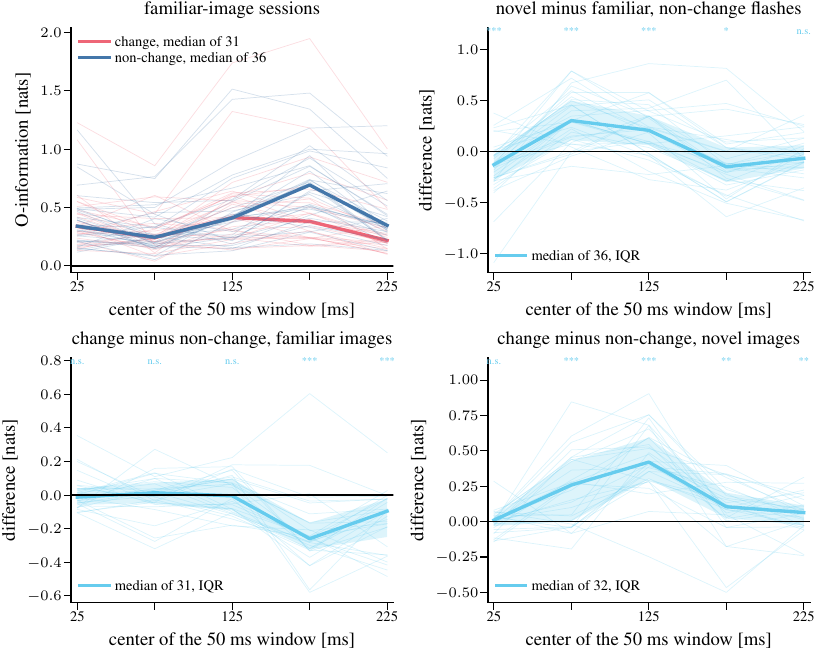}
  \caption{Complement of \Cref{fig:soi-mice}: \acrshort{O-information} of the six visual areas in the five $50$ ms windows after a flash, one estimate per session. Top left: familiar-image sessions, one thin line per mouse and flash type, group medians in bold. Top right: difference between the novel and the familiar session of each mouse for non-change flashes. Bottom: difference between change and non-change flashes within each session, for familiar-image (left) and novel-image (right) sessions. Difference panels show the median, the interquartile band, and a signed-rank test per window ($*$: $p<0.05$, $**$: $p<0.01$, $***$: $p<0.001$). The plotted values are stored with the figure.}
  \label{fig:soi-mice-appendix}
\end{figure}

\begin{table}[t]
  \centering\small
  \caption{Median over sessions of the per-session \acrshort{O-information} (nats) of the six areas, context of $128$ rows drawn from the same session, by window after flash onset. Interquartile range in brackets.}
  \label{tab:soi-mice-windows}
  \setlength{\tabcolsep}{4pt}
  \resizebox{\textwidth}{!}{%
  \begin{tabular}{@{}llccccc@{}}
    \toprule
    sessions & flash type & $0$--$50$ & $50$--$100$ & $100$--$150$ & $150$--$200$ & $200$--$250$ ms \\
    \midrule
    familiar ($n=31$) & change & $0.34$ [$0.25$, $0.47$] & $0.24$ [$0.18$, $0.32$] & $0.41$ [$0.29$, $0.59$] & $0.38$ [$0.24$, $0.56$] & $0.22$ [$0.19$, $0.33$] \\
    familiar ($n=36$) & non-change & $0.34$ [$0.22$, $0.48$] & $0.25$ [$0.18$, $0.37$] & $0.41$ [$0.25$, $0.51$] & $0.69$ [$0.50$, $0.85$] & $0.35$ [$0.28$, $0.53$] \\
    novel ($n=32$) & change & $0.16$ [$0.12$, $0.24$] & $0.83$ [$0.69$, $0.96$] & $1.08$ [$0.88$, $1.27$] & $0.66$ [$0.51$, $0.79$] & $0.37$ [$0.24$, $0.47$] \\
    novel ($n=36$) & non-change & $0.17$ [$0.11$, $0.26$] & $0.55$ [$0.45$, $0.73$] & $0.60$ [$0.48$, $0.75$] & $0.49$ [$0.39$, $0.73$] & $0.28$ [$0.22$, $0.41$] \\
    \bottomrule
  \end{tabular}}
\end{table}

\begin{table}[t]
  \centering\small
  \caption{Paired contrasts of the per-session estimates (nats): median difference, two-sided Wilcoxon signed-rank $p$-value, and fraction of positive differences, by window.}
  \label{tab:soi-mice-paired}
  \setlength{\tabcolsep}{3pt}
  \resizebox{\textwidth}{!}{%
  \begin{tabular}{@{}llccccc@{}}
    \toprule
    contrast & group & $0$--$50$ & $50$--$100$ & $100$--$150$ & $150$--$200$ & $200$--$250$ ms \\
    \midrule
    change $-$ non-change, within session & familiar ($n=31$) & $-0.01$ ($0.8$, $48\%$) & $+0.01$ ($0.8$, $52\%$) & $-0.00$ ($0.8$, $48\%$) & $-0.26$ ($8\cdot10^{-6}$, $10\%$) & $-0.10$ ($2\cdot10^{-6}$, $6\%$) \\
    change $-$ non-change, within session & novel ($n=32$) & $+0.01$ ($0.9$, $56\%$) & $+0.26$ ($3\cdot10^{-5}$, $75\%$) & $+0.42$ ($3\cdot10^{-9}$, $97\%$) & $+0.10$ ($3\cdot10^{-3}$, $81\%$) & $+0.06$ ($5\cdot10^{-3}$, $72\%$) \\
    novel $-$ familiar, within mouse & change ($n=27$) & $-0.16$ ($8\cdot10^{-4}$, $15\%$) & $+0.51$ ($3\cdot10^{-8}$, $96\%$) & $+0.56$ ($7\cdot10^{-8}$, $93\%$) & $+0.22$ ($7\cdot10^{-4}$, $81\%$) & $+0.11$ ($1\cdot10^{-3}$, $81\%$) \\
    novel $-$ familiar, within mouse & non-change ($n=36$) & $-0.13$ ($8\cdot10^{-4}$, $22\%$) & $+0.30$ ($2\cdot10^{-8}$, $86\%$) & $+0.21$ ($2\cdot10^{-5}$, $78\%$) & $-0.15$ ($0.01$, $31\%$) & $-0.07$ ($0.09$, $33\%$) \\
    \bottomrule
  \end{tabular}}
\end{table}

\begin{table}[t]
  \centering\small
  \caption{Variance shares of the per-session estimates (all $63$ sessions with both flash types, five context draws each) by window: sequential sums of squares of mouse identity, experience level, and flash type; the seed share is the variability of the estimator across context draws; the remainder holds interactions.}
  \label{tab:soi-mice-variance}
  \begin{tabular}{@{}lrrrrr@{}}
    \toprule
    window (ms) & mouse & experience & flash type & seed & remainder \\
    \midrule
    $0$--$50$    & $41\%$ & $10\%$ & $0\%$ & $20\%$ & $29\%$ \\
    $50$--$100$  & $21\%$ & $40\%$ & $3\%$ & $13\%$ & $24\%$ \\
    $100$--$150$ & $38\%$ & $23\%$ & $8\%$ & $10\%$ & $22\%$ \\
    $150$--$200$ & $54\%$ & $0\%$  & $1\%$ & $14\%$ & $31\%$ \\
    $200$--$250$ & $55\%$ & $0\%$  & $1\%$ & $14\%$ & $29\%$ \\
    \bottomrule
  \end{tabular}
\end{table}

\end{document}